\let\cite\citep
\newcommand\blfootnote[1]{%
  \begingroup
  \renewcommand\thefootnote{}\footnote{#1}%
  \addtocounter{footnote}{-1}%
  \endgroup
}
\providecommand{\comment}[1]{}
\begin{document}

\ifdefined\usejmlrstyle

\title{Variance-based Regularization with Convex Objectives}

\author{\name John Duchi \email jduchi@stanford.edu \\
       \addr Department of Statistics and Electrical Engineering\\
       Stanford University\\
       Stanford, CA 94305, USA
       \AND
       \name Hongseok Namkoong \email hnamk@stanford.edu \\
       \addr Department of Management Science and Engineering\\
       Stanford University\\
       Stanford, CA 94305, USA}

\editor{David Blei and Bernhard Sch{\"o}lkopf}

\maketitle

\else
\begin{center}
  {\LARGE Variance-based regularization with convex objectives} \\
  \vspace{.5cm}
  {\Large John C.\ Duchi$^1$ ~~~~ Hongseok Namkoong$^2$} \\
  \vspace{.2cm}
  {\large Stanford University} \\
  Departments of $^1$Statistics,
  $^1$Electrical Engineering, and $^2$Management Science and Engineering \\
  \vspace{.2cm}
  \texttt{\{jduchi,hnamk\}@stanford.edu}
\end{center}
\fi 

\begin{abstract}
  We develop an approach to risk minimization and stochastic optimization that
  provides a convex surrogate for variance, allowing near-optimal and
  computationally efficient trading between approximation and estimation
  error. Our approach builds off of techniques for distributionally robust
  optimization and Owen's empirical likelihood, and we provide a number of
  finite-sample and asymptotic results characterizing the theoretical
  performance of the estimator. In particular, we show that our procedure
  comes with certificates of optimality, achieving (in some scenarios)
  faster rates of convergence than empirical risk minimization
  by virtue of automatically balancing bias and variance. We
  give corroborating empirical evidence showing that in practice, the
  estimator indeed trades between variance and absolute performance on a
  training sample, improving out-of-sample (test) performance over standard
  empirical risk minimization for a number of classification problems.
\end{abstract}

\ifdefined\usejmlrstyle

\begin{keywords}
  Variance Regularization, Robust Optimization, Empirical Likelihood
\end{keywords}
\fi


\section{Introduction}

\blfootnote{Short (NIPS) version of the paper is available at
  \url{https://goo.gl/o6Y3nF}. }

We propose and study a new approach to risk minimization that automatically
trades between bias---or approximation error---and variance---or estimation
error. Let $\statdomain$ be a sample space, $P_0$ a distribution on
$\statdomain$, and $\Theta$ a parameter space. For a loss function
$\loss : \Theta \times \statdomain \to \R$, consider the problem of finding
$\theta \in \Theta$ minimizing the risk
\begin{equation}
  \label{eqn:risk}
  \risk(\theta) \defeq \E[\loss(\theta, \statrv)] = \int \loss(\theta,
  \statval) dP(\statval)
\end{equation}
given a sample $\{\statrv_1, \ldots, \statrv_n\}$ drawn i.i.d.\ according to
the distribution $P$.  Under appropriate conditions on the loss $\loss$,
parameter space $\Theta$, and random variables $\statrv$, a number of
researchers~\cite{BartlettBoMe05,BartlettJoMc06,BoucheronBoLu05,Koltchinskii06a}
have shown results of the form that with high probability,
\begin{equation}
  \label{eqn:bias-variance-bernstein}
  \risk(\theta)
  \le \frac{1}{n}
  \sum_{i = 1}^n \loss(\theta, \statrv_i)
  + C_1 \sqrt{\frac{\var(\loss(\theta, \statrv))}{n}}
  +  \frac{C_2}{n}
  ~~ \mbox{for~all~} \theta \in \Theta
\end{equation} 
where $C_1$ and $C_2$ depend on the parameters of
problem~\eqref{eqn:risk} and the desired confidence guarantee.  Such bounds
justify empirical risk minimization (ERM), which chooses $\what{\theta}_n$ to
minimize $\frac{1}{n} \sum_{i = 1}^n \loss(\theta, \statrv_i)$ over
$\theta \in \Theta$. Further, these bounds showcase a tradeoff between bias
and variance, where we identify the bias (or approximation error) with the
empirical risk $\frac{1}{n} \sum_{i = 1}^n \loss(\theta, \statrv_i)$, while
the variance arises from the second term in the bound.

Given bounds of the form above and heuristically considering the classical
``bias-variance'' tradeoff in estimation and statistical learning, it is
natural to instead choose $\theta$ to directly minimize a quantity trading
between approximation and estimation error, say of the form
\begin{equation}
  \label{eqn:empirical-plus-variance}
  \frac{1}{n}
  \sum_{i = 1}^n \loss(\theta, \statrv_i)
  + C \sqrt{\frac{\var_\emp(\loss(\theta, \statrv))}{n}},
\end{equation}
where $\var_\emp$ denotes the empirical variance of its argument.
\citet{MaurerPo09} considered precisely this idea, giving a number of
guarantees on the convergence and good performance of such a procedure.
Unfortunately, even when the loss $\loss$ is convex in $\theta$, the
formulation~\eqref{eqn:empirical-plus-variance} is in general non-convex,
yielding computationally intractable problems, which has limited the
applicability of procedures that minimize the variance-corrected empirical
risk~\eqref{eqn:empirical-plus-variance}.  In this paper, we develop an
approach based on Owen's empirical likelihood~\cite{Owen01} and ideas from
distributionally robust optimization~\cite{Ben-TalGhNe09, BertsimasGuKa14,
  Ben-TalHaKoMa15} that---whenever the loss $\loss$ is convex---provides a
tractable \emph{convex} formulation that very closely approximates the
penalized risk~\eqref{eqn:empirical-plus-variance}, and we give a number of
theoretical guarantees and empirical evidence for its performance.


Before summarizing our contributions, we first describe our approach.  Let
$\phi : \R_+ \to \R$ be a convex function with $\phi(1) = 0$. Then the
\emph{$\phi$-divergence} between distributions $P$ and $Q$ defined on a space
$\statdomain$ is
\begin{equation*}
  \phidiv{P}{Q} = \int \phi\left(\frac{dP}{dQ}\right) dQ
  = \int_\statdomain
  \phi\left(\frac{p(\statval)}{q(\statval)}\right) q(\statval) d\mu(\statval),
\end{equation*}
where $\mu$ is any measure for which $P, Q \ll \mu$, and
$p = \frac{dP}{d\mu}$, $q = \frac{dQ}{d\mu}$. Throughout this paper, we use
$\phi(t) = \half (t - 1)^2$, which gives the
$\chi^2$-divergence~\cite{Tsybakov09}. Given $\phi$ and a sample
$\statrv_1, \ldots, \statrv_n$, we define the \emph{local neighborhood of the
  empirical distribution with radius $\tol$} by
\begin{equation*}
  \mc{P}_n \defeq \left\{\mbox{distributions~}P ~ \mbox{such that}~
  \phidiv{P}{\emp} \le \frac{\tol}{n} \right\},
\end{equation*}
where $\emp$ denotes the empirical distribution of the sample,
and our choice of
$\phi(t) = \half (t - 1)^2$ means that $\mc{P}_n$ consists of discrete
distributions supported on the sample $\left\{X_i\right\}_{i=1}^n$. We then
define the \emph{robustly regularized risk}
\begin{equation}
  \label{eqn:robust-risk}
  \risk_n(\theta, \mc{P}_n)
  \defeq \sup_{P \in \mc{P}_n} \E_P[\loss(\theta, \statrv)]
  = \sup_{P} \left\{
    \E_P[\loss(\theta, \statrv)]  : \phidivs{P}{\emp} \le \frac{\tol}{n}\right\}.
\end{equation}
As it is the supremum of a family of convex functions, the robust risk $\theta
\mapsto \risk_n(\theta, \mc{P}_n)$ is convex in $\theta$ whenever $\loss$ is
convex, no matter the value of $\tol \ge 0$. Given the robust empirical
risk~\eqref{eqn:robust-risk}, our proposed estimation procedure is to choose a
parameter $\robsol$ by minimizing $\risk_n(\theta, \mc{P}_n)$.

Let us now discuss a few of the properties of procedures
minimizing the robust empirical risk~\eqref{eqn:robust-risk}. Our first main
technical result, which we show in Section~\ref{sec:variance-expansions}, is
that for bounded loss functions, the robust risk $\risk_n(\theta, \mc{P}_n)$
is a good approximation to the variance-regularized
quantity~\eqref{eqn:empirical-plus-variance}. That is,
\begin{equation}
  \label{eqn:variance-expansion}
  \risk_n(\theta, \mc{P}_n)
  = \E_\emp[\loss(\theta, \statrv)]
  + \sqrt{\frac{2 \tol \var_\emp(\loss(\theta, \statrv))}{n}}
  + \varepsilon_n(\theta),
\end{equation}
where $\varepsilon_n(\theta) \le 0$ and is $O_P(1/n)$ uniformly in $\theta$. We
show specifically that whenever $\loss(\theta, \statrv)$ has
suitably large variance, with high probability
we have $\varepsilon_n = 0$.
From variance expansions of the form~\eqref{eqn:variance-expansion} and
empirical Bernstein inequality~\eqref{eqn:bias-variance-bernstein}, we see
that $\risk_n(\theta, \mc{P}_n)$ is a $O(1/n)$-approximation to the population
risk $\risk(\theta)$, in contrast to the cruder $O(1/\sqrt{n})$-approximation
that the empirical risk $\E_{\emp}[\loss(\theta; X)]$ provides. Based on this
intuition that the robustly regularized risk $\risk_n(\theta; \mc{P}_n)$ is a
tighter approximation to the population risk $\risk(\theta)$, we show a number
of finite-sample convergence guarantees for the estimator
\begin{equation}
  \label{eqn:solve-robust}
  \robsol \in \argmin_{\theta \in \Theta}
  \left\{\sup_P 
  \left\{\E_P[\loss(\theta, \statrv)] 
  : \phidiv{P}{\emp} \le \frac{\tol}{n} \right\}
  \right\}
\end{equation}
that are often tighter than those available for ERM (see
Section~\ref{sec:optimal-selection}). The above problem is a \textit{convex}
optimization problem when the original loss $\loss(\cdot; \statrv)$ is convex
and $\Theta$ is a convex set.

Based on the expansion~\eqref{eqn:variance-expansion}, solutions
$\robsol$ of problem~\eqref{eqn:solve-robust} enjoy automatic
finite sample optimality certificates: for $\tol \ge 0$, with probability at
least $1 - C_1 \exp(-\tol)$ we have
\begin{equation*}
  \risk(\robsol) = E[\loss(\robsol; \statrv)]
  \le \risk_n(\robsol; \mc{P}_n)
  + \frac{C_2\tol}{n}
  = \inf_{\theta \in \Theta} \risk_n(\theta, \mc{P}_n)
  + \frac{C_2 \tol}{n}
\end{equation*}
where $C_1, C_2$ are constants (which we specify) that depend on the loss
$\loss$ and domain $\Theta$. That is, with high probability the robust
solution has risk no worse than the optimal finite sample robust objective up
to an $O(\tol / n)$ error term. To guarantee a desired level
of risk performance with probability $1 - \delta$, we may specify
the robustness penalty $\tol = O(\log \frac{1}{\delta})$.

Secondly, we show that the procedure~\eqref{eqn:solve-robust} allows us to
automatically and near-optimally trade between approximation and estimation
error (bias and variance), so that
\begin{align}
  \label{eqn:intro-fast-convergence}
  \risk(\robsol) = E[\loss(\robsol; \statrv)]
  & \le \inf_{\theta \in \Theta}
    \left\{\E[\loss(\theta; \statrv)]
    + 2 \sqrt{\frac{2 \tol}{n} \var(\loss(\theta; \statrv))}\right\}
    + \frac{C \tol}{n}
\end{align}
with high probability. When there are parameters $\theta$ with small risk
$\risk(\theta)$ and small variance $\var(\loss(\theta, \statrv))$, this
guarantees that the excess risk
$\risk(\robsol) - \inf_{\theta\in \Theta}\risk(\theta)$ is essentially of
order $O(\tol / n)$, where $\tol$ governs our desired confidence level. Our
bounds do not require the Bernstein-type condition
$\var(\loss(\theta; X)) \le \zbound \risk(\theta)$ often required for
ERM. Since it is often the case that $\zbound$ depends on global information
(e.g. size of parameter space $\Theta$), we have
$\var(\loss(\theta; X)) \ll \zbound \risk(\theta)$, in which case the
bound~\eqref{eqn:intro-fast-convergence} offers a tighter guarantee than that
available for the ERM solution $\ermsol$. In particular, we give an explicit
example in Section~\ref{section:beat-erm} where our robustly regularized
procedure~\eqref{eqn:solve-robust} converges at rate $O(\log n/n)$ compared to
$O(1/\sqrt{n})$ of empirical risk minimization.

Bounds that trade between risk and variance are known in a number
of cases in the empirical risk minimization literature~\cite{MammenTs99,
  Tsybakov04, BartlettBoMe05, BoucheronBoLu05, BartlettJoMc06,
  BoucheronLuMa13, Koltchinskii06a}, which is relevant when one wishes to
achieve ``fast rates'' of convergence for statistical learning algorithms
(that is, faster than the $O(1 / \sqrt{n})$ guaranteed by a number of uniform
convergence results~\cite{BartlettMe02, BoucheronBoLu05, BoucheronLuMa13}). In
many cases, however, such tradeoffs require either conditions such as the
Mammen and Tsybakov's noise condition~\cite{MammenTs99, BoucheronBoLu05} or
localization results made possible by curvature conditions that relate the
loss/risk and variance~\cite{BartlettJoMc06, BartlettBoMe05, Mendelson14}.
The robust solutions~\eqref{eqn:solve-robust} enjoy a different
tradeoff between variance and risk than that in this literature, but
essentially without conditions except compactness of $\Theta$.

In proposing any new estimator, it is essential to understand the limits of
the proposed procedure and identify situations in which its performance may
be worse than existing estimators. There are indeed situations in which
minimizing the robust-regularized risk~\eqref{eqn:robust-risk} yields some
inefficiency (for example, in classical statistical estimation problems with
correctly specified model).  To understand limits of the inefficiency
induced by using the distributionally-robustified
estimator~\eqref{eqn:solve-robust}, in Section~\ref{sec:nothing-is-bad} we
study explicit finite sample properties of the robust estimator for general
stochastic optimization problems, and we also provide asymptotic normality
results in classical problems. There are a number of situations, based on
growth conditions on the population risk $\risk$, when convergence rates
faster than $1 / \sqrt{n}$ (or even $1/n$) are attainable
(see~\citet[Chapter 5]{ShapiroDeRu09}). We show that under these conditions,
the robust procedure~\eqref{eqn:solve-robust} still enjoys (near-optimal)
fast rates of convergence, similar to empirical risk minimization (also
known as sample average approximation in the stochastic programming
literature).  Our study of asymptotics makes precise the asymptotic
efficiency loss of the robust procedure over minimizing the standard
(asymptotically optimal) empirical expectation: there is a bias term that
scales as $\sqrt{\tol / n}$ in the limiting distribution of $\robsol$,
though its variance is optimal.

We complement our theoretical results in Section~\ref{sec:experiments}, where
we conclude by providing three experiments comparing empirical risk
minimization strategies to robustly-regularized risk
minimization~\eqref{eqn:solve-robust}.  These results validate our theoretical
predictions, showing that the robust solutions are a practical alternative to
empirical risk minimization. In particular, we observe that the robust
solutions outperform their ERM counterparts on ``harder'' instances with
higher variance.  In classification problems, for example, the robustly
regularized estimators exhibit an interesting tradeoff, where they improve
performance on rare classes (where ERM usually sacrifices performance to
improve the common cases---increasing variance slightly) at minor cost in
performance on common classes.



\subsection*{Related Work}

The theoretical foundations of empirical risk minimization are
solid~\cite{Vapnik98, BartlettMe02, BoucheronBoLu05, BoucheronLuMa13}.  When
the expectation of the excess loss bounds its variance, it is possible to
achieve faster rates than the $O(1/\sqrt{n})$ offered by standard uniform
convergence arguments~\cite{VapnikCh71, VapnikCh74, BartlettJoMc06,
  Koltchinskii06a, BoucheronLuMa13} (see \citet[Section 5]{BoucheronBoLu05}
for an overview in the case of classification, and \citet[Chapter
  5.3]{ShapiroDeRu09} for more general stochastic optimization problems).
\citet{VapnikCh71,VapnikCh74} first provided such results in the context of
$\{0,1\}$-valued losses for classification (see also~\cite{AnthonySh93}),
where the expectation of the loss always upper bounds its variance, so that
if there exists a perfect classifier the convergence rates of empirical risk
minimization procedures are $O(1 / n)$.  Mammen and
Tsybakov~\cite{MammenTs99,Tsybakov04} give low noise conditions for binary
classification substantially generalizing these results, which yield a
spectrum of fast rates. Under related conditions, \citet*{BartlettJoMc06}
show similar fast rates of convergence for convex risk minimization under
appropriate curvature conditions on the loss.  The robust
procedure~\eqref{eqn:solve-robust}, on the other hand, is guaranteed to
provide an at most $O(1/n)$ over-estimate of the population risk and a small
increase of its variance regularized population counterpart.  It may be the
case that the variance-regularized risk $\inf_\theta \{\risk(\theta) +
\sqrt{\var(\loss(\theta, \statrv)) / n}\}$ decreases to $\risk(\theta\opt)$
more slowly than $1/n$. As we note above and detail in
Section~\ref{sec:nothing-is-bad}, however, in stochastic optimization
problems the variance-regularized approach~\eqref{eqn:solve-robust} suffers
limited degradation with respect to empirical risk minimization strategies,
even under convexity and curvature properties that allow faster rates of
convergence than those achievable in classical regimes, as detailed
by~\cite[Chapter 5.3]{ShapiroDeRu09}.

Most related to our work is that of \citet{MaurerPo09}, who propose directly
regularizing empirical risk minimization by variance, providing guarantees
similar to ours and giving a natural foundation off of which many of our
results build. In their setting, however---as they carefully note---it is
unclear how to actually solve the variance-regularized problem, as it is
generally non-convex.  \citet{ShivaswamyJe10,ShivaswamyJe11} build on this
and develop an elegant approach for boosting binary classifiers based on a
variance penalty applied to the exponential loss; as it is a boosting
approach, their approach provides a coordinate-wise strategy for decreasing
the loss, but it is not guaranteed to converge to a global minimizer and
applies to classification-like problems. Our approach, handling general
stochastic optimization problems, removes these obstructions.

The robust procedure~\eqref{eqn:solve-robust} is based on
distributionally robust optimization ideas that many researchers have
developed~\cite{Ben-TalHeWaMeRe13, BertsimasGuKa14, LamZh15}, where
the goal (as in robust optimization more broadly~\cite{Ben-TalGhNe09})
is to protect against all deviations from a nominal data model. In the
optimization literature, there is substantial work on tractability of
the problem~\eqref{eqn:solve-robust}, including that of
\citet{Ben-TalHeWaMeRe13}, who show that the dual
of~\eqref{eqn:robust-risk} often admits a standard form (such as a
second-order cone problem) to which standard polynomial-time interior
point methods can be applied. \citet{NamkoongDu16} develop
stochastic-gradient-like procedures for solving the
problem~\eqref{eqn:solve-robust}, which efficiently provide low
accuracy solutions (which are still sufficient for statistical tasks).
Work on the statistical analysis of such procedures is nascent;
\citet*{BertsimasGuKa14} and \citet{LamZh15} provide confidence
intervals for solution quality under various conditions, and
\citet{DuchiGlNa16} give asymptotics showing that the optimal robust
risk $\risk_n(\robsol; \mc{P}_n)$ is a calibrated upper
confidence bound for
$\inf_{\theta \in \Theta} \E[\loss(\theta; \statrv)]$. They and
\citet{GotohKiLi15} also provide a number of asymptotic results
showing relationships between the robust risk
$\risk_n(\theta; \mc{P}_n)$ and variance regularization, but they do
not leverage these results for guarantees on the solutions
$\robsol$.

\paragraph{Notation}

We collect our notation here. We let $\ball$ denote a unit norm ball
in $\R^d$, $\ball = \{\theta \in \R^d : \norm{\theta} \le 1\}$, where
$d$ and $\norm{\cdot}$ are generally clear from context. Given sets
$A \subset \R^d$ and $B \subset \R^d$, we let
$A + B = \{a + b : a \in A, b \in B\}$ denote Minkowski addition. For
a convex function $f$, the subgradient set $\partial f(x)$ of $f$ at
$x$ is
$\partial f(x) = \{g : f(y) \ge f(x) + g^\top (y - x) ~
\mbox{for~all~} y\}$. For a function $h : \R^d \to \R$, we let $h^*$
denote its Fenchel (convex) conjugate,
$h^*(y) = \sup_x \{y^\top x - h(x)\}$.  For sequences $a_n, b_n$, we
let $a_n \lesssim b_n$ denote that there is a numerical constant
$C < \infty$ such that $a_n \le C b_n$ for all $n$. For a sequence of
random vectors $X_1, X_2, \ldots$, we let $X_n \cd X_\infty$ denote
that $X_n$ converges in distribution to $X_\infty$. For a nonegative
sequence $a_1, a_2, \ldots$, we say $X_n = O_P(a_n)$ if
$\lim_{c \to \infty} \sup_n \P(\norm{X_n} \ge c a_n) = 0$, and we say
$X_n = o_P(a_n)$ if
$\lim_{c \to 0} \limsup_n \P(\norm{X_n} \ge c a_n) = 0$.


\section{Variance Expansion}
\label{sec:variance-expansions}


\begin{figure}[t] 
  \centering
  \includegraphics[width=.7\columnwidth]{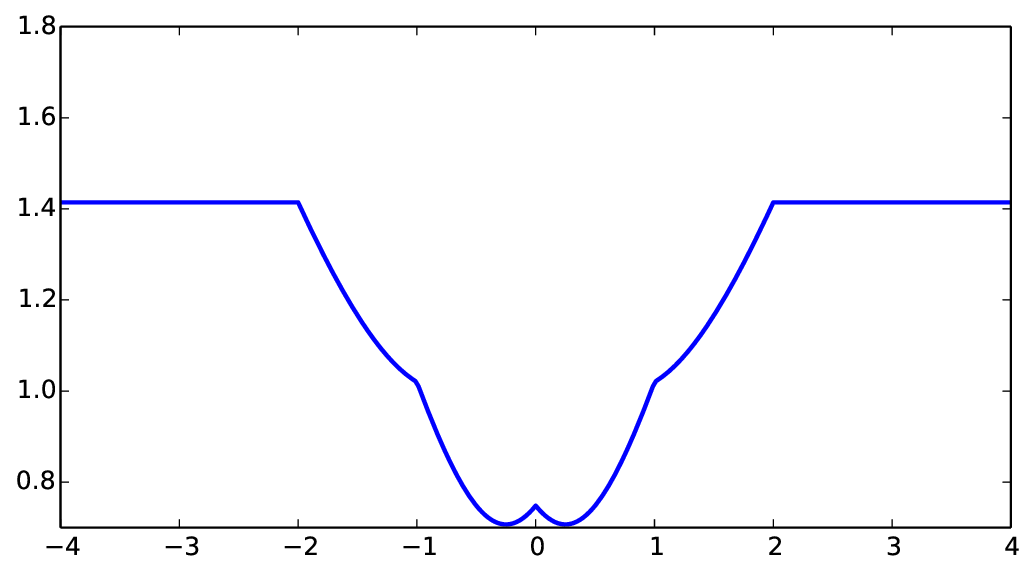} 
  \caption{Plot of $\theta \mapsto \sqrt{\var(\loss(\theta, \statrv))}$
    for
    $\loss(\theta; \statrv) = |\theta - \statrv|$ where
    $\statrv \sim \uniform(\{-2, -1, 0, 1, 2\})$. The function
    is non-convex, with multiple local minima, inflection points,
    and does not grow as $\theta \to \pm\infty$.}
  \label{fig:var-reg-nonconvex} 
  \vspace{0ex}
\end{figure}

We begin our study of the robust regularized empirical risk $\risk_n(\theta,
\mc{P}_n)$ by showing that it is a good approximation to the empirical risk
plus a variance term, that is, studying the variance
expansion~\eqref{eqn:variance-expansion}.  Although the variance of the loss
is in general non-convex (see Figure~\ref{fig:var-reg-nonconvex} for a simple
example), the robust formulation~\eqref{eqn:solve-robust} is a convex
optimization problem for variance regularization whenever the loss
function is convex (the supremum of convex functions is
convex~\cite[Prop. 2.1.2.]{HiriartUrrutyLe93ab}).

\subsection{Variance expansion for a single variable}

To gain intuition for the variance expansion that follows, we begin with
a slightly simpler problem, which is to study the quadratically
constrained linear maximization problem
\begin{equation}
  \label{eqn:simple-problem}
  \maximize_p~ \sum_{i = 1}^n p_i z_i
  ~~ \subjectto ~ p \in \mc{P}_n = \left\{p \in \R^n_+ :
  \half \ltwo{n p - \ones}^2 \le \tol, \<\ones, p\> = 1 \right\},
\end{equation}
where $z \in \R^n$ is a vector. For simplicity, let $s_n^2 = \frac{1}{n}
\ltwo{z}^2 - (\wb{z})^2 = \frac{1}{n} \ltwo{z - \wb{z}}^2$ denote the
empirical ``variance'' of the vector $z$, where $\wb{z} = \frac{1}{n} \<\ones,
z\>$ is the mean value of $z$. Then by introducing the variable $u = p -
\frac{1}{n} \ones$, the objective in problem~\eqref{eqn:simple-problem}
satisfies $\<p, z\> = \wb{z} + \<u, z\> = \wb{z} + \<u, z - \wb{z}\>$
because $\<u, \ones\> = 0$. Thus
problem~\eqref{eqn:simple-problem} is equivalent to solving
\begin{equation*}
  \maximize_{u \in \R^n}
  ~ \wb{z} + \<u, z - \wb{z}\>
  ~~ \subjectto ~
  \ltwo{u}^2 \le \frac{2 \tol}{n^2}, ~
  \<\ones, u\> = 0, ~ u \ge - \frac{1}{n}.
\end{equation*}
Notably, by the Cauchy-Schwarz inequality, we have $\<u, z - \wb{z}\> \le
\sqrt{2 \tol} \ltwo{z - \wb{z}} / n = \sqrt{2 \tol s_n^2 / n}$, and
equality is attained if and only if
\begin{equation*}
  u_i = \frac{\sqrt{2 \tol} (z_i - \wb{z})}{n \ltwo{z - \wb{z}}}
  = \frac{\sqrt{2 \tol} (z_i - \wb{z})}{n \sqrt{n s_n^2}}.
\end{equation*}
It is possible to choose such $u_i$ while
satisfying the constraint $u_i \ge -1/n$ if and only if
\begin{equation}
  \label{eqn:when-choosing-u-is-possible}
  \min_{i \in [n]}
  \frac{\sqrt{2 \tol} (z_i - \wb{z})}{\sqrt{n s_n^2}}
  \ge -1.
\end{equation}
Thus, if inequality~\eqref{eqn:when-choosing-u-is-possible} holds for the
vector $z$---that is, there is enough variance in $z$---we have
\begin{equation*}
  \sup_{p \in \mc{P}_n} \<p, z\>
  = \wb{z} + \sqrt{\frac{2 \tol s_n^2}{n}}.
\end{equation*}
For losses $\loss(\theta, \statrv)$ with enough variance relative to
$\loss(\theta, \statrv_i) - \E_\emp[\loss(\theta, \statrv_i)]$, that is, those
satisfying inequality~\eqref{eqn:when-choosing-u-is-possible}, then, we have
\begin{equation*}
  \risk_n(\theta, \mc{P}_n) = \E_\emp[\loss(\theta, \statrv)]
  + \sqrt{\frac{2 \tol \var_\emp(\loss(\theta, \statrv))}{n}}.
\end{equation*}
A slight elaboration of this argument, coupled with the application of a few
concentration inequalities, yields the next theorem.  The theorem as stated
applies only to bounded random variables, but in subsequent sections we relax
this assumption by applying the
characterization~\eqref{eqn:when-choosing-u-is-possible} of the exact
expansion.  As usual, we assume that $\phi(t) = \half(t - 1)^2$ in our
definition of the $\phi$-divergence.
\begin{theorem}
  \label{theorem:variance-expansion}
  Let $Z$ be a random variable taking values in $[\zbound_0, \zbound_1]$, and
  let $\zbound = \zbound_1 - \zbound_0$.  Let $\sigma^2 = \var(Z)$ and $s_n^2
  = \E_\emp[Z^2] - \E_\emp[Z]^2$ denote the population and sample variance of
  $Z$, respectively. Fix $\tol \ge 0$. Then
  \begin{equation}
    \label{eqn:sure-variance-bounds}
    \hinge{\sqrt{\frac{2 \tol}{n} s_n^2}
      - \frac{2 \zbound \tol}{n}}
    \le 
    \sup_{P} \left\{ \E_P[Z]
     : \phidivs{P}{\emp} \le \frac{\tol}{n} \right\}
    - \E_\emp[Z]
    \le 
    \sqrt{\frac{2 \tol}{n} s_n^2}.
  \end{equation}
  Moreover, for
  $n \ge \max \left\{5, \frac{\zbound^2}{\sigma^2} \max \left\{8 \sigma, 44
    \right\} \right\}$, with probability at least
  $1 - \exp\left(-\frac{n\sigma^2}{11 \zbound^2}\right)$
  \begin{equation}
    \label{eqn:exact-variance-expansion}
    \sup_{P : \phidivs{P}{\emp} \le \frac{\tol}{n}} \E_P[Z]
    = \E_\emp[Z] + \sqrt{\frac{2 \tol}{n} s_n^2}.
  \end{equation}
\end{theorem}
\noindent See Section~\ref{section:proof-of-variance-expansion} for
the proof of Theorem~\ref{theorem:variance-expansion}.

Inequality~\eqref{eqn:sure-variance-bounds} and the exact
expansion~\eqref{eqn:exact-variance-expansion} show that, at least for bounded
loss functions $\loss$, the robustly regularized risk~\eqref{eqn:robust-risk}
is a natural (and convex) surrogate for empirical risk plus standard deviation
of the loss, and the robust formulation approximates exact variance
regularization with a convex penalty. In the sequel, we leverage this
result to provide sharp guarantees for a number of stochastic risk
minimization problems.

\subsection{Uniform variance expansions}

We now turn to a more uniform variant
Theorem~\ref{theorem:variance-expansion}, which depends on familiar notions
of function complexity based on Rademacher averages. For a sample $x_1,
\ldots, x_n$ and i.i.d.\ random signs $\varepsilon_i \in \{-1,1\}$,
independent of the $x_i$, the empirical Rademacher complexity of the class
$\mc{F}$ is
\begin{equation*}
  \radcomp_n(\mc{F}) \defeq \E\left[\sup_{f \in \mc{F}}
    \frac{1}{n} \sum_{i = 1}^n \varepsilon_i f(x_i) \right].
\end{equation*}
The \emph{worst-case} Rademacher
complexity~\cite{SrebroSrTe10} is
\begin{equation*}
  \worstrad(\mc{F}) \defeq \sup_{x_1, \ldots, x_n \in \mc{X}}
  \E\left[\sup_{f \in \mc{F}}
    \left|\frac{1}{n} \sum_{i = 1}^n
    \varepsilon_i f(x_i)\right| \right].
\end{equation*}
For example, when $\fclass$ is a class of functions bounded by $\zbound$
with VC-subgraph dimension $d$,
we have the inequalities $\E[\radcomp_n(\mc{F})] \le \worstrad(\mc{F})
\lesssim \zbound \sqrt{\frac{d}{n}}$. See~\citet[Chapter
  2]{VanDerVaartWe96} and \citet{BartlettMe02} for other bounds.

With this definition, we provide a result showing that the variance
expansion~\eqref{eqn:variance-expansion} holds uniformly for all functions
with \emph{enough} variance.
\begin{theorem}
  \label{theorem:uniform-variance-expansion}
  Let $\fclass$ be a collection of bounded functions $f : \statdomain \to
  [\zbound_0, \zbound_1]$ where $\zbound = \zbound_1 - \zbound_0$, and
  $\zbound \le n$. There exists a universal constant $C$ such that if
  $\stdevthresh^2 > 0$ satisfies
  \begin{equation*}
    \stdevthresh^2 \ge \frac{4\tol\zbound^2}{n}
    + C \left[ \worstrad (\mc{F})^2 \log^3 n
      + \frac{\zbound^2}{n} (t + \log \log n)
      \right].
  \end{equation*}
  Then with probability at least $1-3e^{-t}$
  \begin{equation}
    \label{eqn:uniform-exact-variance-expansion}
    \sup_{P : \phidivs{P}{\emp} \le \frac{\tol}{n}}
    \E_P[f(\statrv)]
    = \E_\emp[f(\statrv)]
    + \sqrt{\frac{2 \tol}{n} \var_\emp(f(\statrv))}
  \end{equation}
  for all $f \in \mc{F}$ such that $\var(f) \ge \stdevthresh^2$.
\end{theorem}
\noindent We prove the theorem in
Section~\ref{section:proof-of-uniform-variance-expansion}.
Theorem~\ref{theorem:uniform-variance-expansion} shows that the variance
expansion of Theorem~\ref{theorem:variance-expansion} holds uniformly for
all functions $f$ with sufficient variance. An asymptotic analogue of the
equality~\eqref{eqn:uniform-exact-variance-expansion} for heavier tailed
random variables is also possible~\cite{DuchiGlNa16}.  In the remainder of
the section, we consider examples and applications to make the theorem
somewhat clearer.

\subsubsection{Linear and margin-based losses}

Consider a standard margin-based classification problem~\cite{BartlettMe02},
where we have data pairs $(x, y) \in \mc{X} \times \{-1, 1\}$, and $\mc{X}
\subset \R^d$. Let $\Theta \subset \R^d$ be a norm ball of radius
$\radius(\Theta)$, $\Theta = \{\theta \in \R^d \mid \norm{\theta} \le
\radius\}$, and let $\dnorm{\cdot}$ be the associated dual norm, assuming
also that $\mc{X} \subset \{x \in \R^d \mid \dnorm{x} \le
\radius(\mc{X})\}$. We may then consider the standard loss minimization
setting, where for some non-increasing and $1$-Lipschitz loss $\loss : \R
\to \R_+$, we have the risk
\begin{equation*}
  \risk(\theta) \defeq \E\left[\loss(Y \<\theta, X\>)\right],
\end{equation*}
so that $\loss(y \<x, \theta\>)$ is the loss suffered by making prediction
$\<\theta, x\>$ when the label is $y$. By taking the function class $\fclass
= \{(x, y) \mapsto \loss(y \<x, \theta\>) - \loss(0) \mid \theta \in
\Theta\}$, in this case, an application of the Ledoux-Talagrand contraction
inequality~\cite{LedouxTa91} implies for any $y_1, x_1, \ldots, y_n, x_n$
that
\begin{align}
  \E\left[\sup_{\theta \in \Theta}
    \left|\sum_{i = 1}^n \varepsilon_i 
    \left[\loss(y_i \<\theta, x_i\>) - \loss(0)\right]\right|\right]
  & \le
  \E\left[\sup_{\theta \in \Theta}
    \left|\sum_{i = 1}^n \varepsilon_i \<\theta, x_i\>\right|\right]
  \le \radius(\Theta)
  \E\left[\dnormbigg{\sum_{i = 1}^n \varepsilon_i x_i}\right].
  \label{eqn:simple-rademacher-bound}
\end{align}

\begin{example}[Euclidean norms]
  \label{example:euclidean-margins}
  In the above context, suppose that norm $\norm{\cdot}$ is the standard
  $\ell_2$ Euclidean norm so that $\Theta$ is contained in an $\ell_2$-ball of
  radius $\radius(\Theta)$, and $\mc{X} \subset \R^d$ in an $\ell_2$ ball of
  radius $\radius(\mc{X})$. Then Jensen's inequality and independence of
  $\varepsilon_i$'s give the bound
  \begin{equation*}
    \E[\norms{\sum_{i = 1}^n \varepsilon_i x_i}]
    \le \sqrt{\E \sum_{j=1}^d \left( \sum_{i = 1}^n \varepsilon_i x_{ij} \right)^2}
    \le \radius(\mc{X})
    \sqrt{n}.
  \end{equation*}
  Then, inequality~\eqref{eqn:simple-rademacher-bound} and
  Theorem~\ref{theorem:variance-expansion} imply that
  \begin{equation*}
    \sup_{P : \phidivs{P}{\emp} \le \frac{\tol}{n}}
    \E_P[\loss(Y \<\theta, X\>)]
    = \E_\emp[\loss(Y \<\theta, X\>)]
    + \sqrt{\frac{2 \tol}{n}
      \var_\emp(\loss(Y \<\theta, X\>))}
  \end{equation*}
  for all $\theta$ satisfying
  \begin{equation*}
    \var(\loss(Y\<\theta, X\>)) \ge \frac{\radius(\mc{X})^2 \radius(\Theta)^2}{n}
    \left[4 \tol + C \log^3 n + C t\right],
  \end{equation*}
  with probability at least $1 - e^{-t}$.
\end{example}

\begin{example}[High-dimensional problems]
  In high dimensional problems, the Euclidean scaling of
  Example~\ref{example:euclidean-margins} may be problematic, so that using
  $\ell_1$-constraints is preferred~\cite{BuhlmannGe11}.  Thus, taking the
  norm $\norm{\cdot}$ in the preceding to be the $\ell_1$ norm, so that
  $\Theta \subset \{\theta \in \R^d \mid \lone{\theta} \le \radius_1(\Theta)\}$
  and $\dnorm{\cdot} = \linf{\cdot}$, then $\E[\linfs{\sum_{i = 1}^n
      \varepsilon_i x_i}] \le \radius(\mc{X}) \sqrt{n \log(2d)}$, where
  $\radius_\infty(\mc{X})$ denotes the $\ell_\infty$-radius of $\mc{X}
  \subset \R^d$.  Thus, if we take the loss class $\mc{F} =
  \{\loss(\<\theta, \cdot\>) - \loss(0) \mid \theta \in \Theta\}$, we obtain
  \begin{equation*}
    \worstrad(\mc{F})
    \lesssim \sup_{x_1, \ldots, x_n \in \mc{X}} \frac{\radius_1(\Theta)}{n}
    \E\left[\linfbigg{\sum_{i = 1}^n
        \varepsilon_i x_i}\right]
    \le \radius_1(\Theta) \radius_\infty(\mc{X})
    \sqrt{\frac{\log (2d)}{n}}.
  \end{equation*}
  Then the exact variance
  expansion~\eqref{eqn:uniform-exact-variance-expansion} holds with
  probability at least $1 - e^{-t}$ uniformly over $\theta$ satisfying
  $\var(\loss(Y \<\theta, X\>)) \ge
  \frac{\radius_1(\Theta)^2 \radius_\infty(\mc{X})^2}{n}
  [4 \tol + C \log d \cdot \log^3 n + Ct]$.
\end{example}

\subsubsection{Covering number guarantees}
\label{sec:covering-numbers-from-rad}

It is also possible to provide guarantees on the exact variance expansion
using standard covering numbers, though careful arguments based on
Rademacher complexity can be tighter.  We begin by recalling the appropriate
notions from approximation theory. Let $\mc{V}$ be a vector space and $V
\subset \mc{V}$ be any collection of vectors in $\mc{V}$. Let $\norm{\cdot}$
be a (semi)norm on $\mc{V}$. We say a collection $v_1, \ldots, v_\covnum
\subset \mc{V}$ is an \emph{$\epsilon$-cover} of $\mc{V}$ if for each $v \in
\mc{V}$, there exists $v_i$ such that $\norm{v - v_i} \le \epsilon$. The
\emph{covering number} of $V$ with respect to $\norm{\cdot}$ is then
\begin{equation*}
  \covnum(V, \epsilon, \norm{\cdot}) \defeq
  \inf\left\{\covnum \in \N : ~ \mbox{there~is~an~}
  \epsilon \mbox{-cover~of~} V
  ~ \mbox{with~respect~to~} \norm{\cdot} \right\}.
\end{equation*}
Now, let $\fclass$ be a collection of functions $f : \statdomain \to \R$,
and define the $L^\infty(\statdomain)$ norm on $f$ by
\begin{equation*}
  \linfstatnorm{f - g}
  \defeq \sup_{\statval \in \statdomain}
  |f(\statval) - g(\statval)|.
\end{equation*}
We also relax our covering number requirements to empirical
$\ell_\infty$-covering numbers as follows. Define $\fclass(\statval) =
\{(f(\statval_1), \ldots, f(\statval_n)) : f \in \fclass\}$ for $\statval
\in \statdomain^n$, and define the empirical $\ell_\infty$-covering numbers
\begin{equation*}
  \covnum_{\infty} (\fclass, \epsilon, n)
  = \sup_{\statval \in \statdomain^n}
  \covnum\left(\fclass(\statval), \epsilon, \linf{\cdot}\right),
\end{equation*}
which bound the number of $\ell_\infty$-balls of radius $\epsilon$ required
to cover $\fclass(\statval)$. Note that we always have
$\covnum_\infty(\fclass, \epsilon, n) \le \covnum(\fclass, \epsilon,
\linfstatnorm{\cdot})$ by definition.  The classical Dudley entropy
integral~\cite{Dudley99,VanDerVaartWe96} shows that, if $P_n$ denotes the
point masses on $x_1, \ldots, x_n$ and $\ltwopn{\cdot}$ the empirical
$L^2$-norm on functions $f : \mc{X} \to [-\zbound, \zbound]$, then
\begin{align}
  \E\left[\frac{1}{n} \sup_{f \in \fclass}
    \left|\sum_{i = 1}^n \varepsilon_i f(x_i)\right|\right]
  & \lesssim \inf_{\delta \ge 0}
  \left\{\delta + \frac{1}{\sqrt{n}}
  \int_\delta^\zbound \sqrt{\log N(\mc{F}, \epsilon, \ltwopn{\cdot})} d\epsilon
  \right\} \nonumber \\
  & \le \inf_{\delta \ge 0}
  \left\{\delta + \frac{1}{\sqrt{n}}
  \int_\delta^\zbound \sqrt{\log N_\infty(\mc{F}, \epsilon, n)} d\epsilon
  \right\}.
  \label{eqn:entropy-integral}
\end{align}

Our main (essentially standard~\cite{VanDerVaartWe96}) motivating example is
that of Lipschitz loss functions for a parametric set $\Theta$, as follows.

\begin{example}
  \label{example:lipschitz-covering}
  Let $\Theta \subset \R^d$ and assume that $\loss : \Theta \times
  \statdomain \to [0, \zbound]$ is $L$-Lipschitz in $\theta$ with respect to
  the $\ell_2$-norm for all $\statval \in \statdomain$, meaning that
  $|\loss(\theta, \statval) - \loss(\theta', \statval)| \le L \ltwo{\theta -
    \theta'}$.  Then taking $\fclass = \{\loss(\theta, \cdot) : \theta \in
  \Theta\}$, any $\epsilon$-covering $\{\theta_1, \ldots, \theta_\covnum\}$
  of $\Theta$ in $\ell_2$-norm guarantees that $\min_i |\loss(\theta,
  \statval) - \loss(\theta_i, \statval)| \le L \epsilon$ for all $\theta,
  \statval$. That is,
  \begin{equation*}
    \covnum(\fclass, \epsilon, \linfstatnorm{\cdot})
    \le \covnum(\Theta, \epsilon / L, \ltwo{\cdot})
    \le \left(1 + \frac{\diam(\Theta) L}{\epsilon}\right)^d,
  \end{equation*}
  where $\diam(\Theta) = \sup_{\theta, \theta' \in \Theta} \ltwo{\theta -
    \theta'}$. Thus $\ell_2$-covering numbers of $\Theta$
  control $L^\infty$-covering numbers of the family $\mc{F}$,
  and
  we have by the entropy integral~\eqref{eqn:entropy-integral} that
  \begin{equation*}
    \worstrad(\mc{F})
    \lesssim
    \sqrt{\frac{d}{n}}
    \int_0^{\diam(\Theta) L} \sqrt{\log \frac{\diam(\Theta) L}{\epsilon}}
    d\epsilon
    \lesssim \diam(\Theta) L \sqrt{\frac{d}{n}}.
  \end{equation*}
  That is, with high probability, for all $\theta$ such that
  $\var(\loss(\theta, \statrv)) \ge \frac{4\zbound^2 \tol}{n} + \frac{Cd
    \diam(\Theta)^2 L^2 \log^3 n}{n}$, we have the exact variance
  expansion~\eqref{eqn:uniform-exact-variance-expansion}.
\end{example}

\section{Optimization by Minimizing the Robust Loss}
\label{sec:optimal-selection}

Based on the precise variance expansions in the preceding section, it is
natural to expect that the robust solution~\eqref{eqn:solve-robust}
automatically trades between approximation and estimation error.  This
intuition is accurate, and we show that the robustly regularized objective
$\risk_n(\theta; \mc{P}_n)$ overestimates the population risk
$\risk(\theta)$ by at most $O(1/n)$. By virtue of optimizing this tighter
approximation---as opposed to the usual $O(1/\sqrt{n})$-approximation given
by the empirical risk $\E_{\emp}[\loss(\theta; X)]$---the robustly
regularized solution~\eqref{eqn:solve-robust} enjoys a number of favorable
finite-sample properties, which are not always comparable to those for
empirical risk minimization (ERM). 

In Section~\ref{section:covering}, we present two versions of our main
result that depend on covering numbers and discuss their consequences,
and we provide an example where the robustly regularized solution $\robsol$
achieves a tighter excess risk bound compared to those that a
straightforward application of localized Rademacher
complexities~\cite{BartlettBoMe05} show that the ERM solution $\ermsol$
achieves. As evidenced by the substantial work on Rademacher- and
Gaussian-complexity and symmetrization, in some instances
covering-number-based arguments do not provide the sharpest
scaling~\cite{BartlettMe02,BartlettBoMe05,SrebroSrTe10}; thus, in
Section~\ref{section:local-rademacher} we present a version of our main
result that depends on localized Rademacher complexities, which can allow
more refined uniform concentration bounds than covering numbers.  We also
provide a concrete (but admittedly somewhat contrived) example where our
robustly regularized procedure~\eqref{eqn:solve-robust} achieves
$\risk(\robsol) - \inf_{\theta\in \Theta}\risk(\theta) \lesssim \frac{\log
  n}{n}$, while empirical risk minimization suffers $ \risk(\ermsol) -
\inf_{\theta\in \Theta} \risk(\theta) \gtrsim \frac{1}{\sqrt{n}}$, in
Section~\ref{section:beat-erm}. The robust ``regularizer'' has invariance
properties other regularization procedures do not, and we mention these
briefly in Section~\ref{section:invariance}.

\subsection{Covering arguments}
\label{section:covering}

Our first guarantee depends on the covering numbers of the function class
$\fclass$ as we describe in Section~\ref{sec:covering-numbers-from-rad}.
While we state our results abstractly, in the loss minimization setting we
typically consider the function class $\fclass \defeq \left\{ \loss(\theta,
\cdot): \theta \in \Theta \right\}$ parameterized by $\theta$.  We have the
following theorem, where as usual, we let $\fclass$ be a collection of
functions $f : \statdomain \to [\zbound_0, \zbound_1]$ with $\zbound =
\zbound_1 - \zbound_0$.
\begin{theorem}
  \label{theorem:selection-by-robustness}
  Let $n \ge 8M^2/t$, $t \ge \log 12$, $\epsilon > 0$, and
  $\tol \ge 9t$. Then with probability at
  least $1 - 2 (3\covnum_{\infty}\left(\fclass, \epsilon, 2n\right)+1)
  e^{-t}$,
  \begin{equation}
    \label{eqn:coverage-F}
    \E[f(\statrv)] \le 
    \sup_{P : \phidivs{P}{\emp} \le \frac{\tol}{n}}
    \E_P[f(\statrv)]
    + \frac{11}{3} \frac{\zbound \tol}{n}
    + \left(2 + 4\sqrt{\frac{2t}{n}}\right) \epsilon
  \end{equation}
  for all $f \in \mc{F}$.
  Defining the empirical minimizer
  \begin{equation*}
    \what{f} \in \argmin_{f \in \fclass}
    \left\{\sup_P \left\{ \E_P[f(\statrv)]
    : \phidivs{P}{\emp} \le \frac{\tol}{n} \right\} \right\}
  \end{equation*}
  we have with the same probability that
  \begin{equation}
    \E[\what{f}(\statrv)]
    \le \inf_{f \in \fclass}
    \left\{\E[f]
    + 2 \sqrt{\frac{2 \tol}{n} \var(f)}\right\}
    + \frac{19 \zbound \tol}{3 n}
    + \left(2 + 4\sqrt{\frac{2t}{n}}\right) \epsilon.
    \label{eqn:fast-convergence-F}
  \end{equation}
\end{theorem}
\noindent
See Section~\ref{sec:proof-selection-by-robustness} for a proof
of the theorem. Because uniform $L^\infty$-covering numbers upper bound
empirical $L^\infty$-covering numbers, it is immediate that
covering $\mc{F}$ in $\linfstatnorm{\cdot}$ provides an identical result.


\subsubsection{Covering bounds: corollaries}

We turn to a number of corollaries that expand on
Theorem~\ref{theorem:selection-by-robustness} to investigate its
consequences. Our first corollary shows that
Theorem~\ref{theorem:selection-by-robustness} applies to standard
Vapnik-Chervonenkis (VC) classes. As VC dimension is preserved through
composition, this result also extends to the
procedure~\eqref{eqn:solve-robust} in typical empirical risk minimization
scenarios. 
\begin{corollary}
  \label{corollary:vc-selection}
  In addition to the conditions of
  Theorem~\ref{theorem:selection-by-robustness}, let $\fclass$ have finite
  VC-dimension $\vcdim(\fclass)$. Then for a numerical constant $c <
  \infty$, the bounds~\eqref{eqn:coverage-F}
  and~\eqref{eqn:fast-convergence-F}
  hold with probability at least
  \begin{equation*}
    1 - \left(c \, \vcdim(\fclass) \left( \frac{16 \zbound ne}{\epsilon}
      \right)^{\vcdim(\fclass)-1} + 2\right) e^{-t}.
  \end{equation*}
\end{corollary}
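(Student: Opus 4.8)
The plan is to treat Corollary~\ref{corollary:vc-selection} as a direct specialization of Theorem~\ref{theorem:selection-by-robustness}: the only problem-specific quantity entering the probability bound in~\eqref{eqn:coverage-and-fast-F} is the empirical covering number $\covnum_\infty(\fclass, \epsilon, 2n) = \sup_{\statval \in \statdomain^{2n}} \covnum(\fclass(\statval), \epsilon, \linf{\cdot})$, so it suffices to bound this quantity by a polynomial in $n$ and $1/\epsilon$ whose degree is governed by $\vcdim(\fclass)$, and then substitute. Here I interpret $\vcdim(\fclass)$ as the VC-subgraph dimension of $\fclass$ (equivalently, $\fclass$ has polynomial discrimination of degree $\vcdim(\fclass)$), which is the notion preserved under the compositions arising in loss minimization.

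First I would pass from the $\linf{\cdot}$ metric on $\fclass(\statval)$ to an empirical $L^1$ metric. For any $\statval \in \statdomain^{2n}$ and $v \in \R^{2n}$ we have $\linf{v} \le \|v\|_1 = 2n \lonestatnorm{v}$, where $Q$ denotes the uniform distribution on the $2n$ coordinates of $\statval$; hence any $\tfrac{\epsilon}{2n}$-cover of $\fclass$ in $\lonestatnorm{\cdot}$ is an $\epsilon$-cover of $\fclass(\statval)$ in $\linf{\cdot}$. Consequently
\[
  \covnum_\infty(\fclass, \epsilon, 2n)
  \le \sup_{Q} \covnum\!\left(\fclass, \tfrac{\epsilon}{2n},
  \lonestatnorm{\cdot}\right),
\]
the supremum being over all probability measures $Q$ on $\statdomain$ (which contains the measures $Q$ realized by the empirical point configurations).

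Next I would invoke the standard covering bound for VC-subgraph classes with a bounded envelope (Haussler's bound; see \citet[Theorem~2.6.7]{VanDerVaartWe96}). After shifting $\fclass$ by $\zbound_0$ so that it is $[0,\zbound]$-valued with envelope $\zbound$---which does not change covering numbers---this bound reads, for numerical constants $c, c'$, every probability measure $Q$, and every $\delta > 0$,
\[
  \covnum\!\left(\fclass, \delta, \lonestatnorm{\cdot}\right)
  \le c \, \vcdim(\fclass)
  \left(\frac{c' \zbound}{\delta}\right)^{\vcdim(\fclass) - 1}.
\]
Taking $\delta = \epsilon / (2n)$ and combining with the previous display gives
\[
  \covnum_\infty(\fclass, \epsilon, 2n)
  \le c \, \vcdim(\fclass)
  \left(\frac{16 \zbound n e}{\epsilon}\right)^{\vcdim(\fclass) - 1}
\]
after adjusting $c$ to absorb $c'$ and the factor $2$ into the reported constant $16e$. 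Substituting this into the probability $1 - 2(3 \covnum_\infty(\fclass, \epsilon, 2n) + 1) e^{-t}$ from Theorem~\ref{theorem:selection-by-robustness}, and absorbing the factor $6$ into $c$ while retaining the additive $+2$, yields exactly the claimed probability $1 - (c \, \vcdim(\fclass) (16 \zbound n e / \epsilon)^{\vcdim(\fclass) - 1} + 2) e^{-t}$.

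There is essentially no new probabilistic content beyond Theorem~\ref{theorem:selection-by-robustness}; the work is purely in tracking the numerical constants and the linear-in-$n$ factor through the $\linf{\cdot}$-to-$\lonestatnorm{\cdot}$ passage and in selecting a clean form of the VC covering bound. The only genuinely delicate point is the interpretation of $\vcdim(\fclass)$ for a class of real-valued (rather than $\{0,1\}$-valued) functions and the corresponding need to apply Haussler's bound with the envelope range $\zbound$ rather than for a $\{0,1\}$-valued class; this is what produces both the exponent $\vcdim(\fclass) - 1$ and the $\zbound$-dependence in the final bound.
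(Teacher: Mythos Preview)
Your proposal is correct and follows essentially the same route as the paper: pass from the empirical $\linf{\cdot}$ metric to an $L^1(Q)$ metric via $\linf{v}\le \lone{v}=2n\lonestatnorm{v}$ with $Q$ uniform on the $2n$ points, then apply the VC-subgraph covering bound of \citet[Theorem~2.6.7]{VanDerVaartWe96} at scale $\epsilon/(2n)$ with envelope $\zbound$, which produces exactly the $(16\zbound n e/\epsilon)^{\vcdim(\fclass)-1}$ factor after combining $8e\cdot 2n$. Your remarks about interpreting $\vcdim(\fclass)$ as the subgraph dimension and the need for the envelope $\zbound$ are accurate clarifications but do not differ from what the paper implicitly uses.
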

\begin{proof}
  Let $\lonestatnorm{f} \defeq \int |f(\statval)| dQ(\statval)$ denote the
  $L^1$-norm on $\fclass$ for the probability distribution $Q$.
  Then by Theorem 2.6.7 of
  \citet{VanDerVaartWe96}, we have
  \begin{equation*}
    \sup_Q \covnum(\fclass, \epsilon, \lonestatnorm{\cdot})
    \le c \vcdim(\fclass) \left(\frac{8 \zbound e}{\epsilon}\right)^{\vcdim(\fclass) - 1}
  \end{equation*}
  for a numerical constant $c$.
  Because $\linf{x} \le \lone{x}$, taking $Q$ to be uniform on
  $\statval \in \statdomain^{2n}$ yields
  $\covnum(\fclass(x), \epsilon, \linf{\cdot}) \le \covnum(\fclass,
  \frac{\epsilon}{2n}, \lonestatnorm{\cdot})$.
  The result is immediate.
\end{proof}

Next, we focus more explicitly on the estimator $\robsol$ defined by
minimizing the robust regularized risk~\eqref{eqn:solve-robust}.  Let us
assume that $\Theta \subset \R^d$, and that we have a typical linear modeling
situation, where a loss $h$ is applied to an inner product, that is,
$\loss(\theta, \statval) = h(\theta^\top \statval)$. In this case, by making
the substitution that the class
$\fclass = \{\loss(\theta, \cdot) : \theta \in \Theta\}$ in
Corollary~\ref{corollary:vc-selection}, we have $\vcdim(\fclass) \le d$, and
we obtain the following corollary. In the corollary, recall the
definition~\eqref{eqn:risk} of the population risk
$\risk(\theta) = \E[\loss(\theta, \statrv)]$, and the uncertainty set
$\mc{P}_n = \{P : \phidivs{P}{\emp} \le \frac{\tol}{n}\}$, and that
$\risk_n(\theta, \mc{P}_n) = \sup_{P \in \mc{P}_n} \E_P[\loss(\theta,
\statrv)]$.  By setting $\epsilon = \zbound / n$ in
Corollary~\ref{corollary:vc-selection}, we obtain the following result.
\begin{corollary}
  \label{corollary:theta-minimization}
  Let the conditions of the previous paragraph hold and let
  $\robsol \in \argmin_{\theta \in \Theta} \risk_n(\theta,
  \mc{P}_n)$.
  Assume also that $\loss(\theta, \statval) \in [0, \zbound]$ for all
  $\theta \in \Theta, \statval \in \statdomain$. Then if $n \ge \tol \ge 9 \log 12$,
  \begin{align*}
    \risk(\robsol)
    \le \risk_n(\robsol, \mc{P}_n)
    +  \frac{11 \zbound \tol}{3 n} 
    + \frac{2 \zbound}{n} \left(1 + \sqrt{\frac{\tol}{n}}\right) 
    & \le \inf_{\theta \in \Theta}
    \left\{\risk(\theta)
    + 2 \sqrt{\frac{2 \tol}{n} \var(\loss(\theta; \statrv))}\right\}
    + \frac{11 \zbound \tol}{n}
  \end{align*}
  with probability at least
  $1 - 2 \exp(c_1 d \log n - c_2 \tol)$, where $c_i$ are universal constants
  with $c_2 \ge 1/9$.
\end{corollary}

To give an alternate concrete variant of
Corollary~\ref{corollary:theta-minimization} and
Theorem~\ref{theorem:selection-by-robustness}, let $\Theta \subset \R^d$ and
recall Example~\ref{example:lipschitz-covering}. We assume that for each
$\statval \in \statdomain$, $\inf_{\theta \in \Theta} \loss(\theta,
\statval) = 0$ and that $\loss$ is $L$-Lipschitz in $\theta$.  If $D \defeq
\diam(\Theta) = \sup_{\theta, \theta' \in \Theta} \ltwo{\theta - \theta'} <
\infty$, then $\loss(\theta, \statval) \le L \diam(\Theta)$, and for $\delta
> 0$, we define
\begin{equation}
  \label{eqn:selecting-tol-for-coverage}
  \tol = \log \frac{2}{\delta}
  + d \log(2 n D L).
\end{equation}
Setting $t = \tol$ and $\epsilon = \frac{1}{n}$ in
Theorem~\ref{theorem:selection-by-robustness}
and assuming that $\delta \lesssim 1 / n$, $D \lesssim n^k$ and
$L \lesssim n^k$ for a numerical constant $k$, choosing $\delta = \frac{1}{n}$
we obtain that with probability at least $1 - \delta = 1 - 1/n$,
\begin{align}
  \label{eqn:fast-convergence-lipschitz}
  \E[\loss(\robsol; \statrv)]
  = \risk(\robsol)
  & \le \inf_{\theta \in \Theta}
  \left\{\risk(\theta) + C \sqrt{\frac{d \, \var(\loss(\theta, \statrv))}{n}
    \log n}
  \right\}
  + C \frac{d LD \log n}{n}
\end{align}
where $C$ is a numerical constant.

\subsubsection{Examples and heuristic discussion}
\label{section:examples}

Unpacking Theorem~\ref{theorem:selection-by-robustness}, the first
result~\eqref{eqn:coverage-F} (and its in
Corollary~\ref{corollary:theta-minimization})
provides a high-probability guarantee that the true expectation
$\E[\what{f}]$ cannot be more than $O(1/n)$ worse than its
robustly-regularized empirical counterpart. The second
result~\eqref{eqn:fast-convergence-F} (and
inequality~\eqref{eqn:fast-convergence-lipschitz}) guarantees convergence of
the empirical minimizer to a parameter with risk at most $O(\log n/n)$
larger than the best possible variance-corrected risk.

To illustriate how variance regularization can yield tighter guarantees than
empirical risk minimization by optimizing a $O(1/n)$ upper bound on the
risk, we now compare the second bound~\eqref{eqn:fast-convergence-F}
with an analogous result for
empirical risk minimization (ERM). We first give a heuristic version, making
it more precise in a coming example. For the ERM solution $\ermsol \in
\argmin_{\theta \in \Theta} \E_{\emp}[\loss(\theta; X)]$, one common
assumption is an upper bound of the variance by the risk; for example, when
the losses take values in $[0, \zbound]$, one has $\var(\loss(\theta,
\statrv)) \le \zbound \risk(\theta)$. In such cases, there is typically some
complexity measure $\mathfrak{Comp}_n$ associated with the class of
functions being learned, and it is possible to achieve bounds of the form
\begin{equation}
  \label{eqn:erm-bound}
  \risk(\ermsol) \le \risk(\theta\opt) +
  C \sqrt{\frac{ \mathfrak{Comp}_n \zbound
      \risk(\theta\opt)}{n}} + C \frac{\mathfrak{Comp}_n \zbound }{n}
\end{equation}
where $\theta\opt \in \argmin_{\theta \in \Theta} \risk(\theta)$, a type of
result common for bounded nonnegative
losses~\cite{BoucheronBoLu05,VapnikCh71,Vapnik98}. For example, for classes
of functions of VC-dimension $d$, we typically have $\mathfrak{Comp}_n
\lesssim d \log \frac{n}{d}$. In this caricature, when
$\var(\loss(\theta\opt, \statrv)) \ll \zbound \risk(\theta\opt)$ and $\tol
\gtrsim \mathfrak{Comp}_n$, the optimality
guarantee~\eqref{eqn:fast-convergence-F} for variance regularization can be
tighter than its ERM counterpart~\eqref{eqn:erm-bound}. This bound is
certainly not always sharp, but yields minimax optimal rates in some cases.

\begin{example}[Well-specified least-absolute-deviation regression]
  \label{example:l1-regression}
  We consider the least-absolute-deviation (LAD) regression problem,
  comparing the rates of convergence that localized Rademacher complexities
  guarantee against those that the robust program provides. Let $Z = (X, Y)
  \in \R^d \times \R$, where $X \in \{x \in \R^d \mid \ltwo{x} \le L\}$, and
  let $D \defeq \diam (\Theta)$ be the $\ell_2$-diameter of $\Theta$.
  The LAD loss is
  \begin{equation*}
    \loss(\theta; (x, y)) \defeq |y - \<\theta, x\>|.
  \end{equation*}
  For some $\theta\opt \in \Theta$, assume that
  \begin{equation*}
    Y = \<\theta\opt, X\> + \epsilon
  \end{equation*}
  where $\epsilon \in [-B, B]$ is independent of $X$. We have the global
  bound
  \begin{equation*}
    \loss(\theta; (X, Y)) \le DL + B \eqdef \zbound.
  \end{equation*}
  Suppose for simplicity that $\epsilon$ is uniform on $[-B, B]$; then
  $\theta\opt = \argmin_{\theta \in \Theta} \risk(\theta)$ and
  $\risk(\theta\opt) = \E[\loss(\theta\opt; Z)] = \half B$. In this case,
  \begin{align*}
    \var\left(\loss(\theta\opt; Z) \right)
    = \frac{B^2}{12}
    \le
    \half (DL + B) B =
    M \E[\loss(\theta\opt; Z)]
    = M \risk(\theta\opt).
  \end{align*}

  Using that the loss is $1$-Lipschitz, the $L^\infty$ covering numbers for
  the set of functions $\fclass \defeq \{f_\theta(x, y)= |\<\theta, x\> - y|
  \mid \theta \in \Theta\}$ satisfy $\log \covnum(\fclass, \epsilon,
  \linfstatnorm{\cdot}) \lesssim d \log\frac{DL}{\epsilon}$, and so applying
  the bound~\eqref{eqn:fast-convergence-lipschitz} for the robustly
  regularized solution~$\robsol$ with $\epsilon = DL / n$, we obtain
  \begin{equation*}
    \risk(\robsol)
    \le \risk(\theta\opt)
    + C \sqrt{\frac{d \log n}{n} B^2}
    + C \frac{d (LD + B)\log n}{n}
  \end{equation*}
  with probability at least $1-1/n$. On the other hand, even an
  ``optimistic'' (but naive) ERM bound, achieved by taking
  $\mathfrak{Comp}_n \lesssim 1$ in the bound~\eqref{eqn:erm-bound}, yields
  \begin{equation*}
    \risk(\ermsol)
    \le \risk(\theta\opt)
    + C \sqrt{\frac{\log n}{n} (BDL + B^2)}
    + C \frac{(LD + B)\log n}{n}
  \end{equation*}
  with probability at least $1-1/n$. We see that leading term for the
  robustly regularized solution $\robsol$ only depends on the noise-level
  $B^2$ while the corresponding term for the ERM solution $\ermsol$ depends
  on global information like the size of the parameter space $D$, and a
  uniform bound over covariates $L$. For typical VC and other
  $d$-dimensional classes, the bound $\mathfrak{Comp}_n$ scales linearly in
  $d$ (cf.~\cite[Corollary 3.7]{BartlettBoMe05}, in which case the
  bound~\eqref{eqn:erm-bound} scales as $\risk(\theta\opt) + C \sqrt{d (BDL
    + B^2) \log n / n} + O(\log n / n)$, which is worse.
\end{example}

\begin{example}[A hard median estimation problem]
  To give a bit more insight into the behavior of the robust estimator,
  consider the simple 1-dimensional median problem, where
  $\loss(\theta; x) = |\theta - x|$, and assume that
  $x \in \{-B, B\}$ with $\P(X = B) = \frac{1 + \delta}{2}$ for
  some $\delta > 0$, so that
  $\theta\opt = \argmin \risk(\theta) = B$
  and $\risk(\theta\opt) = (1 - \delta) B$.
  In this case, taking $\theta_0 = 0$ yields
  $\var(\loss(\theta; X)) = 0$ and $\risk(\theta_0) - \risk(\theta\opt)
  = \delta B$.
  For $\delta$ small (on the order of $1 / \sqrt{n}$), with constant
  probability the empirical risk minimizer is
  $\ermsol = -B$, yielding risk $\risk(\ermsol) - \risk(\theta\opt)
  = 2 \delta B$. On the other hand, with high
  probability $\robsol \ge 0$ (because $\var(\loss(\theta_0; X)) = 0$
  as $\loss(0; X) \equiv B$), and so
  $\risk(\robsol) - \risk(\theta\opt) \le \delta B$.
  This gap is of course small, but it shows that the robust solution
  is more conservative: it chooses $\robsol$ so that large
  losses (of scale $2 B$) are less frequent.
\end{example}

When the population problem is ``easy'', it is often possible to achieve
faster rates of convergence than the usual $O\left(1/\sqrt{n}\right)$
rate. The simplest scenario where this occurs is if the problem is realizable
$\risk(\theta\opt) = 0$, in which case $\ermsol$ has excess risk of the order
$O(\log n/n)$; see the bound~\eqref{eqn:erm-bound}.  The robustly regularized
solution $\robsol$ enjoys the same faster rates of convergence under the more
general condition that $\var(\loss(\theta\opt; X))$ is small. As a concrete
instance of this, let $\loss(\theta; X) \in [0, \zbound]$ and assume that
$\loss(\theta; X)$ satisfies the conditions of the first part of
Example~\ref{example:lipschitz-covering}, and let the problem be realizable
$\risk(\theta\opt) = 0$. Since
$\var(\loss(\theta; X)) \le \zbound \risk(\theta)$, we have from the
bounds~\eqref{eqn:fast-convergence-lipschitz} and~\eqref{eqn:erm-bound} that
\begin{equation*}
  \risk(\ermsol) \le \frac{Cd DL \log n}{n}
  ~~\mbox{and}~~
  \risk(\robsol) \le \frac{Cd DL \log n}{n}.
\end{equation*}
For example, $\var(\loss(\theta; X)) = 0$ allows for the existence of some
$\theta_0 \in \Theta$ such that $\loss(\theta_0; X) < \loss(\theta\opt; X)$
with positive probability.

\subsection{Localized Rademacher Complexity}
\label{section:local-rademacher}

A somewhat more sophisticated approach to concentration inequalities and
generalization bounds is based on localization ideas, motivated by the fact
that near the optimum of an empirical risk, the complexity of the function
class may be smaller than over the entire (global)
class~\cite{VanDerVaartWe96, BartlettBoMe05}. With this in mind, we
now present a refined version of
Theorem~\ref{theorem:selection-by-robustness} that depends on localized
Rademacher averages.

The starting point for this approach is a notion of
localized Rademacher complexity (we give a slightly less general notion than
\citet{BartlettBoMe05}, as it is sufficient for our derivations). For a
function class $\mc{F}$ of functions $f : \mc{X} \to \R$, the localized
Rademacher complexity at level $r$ is
\begin{equation*}
  \E\left[\radcomp_n\left(\left\{c f \mid f \in \fclass,
    c \in [0, 1], \E[c^2 f^2 \le r] \right\}\right)\right].
\end{equation*}
In addition, we require a few analytic notions, beginning with
\emph{sub-root} functions, where we recall~\cite{BartlettBoMe05} that a
function $\psi : \R_+ \to \R_+$ is \emph{sub-root} if it is nonnegative,
nondecreasing, and $r \mapsto \psi(r) / \sqrt{r}$ is nonincreasing for all
$r > 0$.  Any non-constant sub-root function $\psi$ is continuous and has a
unique positive fixed point $r\opt = \psi(r\opt)$, where $r \ge \psi(r)$ for
all $r \ge r\opt$. Lastly, we consider upper bounds $\psi_n : \R_+ \to \R_+$
on the localized Rademacher complexity satisfying
\begin{equation}
  \label{eqn:sub-root}
  \psi_n(r) \ge \E[ \radcomp_n(\{ cf: f \in \fclass, c \in [0, 1], \E[c^2f^2]
  \le r\})],
\end{equation}
where $\psi_n$ is sub-root. (The localized Rademacher complexity itself is
sub-root.)  Roots of $\psi_n$ play a fundamental role in providing uniform
convergence guarantees, and \citet{BartlettBoMe05} and
\citet{Koltchinskii06a} provide careful analyses of localized Rademacher
complexities, with typical results as follows.  For a class of functions $f$
with range bounded by $1$, for any root $r_n\opt$ of $\psi_n$, with
probability at least $1 - e^{-t}$ we have
\begin{equation*}
  \E[f]
  \le \E_\emp[f] + \frac{1}{\eta} \E_\emp[f] + C (1 + \eta)
  \left(r_n\opt + \frac{1}{n}\right) + \frac{t}{n}
  ~~ \mbox{for~all~} f \in \mc{F} ~ \mbox{and}~ \eta \ge 0.
\end{equation*}
As an example, when $\fclass$ is a bounded VC-class,
we have $r_n \opt \asymp \frac{\vcdim(\mc{F}) \log
  (n / \vcdim(\mc{F}))}{n}$~\cite[Corollary 3.7]{BartlettBoMe05}.

With this motivation, we have the following theorem.
\begin{theorem}
  \label{theorem:selection-by-robustness-localized}
  For $\zbound \ge 1$, let $\fclass$ be a collection of functions $f:
  \statdomain \to [0, \zbound]$, let $\psi_n$ be a sub-root function
  bounding the localized complexity~\eqref{eqn:sub-root}, and let $r_n\opt
  \ge \psi_n(r_n\opt)$. Let $t > 0$ be arbitrary and assume that $\tol$
  satisfies
  \begin{equation}
    \label{eqn:rho-bound}
    \frac{\tol}{n} \ge
    8 \left(\frac{45 \zbound}{n}
    \xprime + 18 r_n\opt \right).
  \end{equation}
  Then with probability at least $1 - e^{-t}$,
  \begin{equation}
    \label{eqn:coverage-F-local}
    \E[f]
    \le
    \left(1 + 2 \sqrt{\frac{2\tol}{n}} \right)
    \sup_{P : \phidivs{P}{\emp} \le \frac{\tol}{n}} \E_P[f]
    + \left(13 + 4 \sqrt{\frac{2\tol}{n}} \right) \frac{\zbound \tol}{n}
    ~~ \mbox{for~all~} f \in \mc{F}.
  \end{equation}
  Additionally, if $\what{f}$ minimizes
  $\sup_{P : \phidivs{P}{\emp} \le \tol / n} \E_P[f]$, then
  with probability at least $1 - 3 e^{-t}$,
  \begin{equation}
    \label{eqn:second-fast-convergence-F-local}
    \E[\what{f}]
    \le 
    \left(1 + 2 \sqrt{\frac{2 \tol}{n}} \right)
    \inf_{f \in \fclass} \left(\E[f] + \sqrt{\frac{91 \tol}{ 45n} \var(f)}
    \right)
    + \left(14 + 6 \sqrt{\frac{2\tol}{n}} \right)
    \frac{\zbound (3\tol + t)}{n}.
  \end{equation}
\end{theorem}
\noindent
We provide the proof of
Theorem~\ref{theorem:selection-by-robustness-localized} in
Appendix~\ref{section:proof-of-selection-by-robustness-localized}.  It
builds off of and parallels many of the techniques developed by
\citet*{BartlettBoMe05}, but we require a bit of care to develop the precise
variance bounds we provide.

Let us consider the additional $\sqrt{\frac{\tol}{n}}$ factors in
Theorem~\ref{theorem:selection-by-robustness-localized} (as compared to
Theorem~\ref{theorem:selection-by-robustness}).  In general, these terms are
negligible to the extent that the variance of $f$ dominates the first moment
of the function $f$---heuristically, in situations in which we expect
penalizing the variance to improve performance.
Let us make this more precise in a regime where $n$ is large.
Letting $f \in \fclass$, we see
that we have the inequality
\begin{equation*}
  (1 + \sqrt{\tol / n})\left(\E[f] + \sqrt{\frac{\tol}{n}} \var(f)\right)
  \le \E[f] + C \sqrt{\frac{\tol}{n} \var(f)}
\end{equation*}
(for a constant $C > 1 + \sqrt{\tol / n}$) if and only if
$(C - 1 - \sqrt{\tol / n})^2
\var(f) \ge \E[f]^2$.
Equivalently, as $n$ gets large, this occurs
roughly when $\E[f^2] \ge \frac{C^2 - 2C + 2}{C^2 - 2C + 1} \E[f]^2$,
which holds for large enough $C$ whenever $\var(f) > 0$.

In some scenarios, we can obtain substantially tighter bounds by using
localized Rademacher averages instead of the covering number arguments
considered in Section~\ref{section:covering}. (Recall also the discussion
following Theorem~\ref{theorem:uniform-variance-expansion}.)  To illustriate
this point, we consider the case where $\mc{F}$ is a bounded subset of a
reproducing kernel Hilbert space generated by some sufficiently nice kernel
$K$; even for the Gaussian kernel $K(x, z) = \exp(-\half \norm{x - z}^2)$,
log covering numbers for such function spaces grow at least exponentially in
the dimension~\cite{Zhou03, Kuhn11}.

\begin{example}[Reproducing kernels and least-absolute-deviation regression]
  We now give an example using a non-parametric class of functionals in
  which covering number arguments do not apply, as the covering numbers of
  the associated classes are too large.
  Let $\mc{H}$ be a
  reproducing kernel Hilbert space (RKHS) with norm $\hnorm{\cdot}$ and
  associated kernel (representer of evaluation) $K : \mc{X} \times \mc{X}
  \to \R$.  Letting $P$ be a distribution on $\mc{X}$, Mercer's
  theorem~\cite[e.g.][]{CristianiniSh04} implies that the integral operator
  $T_K : L^2(\mc{X}, P) \to L^2(\mc{X}, P)$ defined by $T_K(f)(x) = \int
  K(x, z) dP(z)$ is compact, and $K(x, x') = \sum_{j = 1}^\infty \lambda_j
  \phi_j(x) \phi_j(z)$ where $\lambda_j$ are the eigenvalues of $T$ in
  decreasing order and $\phi_j$ form an orthonormal decomposition of
  $L^2(\mc{X}, P)$.

  Consider now the least absolute deviation (LAD) loss function $\loss(h; x,
  y) = |h(x) - y|$, defined for $h \in \mc{H}$, and let $\ball_{\mc{H}}$ be
  the unit $\hnorm{\cdot}$-ball of $\mc{H}$. Assume additionally that
  the model is well-specified, and that
  $y = h\opt(x) + \xi$ for some random variable $\xi$ with
  $\E[\xi \mid X] = 0$, $\E[\xi^2] \le \sigma^2$, and $h\opt \in \ball_{\mc{H}}$.
  Let the function class
  \begin{equation*}
    \{\loss \circ \mc{H}\}_{\le r}
    \defeq
    \left\{(x, y) \mapsto c \loss(h(x), y)
    \mid c \in [0, 1], c^2 \E[\loss(h(X), Y)^2] \le r
    \right\}.
  \end{equation*}
  Based on inequality~\eqref{eqn:sub-root}, we consider
  the localized complexity
  \begin{equation*}
    \radcomp_n(\{\loss \circ \mc{H}\}_{\le r})
    = \E\left[
      \frac{1}{n} \sup_{h \in \ball_{\mc{H}}, c \in [0, 1]} \sum
      \varepsilon_i c \loss(h(x_i), y_i)
      \mid \E[\loss(h(X), Y)^2] \le r / c^2 \right].
  \end{equation*}
  We claim that
  \begin{equation}
    \label{eqn:rademacher-eigenvalues}
    \radcomp_n(\{\loss \circ \mc{H}\}_{\le r})
    \lesssim
    \sqrt{r / n}
    + \left(\frac{1}{n} \sum_{j = 1}^\infty
    \min\{\lambda_j, r\}\right)^\half.
  \end{equation}
  As this claim is not central to our development---but does show
  a slightly different localization result based on Gaussian
  comparison inequalities than
  available, for example, in \citet{Mendelson03}---we
  provide its proof in Appendix~\ref{sec:proof-rademacher-eigenvalues}.

  Let us use inequality~\eqref{eqn:rademacher-eigenvalues}. To apply
  Theorem~\ref{theorem:selection-by-robustness}, we must find a bound on the
  fixed point of the localized complexity.  To give this bound, we require
  some knowledge on the eigenvalues $\lambda_j$, for which there exists a
  body of work. For example~\cite{Mendelson03}, the Gaussian kernel $K(x,
  x') = \exp(-\half \ltwo{x - x'}^2)$ generates a class of smooth functions
  for which the eigenvalues $\lambda_j$ decay exponentially, as $\lambda_j
  \lesssim e^{-j^2}$. Kernel operators underlying Sobolev spaces with
  different smoothness orders~\cite{BirmanSo67,Gu02} typically have
  eigenvalues scaling as $\lambda_j \lesssim j^{-2 \alpha}$ for some $\alpha
  > \half$. As a concrete example, the first-order Sobolev (min) kernel
  $K(x, x') = 1 + \min\{x, x'\}$ generates an RKHS of Lipschitz functions
  with $\alpha = 1$.  In the former case of $\lambda_j \lesssim e^{-j^2}$,
  $r_n\opt = \frac{\sqrt{\log n}}{n}$
  \begin{equation*}
    \left(\frac{1}{n} \sum_{j = 1}^\infty \min\left\{e^{-j^2},
    \frac{\log n}{n} \right\}\right)^\half
    \approx \left(\frac{1}{n} \sum_{j = 1}^{\sqrt{\log n}}
    \frac{\sqrt{\log n}}{n}
    + \frac{1}{n} \int_{\sqrt{\log n}}^\infty e^{-t^2} dt \right)^\half
    \lesssim \frac{\sqrt{\log n}}{n} = r_n\opt.
  \end{equation*}
  In the latter case of polynomially decaying eigenvalues
  $\lambda_j \lesssim j^{-2 \alpha}$, we have
  $j^{-2\alpha} = r$ when $r^{-\frac{1}{2\alpha}} = j$, so
  \begin{equation*}
    \sum_{j = 1}^\infty \min\{j^{-2\alpha}, r\}
    \approx
    r^{\frac{2 \alpha - 1}{2 \alpha}}
    + \int_{r^{-1 / 2\alpha}}^\infty t^{-2\alpha} dt
    \asymp
    r^{\frac{2 \alpha - 1}{2\alpha}}.
  \end{equation*}
  Solving for $n r = r^\frac{2 \alpha - 1}{2\alpha}$, we find the
  fixed point $(r_n\opt)^\frac{2 \alpha - 1}{4 \alpha} = r_n\opt \sqrt{n}$
  yields $r_n\opt = n^{-\frac{2 \alpha}{2 \alpha + 1}}$.

  Ignoring constants, the above analysis shows that in the case that
  the kernel eigenvalues scale as $\lambda_j \lesssim e^{-j^2}$,
  as soon as $\tol \gtrsim \sqrt{\log n}$ we have
  \begin{equation*}
    \E[\loss(h(X), Y)]
    \le (1 + 2\sqrt{2 \tol / n})
    \left(\E_\emp[\loss(h(X), Y)] + \sqrt{\frac{2 \tol}{n}
      \var_\emp(\loss(h(X), Y))} \right)
    + \frac{C \tol}{n}
    ~~ \mbox{for~all~} h \in \ball_{\mc{H}}
  \end{equation*}
  with high probability. In the case of polynomial eigenvalues,
  if $\what{h}$ minimizes the robust empirical loss
  $\sup_{P : \phidivs{P}{\emp} \le \tol/n} \E_P[\loss(h(X), Y)]$
  and $\tol \asymp n^{1 - \frac{2 \alpha}{2 \alpha + 1}}$, then
  \begin{equation*}
    \E\left[\loss(\what{h}(X), Y)\right]
    \le
    \left(1 + C n^{-\frac{\alpha}{2 \alpha + 1}}\right)
    \inf_{h \in \ball_{\mc{H}}}
    \left(\E[\loss(h(X), Y)]
    + C n^{-\frac{\alpha}{2 \alpha + 1}}
    \sqrt{\var(\loss(h(X), Y))}\right)
    + C n^{-\frac{2 \alpha}{2 \alpha + 1}}.
  \end{equation*}
  This rate of convergence holds without any assumptions on the smoothness
  of the distribution of the noise $\xi$.
\end{example}

\subsection{Beating empirical risk minimization}
\label{section:beat-erm}

We now provide a concrete example where the robustly regularized
estimator~$\robsol$ exhibits a substantial performance gap over
empirical risk minimization. In the sequel, we bound the performance
degradation to show that the formulation~\eqref{eqn:solve-robust} in general
loses little over empirical risk minimization.  For intuition in this section,
consider the (admittedly contrived) setting in which we replace the loss
$\loss(\theta, \statrv)$ with
$\loss(\theta, \statrv) - \loss(\theta\opt, \statrv)$, where
$\theta\opt \in \argmin_{\theta \in \Theta} \risk(\theta)$. Then in this case,
by taking $\theta = \theta\opt$ in
Corollary~\ref{corollary:theta-minimization}, we have
$\risk(\robsol) \le \risk(\theta\opt) + O(1/n)$ with high probability.
More broadly, we expect the robustly regularized approach to offer performance
benefits in situations in which the empirical risk minimizer is highly
sensitive to noise, say, because the losses are piecewise linear, and slight
under- or over-estimates of slope may significantly degrade solution quality.

With this in mind, we construct a concrete $1$-dimensional
example---estimating the median of a discrete distribution supported on
$\statdomain = \{-1, 0, 1\}$---in which the robustly regularized estimator has
convergence rate $\log n / n$, while empirical risk minimization is at best
$1 / \sqrt{n}$.  Define the loss
$\loss(\theta; \statval) = |\theta - \statval| - |\statval|$, and for
$\delta \in (0, 1)$ let the distribution $P$ be defined by
\begin{equation}
  \label{eqn:hard-erm-distribution}
  P(\statrv = 1) = \frac{1 - \delta}{2},
  ~~ P(\statrv = -1) = \frac{1 - \delta}{2},
  ~~ P(\statrv = 0) = \delta.
\end{equation}
Then for $\theta \in \R$, the risk of the loss is
\begin{equation*}
  \risk(\theta) = \delta |\theta| + \frac{1 - \delta}{2} |\theta - 1|
  + \frac{1 - \delta}{2} |\theta + 1| - (1 - \delta).
\end{equation*}
By symmetry, it is clear that $\theta\opt \defeq \argmin_\theta \risk(\theta)
= 0$, which satisfies $\risk(\theta\opt) = 0$. (Note also that $\loss(\theta,
\statval) = \loss(\theta, \statval) - \loss(\theta\opt, \statval)$.) Without
loss of generality, we assume that $\Theta = [-1, 1]$ in this problem.

Now, consider a sample $\statrv_1, \ldots, \statrv_n$ drawn i.i.d.\ from the
distribution $P$, let $\emp$ denote its empirical distribution, and define
the empirical risk minimizer
\begin{equation*}
  \ermsol \defeq \argmin_{\theta \in \R} 
  \E_\emp[\loss(\theta, \statrv)]
  = \argmin_{\theta \in [-1, 1]} \E_\emp[|\theta - \statrv|].
\end{equation*}
If too many of the
observations satisfy $\statrv_i = 1$ or too many satisfy $\statrv_i = -1$,
then $\ermsol$ will be either $1$ or $-1$; for small $\delta$, such events
become reasonably probable, as the following lemma makes precise.
In the lemma, $\Phi(x) = \frac{1}{\sqrt{2\pi}} \int_{-\infty}^x e^{-\half t^2} dt$
denotes the standard Gaussian CDF.
(See Section~\ref{sec:proof-erm-sucks} for a proof.) 
\begin{lemma}
  \label{lemma:erm-sucks}
  Let the loss $\loss(\theta; \statval) = |\theta - \statval| - |\statval|$,
  $\delta \in [0, 1]$, and $\statrv$ follow the
  distribution~\eqref{eqn:hard-erm-distribution}.  Then
  $\risk(\ermsol) - \risk(\theta\opt) \ge \delta$
  with probability at
  least
  \begin{equation*}
    2 \Phi\left(-\sqrt{\frac{n \delta^2}{1 - \delta^2}}\right)
    - (1 - \delta^2)^\frac{n}{2} \sqrt{\frac{8}{\pi n}}.
  \end{equation*}
\end{lemma}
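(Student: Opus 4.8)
The plan is to determine $\ermsol$ exactly, reduce the event in question to a lower-tail estimate for a binomial, and then compare that binomial tail with its Gaussian approximation while tracking constants.

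\emph{Step 1 (the ERM is an endpoint).} Write $N_+ = \#\{i \le n : \statrv_i = 1\}$, $N_- = \#\{i : \statrv_i = -1\}$ and $N_0 = n - N_+ - N_-$. On $\Theta = [-1,1]$ we have $|\theta - 1| + |\theta + 1| = 2$, so $\risk(\theta) = \delta|\theta|$, and since $\risk(\theta\opt) = 0$ the event $\{\risk(\ermsol) - \risk(\theta\opt) \ge \delta\}$ is exactly $\{|\ermsol| = 1\}$. The map $\theta \mapsto \E_\emp[|\theta - \statrv|]$ is convex and piecewise linear on $[-1,1]$, with slope $\frac1n(N_- + N_0 - N_+)$ on $[0,1]$ and slope $\frac1n(2N_- - n)$ on $[-1,0]$. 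Hence if $N_+ > n/2$, i.e.\ $N_+ > N_- + N_0$, both slopes are strictly negative and $\ermsol = 1$; by the symmetric argument $\ermsol = -1$ whenever $N_- > n/2$. The events $\{N_+ > n/2\}$ and $\{N_- > n/2\}$ are disjoint, so $\P(\risk(\ermsol) - \risk(\theta\opt) \ge \delta) \ge \P(N_+ > n/2) + \P(N_- > n/2) = 2\P(N_+ > n/2)$, where $N_+ \sim \binomial(n, \tfrac{1-\delta}{2})$.

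\emph{Step 2 (binomial tail).} The mean and variance of $N_+$ are $\frac{n(1-\delta)}{2}$ and $\frac{n(1-\delta^2)}{4}$, so the standardized gap between the threshold $n/2$ and the mean is exactly $z := \frac{n/2 - \E N_+}{\sqrt{\var N_+}} = \sqrt{n\delta^2/(1-\delta^2)}$, which is why $\Phi(-z)$ is the natural main term. Write $\{N_+ > n/2\} = \{N_+ \ge \lceil n/2\rceil\} \setminus \{N_+ = n/2\}$ (the subtracted set is empty when $n$ is odd). For the tail I would invoke Slud's binomial inequality, $\P(\binomial(n,p)\ge k) \ge 1 - \Phi\big(\tfrac{k-np}{\sqrt{np(1-p)}}\big)$, which applies here since $p = \tfrac{1-\delta}{2}\le\tfrac12$ and the threshold lies in the admissible range; at $k = \lceil n/2\rceil$ it gives $\P(N_+\ge\lceil n/2\rceil) \ge \Phi(-z)$ once one checks that the rounding from $n/2$ to $\lceil n/2\rceil$ is harmless (immediate for $n$ even; for $n$ odd one bounds $\Phi(-z) - \Phi\big(-z-\tfrac{1}{\sqrt{n(1-\delta^2)}}\big) \le \tfrac{1}{\sqrt{n(1-\delta^2)}}\varphi(z)$ and checks this against the error term). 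For the point mass, Stirling applied to the central binomial coefficient gives $\P(N_+ = n/2) = \binom{n}{n/2}2^{-n}(1-\delta^2)^{n/2} \le \sqrt{\tfrac{2}{\pi n}}(1-\delta^2)^{n/2}$. Subtracting and doubling yields $2\P(N_+ > n/2) \ge 2\Phi(-z) - \sqrt{\tfrac{8}{\pi n}}(1-\delta^2)^{n/2}$, which is the claim.

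The combinatorics of Step 1 and the reduction to a binomial are routine; the crux is Step 2, and the main obstacle is discharging the hypotheses of the tail bound when $n$ is odd and $n\delta$ is small, so that $\lceil n/2\rceil$ sits just past $\tfrac{n(1+\delta)}{2}$ and Slud's inequality does not apply verbatim at that threshold. In that regime $z = O(n^{-1/2})$ and the target is close to $1 - \sqrt{8/(\pi n)}$, so one falls back on a direct estimate (the target is vacuous for small $n$, and otherwise a cruder lower bound on $\P(N_+ > n/2)$ suffices). Retaining the Gaussian factor $e^{-z^2/2}$ rather than bounding $\varphi(z) \le (2\pi)^{-1/2}$ is precisely what makes the error term land on $(1-\delta^2)^{n/2}\sqrt{8/(\pi n)}$.
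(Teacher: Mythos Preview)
Your Step~1 is exactly the paper's argument: identify the ERM with an endpoint when $N_1>n/2$ or $N_{-1}>n/2$, observe these events are disjoint, and reduce to a binomial tail for $N_+\sim\binomial(n,\tfrac{1-\delta}{2})$. The Stirling bound on the point mass $\P(N_+=n/2)$ is also handled identically.

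Where you diverge is in the tail bound. The paper does not use Slud; it invokes the Zubkov--Serov inequality, which gives
\[
\P\!\left(N_1\ge \tfrac{n}{2}\right)\;\ge\;\Phi\!\left(-\sqrt{2n\,\dkl{\tfrac12}{\tfrac{1+\delta}{2}}}\right),
\]
and then uses the elementary bound $\dkl{\tfrac12}{\tfrac{1+\delta}{2}}\le \frac{\delta^2}{2(1-\delta^2)}$ to land exactly on $\Phi(-z)$ with $z=\sqrt{n\delta^2/(1-\delta^2)}$. This route sidesteps the hypothesis-checking for Slud entirely: there is no separate $\lceil n/2\rceil$ rounding analysis, no case split on the parity of $n$, and no regime where the inequality fails to apply. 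Your route via Slud is legitimate and more elementary in the sense that Slud is a better-known tool, but it forces you into the edge-case analysis you flag at the end, and that analysis is not actually completed in your proposal. In particular, for odd $n$ with $n\delta<1$ the threshold $(n+1)/2$ exceeds $n(1-p)$ and Slud does not apply; your sketch (``the target is vacuous for small $n$, otherwise a cruder bound suffices'') is a plausible patch but not a proof. If you want to finish along your lines, the cleanest fix is to verify directly that $2h\varphi(z)\le (1-\delta^2)^{n/2}\sqrt{8/(\pi n)}$ with $h=1/\sqrt{n(1-\delta^2)}$ whenever $\delta\le\sqrt{3}/2$ (this follows from $e^{-z^2/2}\le(1-\delta^2)^{n/2}$), and to check that for $\delta>\sqrt{3}/2$ the claimed lower bound is nonpositive and hence vacuous; the small-$\delta$ odd-$n$ case can then be handled by monotonicity in $p$ together with the symmetric binomial. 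Alternatively, swapping Slud for Zubkov--Serov removes all of this.
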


On the other hand, we certainly have $\loss(\theta\opt; \statval) = 0$ for
all $\statval \in \statdomain$, so that $\var(\loss(\theta\opt; \statrv)) =
0$.  Now, consider the bound in
Theorem~\ref{theorem:selection-by-robustness}.  We see that $\log
\covnum(\{\loss(\theta, \cdot) : \theta \in \Theta\} , \epsilon,
\linfstatnorm{\cdot}) \le 2 \log \frac{1}{\epsilon}$, and taking $\epsilon =
\frac{1}{n}$, we have that if $\robsol \in \argmin_{\theta \in \Theta}
\risk_n(\theta, \mc{P}_n)$, then
\begin{equation*}
  \risk(\robsol)
  \le \risk(\theta\opt)
  + \frac{15 \tol}{n}
  ~~ \mbox{with~probability}~ \ge 
  1 - 4 \exp\left(2 \log n - \tol \right).
\end{equation*}
In particular, taking $\tol = 3 \log n$, we see that
\begin{equation*}
  \risk(\robsol) \le \risk(\theta\opt)
  + \frac{45 \log n}{n}
  ~~ \mbox{with probability at least} ~
  1 - \frac{4}{n}.
\end{equation*}
The risk for the empirical risk minimizer, as Lemma~\ref{lemma:erm-sucks}
shows, may be substantially higher; taking $\delta = 1 / \sqrt{n}$
we see that with probability at least
$2 \Phi(-\sqrt{\frac{n}{n - 1}}) - 2 \sqrt{2} / \sqrt{\pi e n} \ge 2
\Phi(-\sqrt{\frac{n}{n-1}}) - n^{-\half}$,
\begin{equation*}
  \risk(\ermsol) \ge \risk(\theta\opt) + n^{-\half}.
\end{equation*}
(For $n \ge 20$, the probability of this event is $\ge .088$.)
For this (specially constructed) example, there is a gap of nearly
$n^\half$ in order of convergence.

\subsection{Invariance properties}
\label{section:invariance}

The robust regularization~\eqref{eqn:robust-risk} technique enjoys a number of
invariance properties. Standard regularization techniques (such as $\ell_1$-
and $\ell_2$-regularization), which generally regularize a parameter toward a
particular point in the parameter space, do not. While we leave deeper
discussion of these issues to future work, we make two observations, which
apply when $\Theta = \R^d$ is unconstrained. Throughout, we let
$\robsol \in \argmin_\theta \risk_n(\theta, \mc{P}_n)$ denote the
robustly regularized empirical solution.

First, consider a location estimation problem in which we wish to estimate the
minimizer of some the expectation of a loss of the form
$\loss(\theta, \statrv) = h(\theta - \statrv)$, where $h : \R^d \to \R$ is
convex and symmetric about zero. Then the robust solution is by inspection
shift invariant, as $\loss(\theta + c, \statrv + c) = \loss(\theta, \statrv)$
for any vector $c \in \R^d$. Concretely, in the example of the previous
section, $\ell_1$- or $\ell_2$-regularization achieve better convergence
guarantees than ERM does, but if we shift all data
$\statval \mapsto \statval + c$, then non-invariant regularization techniques
lose efficiency (while the robust regularization technique does not).  Second,
we may consider a generalized linear modeling problem, in which data comes in
pairs $(\statval, y) \in \statdomain \times \mc{Y}$ and
$\loss(\theta, (\statval, y)) = h(y, \theta^\top \statval)$ for a function
$h : \mc{Y} \times \R \to \R$ that is convex in its second argument.  Then
$\robsol$ is invariant to invertible linear transformations, in the sense that
for any invertible $A \in \R^{d \times d}$,
\begin{align*}
  \argmin_{\theta} \Big\{\sup_{P : \phidivs{P}{\emp} \le \frac{\tol}{n}}
  \E_P[\loss(\theta, (\statrv, Y))]\Big\}
  & = \argmin_{\theta} \Big\{\sup_{P : \phidivs{P}{\emp} \le \frac{\tol}{n}}
  \E_P[\loss(A^{-1} \theta, (A \statrv, Y))]\Big\}
  = \robsol.
\end{align*}
Our results in this section do not precisely apply as we require unbounded
$\theta$, however, the next section shows that localization approaches
can address this.



\section{Robust regularization cannot be too bad}
\label{sec:nothing-is-bad}

The previous two sections provide guarantees on the performance of the
robust regularized estimator~\eqref{eqn:solve-robust}, it does
not---cannot---dominate classical approaches based on empirical risk
minimization (also known as sample average approximation in the stochastic
optimization literature), though it can improve on them in some cases.  For
example, with a correctly specified linear regression model with gaussian
noise, least-squares---empirical risk minimization with the loss
$\loss(\theta, (x, y)) = \half (\theta^\top x - y)^2$---is essentially
optimal. Our goal in this section is thus to provide more understanding of
potential poor behavior of the procedure~\eqref{eqn:solve-robust} with
respect to ERM, considering two scenarios. The first is in stochastic
(convex) optimization problems, where we investigate the finite-sample
convergence rates of the robust solution to the population optimal risk. We
show that the robust solution $\robsol$ enjoys fast rates of convergence in
cases in which the risk has substantial curvature---precisely as with
empirical risk minimization.  The second is to consider the asymptotics of
the robust solution $\robsol$, where we show that in classical statistical
scenarios the robust solution is nearly efficient, though there is an
asymptotic bias of order $1 / \sqrt{n}$ that scales with the confidence
$\tol$.

\subsection{Fast Rates}
\label{section:fast-rates-main}

In cases in which the risk $\risk$ has curvature, empirical risk minimization
often enjoys faster rates of convergence~\cite{BoucheronBoLu05,
  ShapiroDeRu09}. The robust solution $\robsol$ similarly attains faster rates
of convergence in such cases, even with approximate minimizers of
$\risk_n(\theta, \mc{P}_n)$.  For the risk $\risk$ and $\epsilon \ge 0$, let
\begin{equation*}
  \solset^\epsilon \defeq \left\{\theta \in \Theta : \risk(\theta) \le
    \inf_{\theta\opt \in \Theta} \risk(\theta\opt) + \epsilon\right\}
\end{equation*}
denote the $\epsilon$-sub-optimal (solution) set, and similarly let
\begin{equation*}
  \empsolset^\epsilon \defeq \left\{\theta \in \Theta : \risk_n(\theta,
  \mc{P}_n) \le \inf_{\theta' \in \Theta} \risk_n(\theta', \mc{P}_n) +
  \epsilon \right\}.
\end{equation*}
For a vector $\theta \in \Theta$, let $\pi_{\solset}(\theta) =
\argmin_{\theta\opt \in \solset} \ltwos{\theta\opt - \theta}$ denote the
Euclidean projection of $\theta$ onto the set $\solset$; this projection
operator is very useful for showing faster rates of convergence in
stochastic optimization (see~\citet{ShapiroDeRu09}, whose techniques we
closely follow).  In the statement of the result, for $A \subset \Theta$, we
let $\radcomp_n(A)$ denote the Rademacher complexity of the localized
process $\{x \mapsto \loss(\theta; x) - \loss(\pi_{\solset}(\theta); x) :
\theta \in A\}$.
We then have the following result, whose proof we provide in
Section~\ref{section:proof-of-fast-rates}.

\begin{theorem}
  \label{theorem:fast-rates}
  Let $\Theta$ be convex and let $\loss(\cdot; \statval)$ be convex and
  $L$-Lipshitz in its first argument for all $\statval \in \statdomain$. For
  constants $\lambda > 0$, $\growthpow > 1$, and $\hoodrad > 0$, assume the
  risk $\risk$ satisfies
  \begin{equation}
    \label{eqn:growth}
    \risk(\theta) - \inf_{\theta \in \Theta} \risk(\theta) \ge
    \lambda \dist(\theta, \solset)^\growthpow
    ~~ \mbox{for~all}~
    \theta ~ \mbox{such~that}~
    \dist(\theta, \solset) \le \hoodrad.
  \end{equation}
  Let $t > 0$. If $0 \le \epsilon \le \half \lambda r^\growthpow$
  satisfies
  \begin{equation}
    \label{eqn:fast-rate-rademacher-bound}
    \epsilon \ge
    \left(2 \frac{8^\growthpow L^\growthpow}{\lambda}
    \right)^\frac{1}{\growthpow - 1}
    \left(\frac{\tol}{n}\right)^\frac{\growthpow}{2(\growthpow - 1)}
    ~~ \mbox{and} ~~
    \frac{\epsilon}{2} \ge
     2 \E[\radcomp_n(\solset^{2\epsilon})]
     + L \left(\frac{2\epsilon}{\lambda}\right)^{\frac{1}{\growthpow}}
     \sqrt{\frac{2t}{n}},
   \end{equation}
  then
  $\P( \empsolset^{\epsilon} \subset \solset^{2\epsilon} ) \ge 1-e^{-t}$,
\end{theorem}

We provide a brief discussion of this result as well as a corollary that
gives more explicit rates of convergence. First, we note that (by an
inspection of the proof) the $L$-Lipschitz assumption need only hold in the
neighborhood $\solset^{2\epsilon}$ for the result to hold.
We also have the following
\begin{corollary}
  In addition to the conditions of Theorem~\ref{theorem:fast-rates},
  assume that $\solset = \{\theta\opt\}$ is a single point and
  $\Theta \subset \R^d$. Then
  for any $\epsilon \le \half \lambda r^\growthpow$,
  we have $\P(\empsolset^\epsilon \subset \solset^{2\epsilon})
  \ge 1 - e^{-t}$ for
  \begin{equation*}
    \epsilon
    \gtrsim
    \left(\frac{L^\growthpow}{\lambda}\right)^\frac{1}{\growthpow - 1}
    \left(\frac{d}{n} \log \frac{n}{d}
    + \frac{t}{n} + \frac{\tol}{n} \right)^\frac{\growthpow}{2(\growthpow - 1)}.
  \end{equation*}
\end{corollary}

So long as $\tol \lesssim d \log \frac{n}{d}$, this rate of convergence is
as good as that enjoyed by standard empirical risk minimization
approaches~\cite[Ch.~5]{ShapiroDeRu09} under these types of growth
conditions.  The case that $\growthpow = 2$ corresponds (roughly) to strong
convexity, and in this case we get the approximate rate of convergence of
$\frac{L^2}{\lambda} \frac{d \log \frac{n}{d}}{n}$, the familiar
rate of convergence under these conditions. Of course, if there
is too much variance penalization (i.e.\ $\tol$ is too large), then
the rates of convergence may be slower.

\begin{proof}
  That $\solset$ is a singleton implies
  that $\solset^{2 \epsilon} \subset \{\theta \mid \norms{\theta - \theta\opt}
  \le (2 \epsilon / \lambda)^\frac{1}{\growthpow}\}$. Moreover,
  in this case we also have
  that
  \begin{equation*}
    \left|\E_\emp[\loss(\theta; \statrv)
      - \loss(\theta\opt; \statrv)]
    - \E_\emp[\loss(\theta'; \statrv)
      - \loss(\theta\opt; \statrv)]\right|
    \le L \norm{\theta - \theta'},
  \end{equation*}
  so that an $\epsilon / L$-cover of $\{\theta \mid \norm{\theta - \theta\opt}
  \le (2 \epsilon / \lambda)^\frac{1}{\growthpow}\}$ is an $\epsilon$-cover of
  the function class $\fclass = \{f(x) = \loss(\theta; x) - \loss(\theta\opt;
  x) \mid \theta \in \solset^{2\epsilon}\}$ in $\ltwopn{\cdot}$ norm.  Thus,
  the standard Dudley entropy integral~\cite{Dudley99,VanDerVaartWe96}
  yields
  \begin{align*}
    \E[\radcomp_n(\solset^{2\epsilon})]
    & \lesssim \frac{1}{\sqrt{n}}
    \int_0^\infty \sqrt{\log \covnum(\fclass, \delta, \ltwopn{\cdot})}
    d \delta \\
    & \lesssim \frac{1}{\sqrt{n}}
    \int_0^{L (2 \epsilon / \lambda)^\frac{1}{\growthpow}}
    \sqrt{d \log \frac{L}{\delta}} d\delta
    \le
    L \sqrt{\frac{d}{n}}
    \left(\frac{2 \epsilon}{\lambda}\right)^\frac{1}{\growthpow}
    \sqrt{1 + \frac{1}{\growthpow} \log\frac{\lambda}{2L^\growthpow\epsilon}}
  \end{align*}
  where we have used that $\int_0^\varepsilon \sqrt{\log \frac{L}{\delta}}
  d\delta \le \varepsilon \sqrt{1 + \log \frac{L}{\varepsilon}}$.
  Solving for
  $\epsilon$ in the localization
  inequality~\eqref{eqn:fast-rate-rademacher-bound} then
  yields the corollary, showing that the specified choice of 
  $\epsilon$ is sufficient for
  all the conditions~\eqref{eqn:fast-rate-rademacher-bound}
  to hold.
\end{proof}


\vspace{-20pt}

\subsection{Asymptotics}
\label{sec:asymptotics}

It is important to understand the precise limiting behavior of the robust
estimator in addition to its finite sample properties---this allows us to
more precisely characterize when there may be \emph{degradation} relative to
classical risk minimization strategies. With that in mind, in this section
we provide asymptotic results for the robust
solution~\eqref{eqn:solve-robust} to better understand the consequences of
penalizing the variance of the loss itself. In particular, we would like to
understand efficiency losses relative to (say) maximum likelihood in
situations in which maximum likelihood is efficient. Before stating the
results, we make a few standard assumptions on the risk $\risk(\theta)$, the
loss $\loss$, and the moments of $\loss$ and its derivatives.  Concretely,
we assume that
\begin{equation*}
  \theta\opt \defeq \argmin_\theta \risk(\theta)
  ~~ \mbox{and} ~~
  \nabla^2 \risk(\theta\opt) \succ 0,
\end{equation*}
that is, the risk functional has strictly positive definite Hessian at
$\theta\opt$, which is thus unique.  Additionally, we have the following
smoothness assumptions on the loss function, which are satisfied by common
loss functions, including the negative log-likelihood for any
exponential family or generalized linear model~\cite{LehmannCa98}.  In the
assumption, we let $\ball$ denote the $\ell_2$-ball of radius $1$ in $\R^d$.
\begin{assumption}
  \label{assumption:smoothness-assumptions}
  For some $\epsilon > 0$, there exists a function
  $\lipobj : \statdomain \to \R_+$ satisfying
  \begin{equation*}
    |\loss(\theta, \statval) - \loss(\theta', \statval)| \le
    \lipobj(\statval) \ltwo{\theta - \theta'}
    ~~ \mbox{for~} \theta, \theta' \in \theta\opt + \epsilon \ball
  \end{equation*}
  and $\E[\lipobj(\statrv)^2] \le \lipobj(P) < \infty$.
  Additionally, there is a function $\liphess$ such that the function
  $\theta \mapsto \loss(\theta, \statval)$ has $\liphess(\statval)$-Lipschitz
  continuous Hessian (with respect to the Frobenius norm) on
  $\theta\opt + \epsilon \ball$, where $\E[\liphess(\statrv)^2] < \infty$.
\end{assumption}

Then, recalling the robust estimator~\eqref{eqn:solve-robust} as the minimizer
of $\risk_n(\theta, \mc{P}_n)$, we have the following theorem, which
we prove in Section~\ref{sec:proof-asymptotic-convergence}.
\begin{theorem}
  \label{theorem:asymptotic-convergence}
  Let Assumption~\ref{assumption:smoothness-assumptions} hold, and
  let the sequence $\robsol$ be defined by
  $\robsol \in \argmin_\theta \risk_n(\theta, \mc{P}_n)$. Define
  \ifdefined\useaosstyle
  \begin{align*}
    b(\theta\opt) & \defeq
    \frac{\cov(\nabla_\theta \loss(\theta\opt, \statrv),
      \loss(\theta\opt, \statrv))}{\sqrt{\var(\loss(\theta\opt, \statrv))}} \\
    \Sigma(\theta\opt) & = \left(\nabla^2 \risk(\theta\opt)\right)^{-1}
    \cov(\nabla\loss(\theta\opt, \statrv)) 
    \left(\nabla^2 \risk(\theta\opt)\right)^{-1}.
  \end{align*}
  \else
  \begin{equation*}
    b(\theta\opt) \defeq
    \frac{\cov(\nabla_\theta \loss(\theta\opt, \statrv),
      \loss(\theta\opt, \statrv))}{\sqrt{\var(\loss(\theta\opt, \statrv))}}
    ~~ \mbox{and} ~~
    \Sigma(\theta\opt) = \left(\nabla^2 \risk(\theta\opt)\right)^{-1}
    \cov(\nabla\loss(\theta\opt, \statrv)) 
    \left(\nabla^2 \risk(\theta\opt)\right)^{-1}.
  \end{equation*}
  \fi
  Then
  $\robsol \cas \theta\opt$
  and
  \begin{equation*}
    \sqrt{n} (\robsol - \theta\opt)
    \cd \normal\left(-\sqrt{2 \tol} \, b(\theta\opt),
      \Sigma(\theta\opt) \right)
  \end{equation*}
\end{theorem}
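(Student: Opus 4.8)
The plan is to reduce the nonsmooth robustly-regularized estimator to an ordinary M-estimation problem for the smooth surrogate
\begin{equation*}
  M_n(\theta) \defeq \E_\emp[\loss(\theta, \statrv)]
  + \sqrt{\frac{2 \tol}{n} \var_\emp(\loss(\theta, \statrv))},
\end{equation*}
and then carry out the standard local expansion. The key structural point is that, although Theorem~\ref{theorem:variance-expansion} only guarantees $\risk_n(\theta, \mc{P}_n) = M_n(\theta) + \varepsilon_n(\theta)$ with $\varepsilon_n \le 0$ and $\varepsilon_n = O_P(1/n)$ in general, an $O_P(1/n)$ perturbation of the objective whose \emph{gradient} is only $O_P(1/\sqrt{n})$ could move the minimizer on the $1/\sqrt{n}$ scale, so we actually need $\varepsilon_n \equiv 0$ uniformly on a fixed neighborhood of $\theta\opt$. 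Since $\nabla^2\risk(\theta\opt) \succ 0$ and $b(\theta\opt)$ is well defined, we have $\var(\loss(\theta\opt, \statrv)) > 0$, so by continuity of $\theta \mapsto \var(\loss(\theta, \statrv))$ there is a ball $\mc{U} = \theta\opt + \delta \ball$ on which $\var(\loss(\theta, \statrv)) \ge \stdevthresh^2 > 0$; applying Theorem~\ref{theorem:uniform-variance-expansion} to $\fclass = \{\loss(\theta, \cdot) : \theta \in \mc{U}\}$, which has finite $\linfstatnorm{\cdot}$-covering numbers by the local Lipschitz hypothesis of Assumption~\ref{assumption:smoothness-assumptions} (cf.\ Example~\ref{example:lipschitz-covering}), shows that on an event of probability tending to one, $\risk_n(\theta, \mc{P}_n) = M_n(\theta)$ for all $\theta \in \mc{U}$. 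A uniform law of large numbers also keeps $\var_\emp(\loss(\theta, \statrv)) \ge \stdevthresh^2/2$ on $\mc{U}$ eventually, so there $M_n$ is twice continuously differentiable with $\nabla^2 M_n(\theta) = \nabla^2\E_\emp[\loss(\theta, \statrv)] + O(n^{-1/2})$ and hence eventually strictly convex near $\theta\opt$.

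Next I would record the consistency $\what\theta_n \cas \theta\opt$: the sandwich $\E_\emp[\loss(\theta, \statrv)] \le \risk_n(\theta, \mc{P}_n) \le \E_\emp[\loss(\theta, \statrv)] + \sqrt{\tfrac{2\tol}{n}\var_\emp(\loss(\theta, \statrv))}$ from Theorem~\ref{theorem:variance-expansion}, together with a uniform law of large numbers, forces $\risk_n(\cdot, \mc{P}_n) \to \risk(\cdot)$ uniformly on compacts, and since $\theta\opt$ is the well-separated unique minimizer of $\risk$ a standard argmin-consistency argument applies (when $\Theta$ is not compact this step leans on whatever global integrability or coercivity is assumed). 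Consequently $\what\theta_n \in \mc{U}$ eventually, and since $M_n$ is smooth and strictly convex on $\mc{U}$ and there agrees with $\risk_n(\cdot, \mc{P}_n)$, the estimator $\what\theta_n$ eventually solves the stationarity equation $\nabla M_n(\what\theta_n) = 0$.

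The core computation is then routine M-estimation. Using $\nabla M_n(\theta) = \nabla\E_\emp[\loss(\theta, \statrv)] + \sqrt{\tfrac{2\tol}{n}}\,\what b_n(\theta)$, where $\what b_n(\theta)$ is the empirical plug-in for $b$---the empirical covariance of $\nabla_\theta\loss(\theta, \statrv)$ and $\loss(\theta, \statrv)$ divided by the empirical standard deviation of $\loss(\theta, \statrv)$---a first-order Taylor expansion of $\nabla\E_\emp[\loss(\cdot, \statrv)]$ about $\theta\opt$, evaluated at $\what\theta_n$, gives
\begin{equation*}
  0 = \sqrt{n}\,\nabla\E_\emp[\loss(\theta\opt, \statrv)]
  + \bar H_n \sqrt{n}(\what\theta_n - \theta\opt)
  + \sqrt{2\tol}\,\what b_n(\what\theta_n),
  \qquad
  \bar H_n \defeq \int_0^1 \nabla^2\E_\emp\!\left[\loss(\theta\opt + t(\what\theta_n - \theta\opt), \statrv)\right] dt.
\end{equation*}
By the central limit theorem, using $\nabla\risk(\theta\opt) = 0$, $\sqrt{n}\,\nabla\E_\emp[\loss(\theta\opt, \statrv)] \cd \normal(0, \cov(\nabla\loss(\theta\opt, \statrv)))$; by consistency, a uniform law of large numbers for the Hessians near $\theta\opt$ (using the moment and continuity conditions of Assumption~\ref{assumption:smoothness-assumptions}), and continuity of $\theta \mapsto \nabla^2\risk(\theta)$, we get $\bar H_n \cas \nabla^2\risk(\theta\opt)$; and by consistency with the continuous mapping theorem, $\what b_n(\what\theta_n) \cas b(\theta\opt)$. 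Solving for $\sqrt{n}(\what\theta_n - \theta\opt)$ and invoking Slutsky's theorem then gives
\begin{equation*}
  \sqrt{n}(\what\theta_n - \theta\opt)
  \cd - \left(\nabla^2\risk(\theta\opt)\right)^{-1}
  \left[\normal\left(0, \cov(\nabla\loss(\theta\opt, \statrv))\right) + \sqrt{2\tol}\, b(\theta\opt)\right]
  = \normal\left(-\sqrt{2\tol}\, \left(\nabla^2\risk(\theta\opt)\right)^{-1} b(\theta\opt),\ \Sigma(\theta\opt)\right),
\end{equation*}
the limit distribution claimed in the theorem. (Equivalently, one may bypass the stationarity equation and apply the argmin continuous-mapping theorem to $h \mapsto n(M_n(\theta\opt + h/\sqrt{n}) - M_n(\theta\opt))$, whose limit is the strictly convex quadratic $h \mapsto \normal(0, \cov(\nabla\loss(\theta\opt, \statrv)))^\top h + \tfrac{1}{2}\, h^\top \nabla^2\risk(\theta\opt)\, h + \sqrt{2\tol}\, b(\theta\opt)^\top h$, whose minimizer has the same law.)

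The main obstacle is the first step: obtaining the \emph{exact} variance expansion uniformly over a fixed neighborhood of $\theta\opt$ with probability tending to one (so that no $O_P(1/n)$ error in the objective can leak into the $1/\sqrt{n}$-order asymptotics), and verifying that $\var_\emp(\loss(\theta, \statrv))$ stays bounded away from zero there so that the square root in $M_n$ is differentiable and the chain-rule form of $\nabla M_n$ is valid. Once the problem is localized to the smooth, strictly convex surrogate $M_n$ on $\mc{U}$, everything downstream is textbook; a secondary wrinkle is supplying global consistency when $\Theta$ is not assumed compact.
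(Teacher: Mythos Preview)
Your overall strategy---localize to a smooth surrogate $M_n$ via an exact variance expansion, establish consistency, then run the standard Taylor/Slutsky argument---matches the paper's. The downstream M-estimation calculation is essentially identical to theirs.

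There is, however, a genuine gap in your first step. You invoke Theorem~\ref{theorem:uniform-variance-expansion} on $\fclass = \{\loss(\theta,\cdot):\theta\in\mc{U}\}$, citing Example~\ref{example:lipschitz-covering} for finite $\linfstatnorm{\cdot}$-covering numbers. But Theorem~\ref{theorem:uniform-variance-expansion} is stated only for \emph{bounded} classes $f:\statdomain\to[\zbound_0,\zbound_1]$, and its probability bound depends on $\zbound$; Assumption~\ref{assumption:smoothness-assumptions} imposes no boundedness on $\loss$. Likewise Example~\ref{example:lipschitz-covering} needs a single Lipschitz constant $L$ valid for all $\statval$, whereas Assumption~\ref{assumption:smoothness-assumptions} only gives a random $\lipobj(\statval)$ with $\E[\lipobj^2]<\infty$; with $\sup_\statval \lipobj(\statval)$ possibly infinite, the $\linfstatnorm{\cdot}$-covering number of $\fclass$ need not be finite. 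So neither ingredient applies as written.

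The paper avoids this by going back to the elementary characterization~\eqref{eqn:when-choosing-u-is-possible}: the exact expansion holds whenever $\sqrt{2\tol}\,\min_i(Z_i(\theta)-\wb Z(\theta))/\sqrt{n\,s_n^2(\theta)}\ge -1$. Using only $\E[\loss(\theta\opt,\statrv)^2]<\infty$ and $\E[\lipobj(\statrv)^2]<\infty$, they show $n^{-1/2}\max_i|Z_i(\theta\opt)|\to 0$ and $n^{-1/2}\max_i\lipobj(\statrv_i)\to 0$ a.s.\ (Owen's Lemma~\ref{lemma:two-moments-as-0}), and combine this with the Lipschitz behavior of $\theta\mapsto s_n(\theta)$ to get the sufficient condition uniformly on $\theta\opt+\epsilon\ball$ eventually (Lemma~\ref{lemma:uniform-max-event-eventually}). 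This is the piece you need to replace your appeal to Theorem~\ref{theorem:uniform-variance-expansion}. The paper's consistency argument (Lemma~\ref{lemma:as-convergence-theta-hat}) also uses convexity of $\loss(\cdot,\statval)$ to pass from the local comparison to the global minimizer, which addresses your ``secondary wrinkle'' about noncompact $\Theta$.

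One further remark: your final display has mean $-\sqrt{2\tol}\,(\nabla^2\risk(\theta\opt))^{-1}b(\theta\opt)$, which is what the stationarity equation actually delivers. The theorem statement (and the paper's last displayed line) records the mean as $-\sqrt{2\tol}\,b(\theta\opt)$, without the Hessian inverse; your algebra is the internally consistent one.
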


The asymptotic variance $\Sigma(\theta\opt)$ in
Theorem~\ref{theorem:asymptotic-convergence} is generally unimprovable, as
made apparent by Le Cam's local asymptotic normality theory and the H\'ajek-Le
Cam local minimax theorems~\cite{VanDerVaartWe96}. Thus,
Theorem~\ref{theorem:asymptotic-convergence} shows that the robust regularized
estimator~\eqref{eqn:solve-robust} has some efficiency loss, but it is only in
the bias term.  We explore this a bit more in the context of the risk of
$\robsol$.  Letting $W \sim \normal(0, \Sigma(\theta\opt))$, as an
immediate corollary to this theorem, the delta-method implies that
\begin{equation}
  \label{eqn:delta-method-application}
  n \left[\risk(\robsol) - \risk(\theta\opt)\right]
  \cd \half \norm{\sqrt{2 \tol} \, b(\theta\opt) + W}_{\nabla^2 \risk(\theta\opt)}^2,
\end{equation}
where we recall that $\norm{x}_A^2 = x^\top A x$. 
This follows from a Taylor expansion, because
$\nabla \risk(\theta\opt) = 0$ and so
$\risk(\theta) - \risk(\theta\opt) = \half (\theta - \theta\opt)^\top \nabla^2
\risk(\theta\opt) (\theta - \theta\opt) + o(\norms{\theta - \theta\opt}^2)$, or
\begin{align*}
  n (\risk(\robsol) - \risk(\theta\opt))
  & = n \left(
    \half (\robsol - \theta\opt)^\top \nabla^2 \risk(\theta\opt)
    (\robsol - \theta\opt)
    + o(\norms{\robsol - \theta\opt}^2)
    \right) \\
  & = \half \left(\sqrt{n}(\robsol - \theta\opt) \right)^\top
    \nabla^2 \risk(\theta\opt) \left(\sqrt{n}(\robsol  - \theta\opt) \right)
    + o_P(1) \\
  & \cd \half (\sqrt{2\tol} \, b(\theta\opt) + W)^\top \nabla^2 \risk(\theta\opt)
    (\sqrt{2 \tol} \, b(\theta\opt) + W)
\end{align*}
by Theorem~\ref{theorem:asymptotic-convergence}.

The limiting random
variable in expression~\eqref{eqn:delta-method-application} has expectation
\ifdefined\useaosstyle
\begin{align*}
  & \half \E[\norms{\sqrt{2\tol} b(\theta\opt) + W}_{\nabla^2 \risk(\theta\opt)}^2] \\
  & = \tol
    b(\theta\opt)^\top \nabla^2 \risk(\theta\opt) b(\theta\opt) + \half \tr(\nabla^2
    \risk(\theta\opt)^{-1} \cov(\loss(\theta\opt, \statrv)),
\end{align*}
\else
\begin{equation*}
  \half \E[\norms{\sqrt{2\tol} b(\theta\opt) + W}_{\nabla^2 \risk(\theta\opt)}^2] = \tol
  b(\theta\opt)^\top \nabla^2 \risk(\theta\opt) b(\theta\opt) + \half \tr(\nabla^2
  \risk(\theta\opt)^{-1} \cov(\loss(\theta\opt, \statrv)),
\end{equation*}
\fi
while the classical empirical risk minimization procedure (standard
$M$-estimation)~\cite{LehmannCa98, VanDerVaartWe96} has limiting mean-squared
error
$\half \tr(\nabla^2 \risk(\theta\opt)^{-1} \cov(\loss(\theta\opt, \statrv)))$.
Thus there is an additional
$\tol \norm{b(\theta\opt)}_{\nabla^2 \risk(\theta\opt)}^2$ penalty in the
asymptotic risk (at a rate of $1/n$) for the robustly-regularized estimator.
An inspection of the proof of Theorem~\ref{theorem:asymptotic-convergence}
reveals that
$b(\theta\opt) = \nabla_\theta \sqrt{\var(\loss(\theta\opt, \statrv))}$;
if the variance of the loss is stable near $\theta\opt$, so that
moving to a parameter $\theta = \theta\opt + \Delta$ for some small $\Delta$
has little effect on the variance, then the standard loss terms dominate, and
robust regularization has asymptotically little effect. On the other hand,
highly unstable loss functions for which
$\nabla_\theta \sqrt{\var(\loss(\theta\opt, \statrv))}$ is large yield
substantial bias.

We conclude our study of the asymptotics with a (to us) somewhat surprising
example. Consider the classical linear regression setting in which
$y = x^\top \theta\opt + \varepsilon$, where
$\varepsilon \sim \normal(0, \sigma^2)$. Using the standard squared error
loss $\loss(\theta, (x, y)) = \half (\theta^\top x - y)^2$, we obtain that
\begin{equation*}
  \nabla \loss(\theta\opt, (x, y))
  = (x^\top\theta\opt - y) x
  = (x^\top \theta\opt - x^\top \theta\opt - \varepsilon) x
  = -\varepsilon x,
\end{equation*}
while $\loss(\theta\opt, (x, y)) = \half \varepsilon^2$. The covariance
$\cov(\varepsilon X, \varepsilon^2) = \E[\varepsilon X (\varepsilon^2 -
\sigma^2)] = 0$
by symmetry of the error distribution, and so---in the special classical case
of correctly specified linear regression---the bias term $b(\theta\opt) = 0$
for linear regression in Theorem~\ref{theorem:asymptotic-convergence}.  That
is, the robustly regularized estimator~\eqref{eqn:solve-robust} is
asymptotically efficient.


\section{Experiments}
\label{sec:experiments}

\blfootnote{Code is available at \url{https://github.com/hsnamkoong/robustopt}.}
We present three experiments in this section. The first is a small simulation
example, which serves as a proof of concept allowing careful comparison of
standard empirical risk minimization (ERM) strategies to our
variance-regularized approach. The latter two are classification problems on
real datasets; for both of these we compare performance of robust
solution~\eqref{eqn:solve-robust} to its ERM counterpart.

\subsection{Minimizing the robust objective}

As a first step, we give a brief description of our (essentially standard)
method for solving the robust risk problem. Our work in this paper focuses
mainly on the properties of the robust objective~\eqref{eqn:robust-risk} and
its minimizers~\eqref{eqn:solve-robust}, so we only briefly describe the
algorithm we use; we leave developing faster and more accurate specialized
methods to further work. To solve the robust problem, we use a gradient
descent-based procedure, and we focus on the case in which the empirical
sampled losses $\{\loss(\theta, \statrv_i)\}_{i = 1}^n$ have non-zero variance
for all parameters $\theta \in \Theta$, which is the case for all of our
experiments.

Recall the definition of the subdifferential
$\partial f(\theta) = \{g \in \R^d : f(\theta') \ge f(\theta) + \<g, \theta' -
\theta\> ~ \mbox{for all}~\theta'\}$,
which is simply the gradient for differentiable functions $f$.  A standard
result in convex analysis~\cite[Theorem VI.4.4.2]{HiriartUrrutyLe93ab} is that
if the vector $p^* \in \R^n_+$ achieving the supremum in the
definition~\eqref{eqn:robust-risk} of the robust risk is unique, then
\begin{equation*}
  \partial_\theta \risk_n(\theta, \mc{P}_n)
  = \partial_\theta \sup_{P \in \mc{P}_n} \E_P[\loss(\theta; \statrv)]
  = \sum_{i = 1}^n p_i^* \partial_\theta \loss(\theta; \statrv_i),
\end{equation*}
where the final summation is the standard Minkowski sum of sets. As
this maximizing vector $p$ is indeed unique whenever
$\var_\emp(\loss(\theta; \statrv)) \neq 0$, we see that for all our
problems, so long as $\loss$ is differentiable, so too is
$\risk_n(\theta, \mc{P}_n)$ and
\ifdefined\useaosstyle
\begin{equation}
  \label{eqn:derivative-robust-risk}
  \nabla_\theta \risk_n(\theta, \mc{P}_n) = \sum_{i = 1}^n p_i^* \nabla_\theta
  \loss(\theta; \statrv_i)
\end{equation}
where
$p^* = \argmax_{p \in \mc{P}_n} \bigg\{\sum_{i = 1}^n p_i \loss(\theta;
\statrv_i) \bigg\}$.
\else
\begin{equation}
  \label{eqn:derivative-robust-risk}
  \nabla_\theta \risk_n(\theta, \mc{P}_n) = \sum_{i = 1}^n p_i^* \nabla_\theta
  \loss(\theta; \statrv_i)
  ~~ \mbox{where} ~~
  p^* = \argmax_{p \in \mc{P}_n}
  \bigg\{\sum_{i = 1}^n p_i \loss(\theta; \statrv_i) \bigg\}.
\end{equation}
\fi In order to perform gradient descent on the risk
$\risk_n(\theta, \mc{P}_n)$, then, by
equation~\eqref{eqn:derivative-robust-risk} we require only the computation of
the worst-case distribution $p^*$. By taking the dual of the
maximization~\eqref{eqn:derivative-robust-risk}, this is an efficiently
solvable convex problem; for completeness, we provide a procedure for this
computation in Section~\ref{appendix:efficient-alg} that requires
time 
$O(n \log n + \log \frac{1}{\epsilon} \log n)$ to compute an
$\epsilon$-accurate solution to the
maximization~\eqref{eqn:derivative-robust-risk}. As all our examples have
smooth objectives, we perform gradient descent on the robust risk
$\risk_n(\cdot, \mc{P}_n)$, with stepsizes chosen by a backtracking (Armijo)
line search~\cite[Chapter 9.2]{BoydVa04}.

\subsection{Simulation experiment}

For our simulation experiment, we use a quadratic loss with linear
perturbation. For $v, \statval \in \R^d$, define the loss
$\loss(\theta; \statval) = \half \ltwo{\theta - v}^2 + \statval^\top(\theta -
v)$.
We set $d = 50$ and take $\statrv \sim \uniform(\{-B, B\}^d)$, varying $B$ in
the experiment. For concreteness, we let the domain
$\Theta = \{\theta \in \R^d : \ltwo{\theta} \le \radius\}$ and set
$v = \frac{\radius}{2\sqrt{d}} \ones$, so that $v \in \interior \Theta$; we
take $\radius = 10$.  Notably, standard regularization strategies, such as
$\ell_1$ or $\ell_2$-regularization, pull $\theta$ toward 0, while the
variance of $\loss(\theta; \statrv)$ is minimized by $\theta = v$ (thus
naturally advantaging the variance-based regularization we consider, as
$\risk(v) = \inf_\theta \risk(\theta) = 0$). Moreover, as $X$ is pure noise,
this is an example where we expect variance regularization to be particularly
useful. We choose $\delta = .05$ and set $\tol$
as in
Eq.~\eqref{eqn:selecting-tol-for-coverage} (using that $\loss$ is
$(3 \radius + \sqrt{d} B)$-Lipschitz) to obtain robust coverage with
probability at least $1 - \delta$.  In our experiments, we obtained
$100\%$ coverage in the sense of~\eqref{eqn:coverage-F}, as the high
probability bound is conservative.

\begin{figure}[ht]
  \begin{center}
    \includegraphics[width=.7\columnwidth]{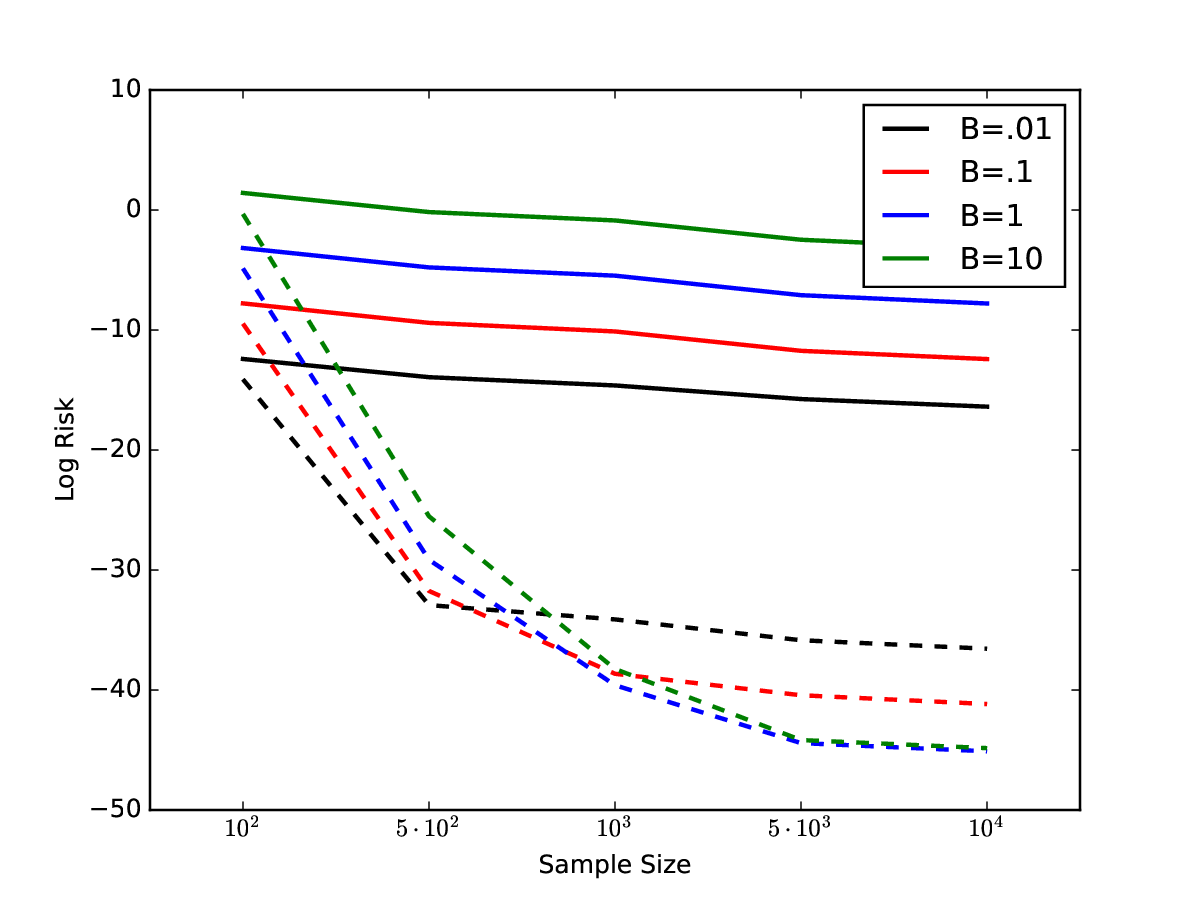} 
    \caption{  \label{fig:sim} 
      Simulation experiment. $\log \E[\risk(\ermsol)]$ is the
      solid lines, in decreasing order from $B = 10$ (top) to $B = .01$ (bottom).
      $\log \E[\risk(\robsol)]$ is the dashed line, in the same vertical ordering
      at sample size $n = 10^2$.}
  \end{center}
\end{figure}

Figure~\ref{fig:sim} summarizes the results.  The robust solution
$\robsol = \argmin_{\theta \in \Theta} \risk_n(\theta, \mc{P}_n)$ always
outperforms the empirical risk minimizer
$\ermsol = \argmin_{\theta \in \Theta} \E_\emp[\loss(\theta, \statrv)]$ in
terms of the true risk
$\E[\loss(\theta, \statrv)] = \half \ltwo{\theta - v}^2$. Each experiment
consists of 1,200 independent replications for each sample size $n$ and value
$B$.  In Tables~\ref{table:sim-mean} and~\ref{table:sim-var}, we display the
risks of $\ermsol$ and $\robsol$ and variances, respectively, computed for the
1,200 independent trials. The gap between the risk of $\ermsol$
and $\robsol$ is siginificant at level $p < .01$ for all sample sizes and values of
$B$ we considered according to a one-sided T-test. Notice also
in Table~\ref{table:sim-var} that the variance of the robust solutions
is substantially smaller than that of the empirical risk minimizer---often
several orders of magnitude smaller for large sample sizes $n$.
This simulation shows that---in a simple setting favorable to
it---our procedure outperforms standard alternatives.

\begin{table}[ht]
  \centering
  \caption{Simulation experiment: Mean risks over 1,200 simulations}
  \label{table:sim-mean}
  \pgfplotstabletypeset[
  col sep=comma,
  string type,
  every head row/.style={%
    before row={\hline 
      & \multicolumn{2}{c}{$B = .01$} 
      & \multicolumn{2}{c}{$B = .1$} 
      & \multicolumn{2}{c}{$B = 1$} 
      & \multicolumn{2}{c}{$B = 10$}
      \\
    },
    after row=\hline
  },
  every last row/.style={after row=\hline},
  columns/sample/.style={column name=$n$, column type=l},
  columns/.01_erm/.style={column name=$\risk(\ermsol)$, column type=l},
  columns/.01_rob/.style={column name=$\risk(\robsol)$, column type=c},
  columns/.1_erm/.style={column name=$\risk(\ermsol)$, column type=l},
  columns/.1_rob/.style={column name=$\risk(\robsol)$, column type=c},
  columns/1_erm/.style={column name=$\risk(\ermsol)$, column type=l},
  columns/1_rob/.style={column name=$\risk(\robsol)$, column type=c},
  columns/10_erm/.style={column name=$\risk(\ermsol)$, column type=l},
  columns/10_rob/.style={column name=$\risk(\robsol)$, column type=c}
  ]{./Experiments/UniformSimulation/uniform.csv}
\end{table}
\begin{table}[ht]
  \centering
  \caption{\label{table:sim-var} Simulation experiment: Variances
    of $\risk(\what{\theta})$ over 1,200 simulations}
  \pgfplotstabletypeset[
  col sep=comma,
  string type,
  every head row/.style={%
    before row={\hline 
      & \multicolumn{2}{c}{$B = .01$} 
      & \multicolumn{2}{c}{$B = .1$} 
      & \multicolumn{2}{c}{$B = 1$} 
      & \multicolumn{2}{c}{$B = 10$}
      \\
    },
    after row=\hline
  },
  every last row/.style={after row=\hline},
  columns/sample/.style={column name=$n$, column type=l},
  columns/.01_erm/.style={column name=ERM, column type=l},
  columns/.01_rob/.style={column name=Robust, column type=c},
  columns/.1_erm/.style={column name=ERM, column type=l},
  columns/.1_rob/.style={column name=Robust, column type=c},
  columns/1_erm/.style={column name=ERM, column type=l},
  columns/1_rob/.style={column name=Robust, column type=c},
  columns/10_erm/.style={column name=ERM, column type=l},
  columns/10_rob/.style={column name=Robust, column type=c}
  ]{./Experiments/UniformSimulation/uniform_variance.csv}
\end{table}

\subsection{Protease cleavage experiments}

For our second experiment, we compare our robust regularization procedure to
other regularizers using the HIV-1 protease cleavage dataset from the UCI
ML-repository~\cite{Lichman13}.  In this binary classification task, one is
given a string of amino acids (a protein) and a featurized representation of
the string of dimension $d = 50960$, and the goal is to predict whether the
HIV-1 virus will cleave the amino acid sequence in its central position.  We
have a sample of $n = 6590$ observations of this process, where the class
labels are somewhat skewed: there are $1360$ examples with label $Y = +1$
(HIV-1 cleaves) and $5230$ examples with $Y=-1$ (does not cleave).

We use the logistic loss
$\loss(\theta; (x, y)) = \log (1 + \exp(-y\theta^\top x))$. We compare the
performance of different constraint sets $\Theta$ by taking
\begin{equation*}
  \Theta
  = \left\{ \theta \in \R^d: a_1 \lone{\theta} + a_2 \ltwo{\theta} \le r
\right\},
\end{equation*}
which is equivalent to elastic net regularization~\cite{ZouHa05}, while
varying $a_1$, $a_2$, and $r$. We experiment with $\ell_1$-constraints
($a_1 = 1, a_2 = 0$) with
\ifdefined\useaosstyle
$r \in \{ 50, 100, 500,\\ 1000, 5000\}$,
\else
$r \in \{ 50, 100, 500, 1000, 5000\}$,
\fi
$\ell_2$-constraints ($a_1 = 0, a_2 = 1$) with $r \in \{ 5, 10, 50, 100, 500\}$,
elastic net ($a_1 = 1, a_2 = 10$) with
$r \in \{ 100, 200, 1000, 2000, 10000\}$, our robust regularizer with
$\tol \in \{ 100, 1000, 10000, 50000, 100000 \}$ and our robust regularizer
coupled with the $\ell_1$-constraint ($a_1 = 1, a_2 = 0$) with $r = 100$.
Though we use a convex surrogate (logistic loss), we measure performance of
the classifiers using the $0$-$1$ (misclassification) loss
$\indics{\sign(\theta^T \statval) y \le 0}$.  For validation, we perform 50
experiments, where in each experiment we randomly select $9/10$ of the data to
train the model, evaluating its performance on the held out fraction
(test).

We plot results summarizing these experiments in Figure~\ref{fig:hiv-reg}.
The horizontal axis in each figure indexes our choice of regularization value
(so ``\texttt{Regularizer = 1}'' for the $\ell_1$-constrained problem
corresponds to $\radius = 50$). The figures show that the robustly regularized
risk provides a different type of protection against overfitting than standard
regularization or constraint techniques do: while other regularizers
underperform in heavily constrained settings, the robustly regularized
estimator $\robsol$ achieves low classification error for all values of $\tol$
(Figure~\ref{fig:hiv-reg}(b)).  Notably, even when coupled with a fairly
stringent $\ell_1$-constraint ($r = 100$), robust regularization has
perofrmance better than $\ell_1$ except for large values $r$, especially on
the rare label $Y = +1$ (Figure~\ref{fig:hiv-reg} (d) and (f)).

We investigate the effects of the robust regularizer with a slightly different
perspective in Figure~\ref{fig:hiv}, where we use
$\Theta = \{\theta : \lone{\theta} \le r\}$ with $r = 100$ for the constraint
set for each experiment. The horizontal axis indicates the tolerance $\tol$ we
use in construction of the robust estimator $\robsol$, where \texttt{ERM}
means $\tol = 0$. In Fig.~\ref{fig:hiv}(a), we plot the logistic risk
$\risk(\what{\theta}) = \E[\loss(\what{\theta}, (X, Y))]$ for the train and
test distribution. We also plot the upper confidence bound
$\risk_n(\theta, \mc{P}_n)$ in this plot, which certainly over-estimates the
test risk---we hope to tighten this overestimate in future work.  In
Figure~\ref{fig:hiv}(b), we plot the misclassification error on train and test
for different values of $\tol$, along with $2$-standard-error intervals for
the $50$ runs. Figures~\ref{fig:hiv}(c) and (d) show the error rates
restricted to examples from the uncommon (c) and common (d) classes.  In
Table~\ref{table:hiv} we give explicit error rates and logistic risk values
for the different procedures. Due to the small size of the test dataset
($n_{\rm test} = 659$), the deviation across folds is somewhat large.

\begin{figure}[H]
  \begin{center}
    \begin{tabular}{cc}
      \hspace{-.4cm}
      \includegraphics[width=.5\columnwidth]{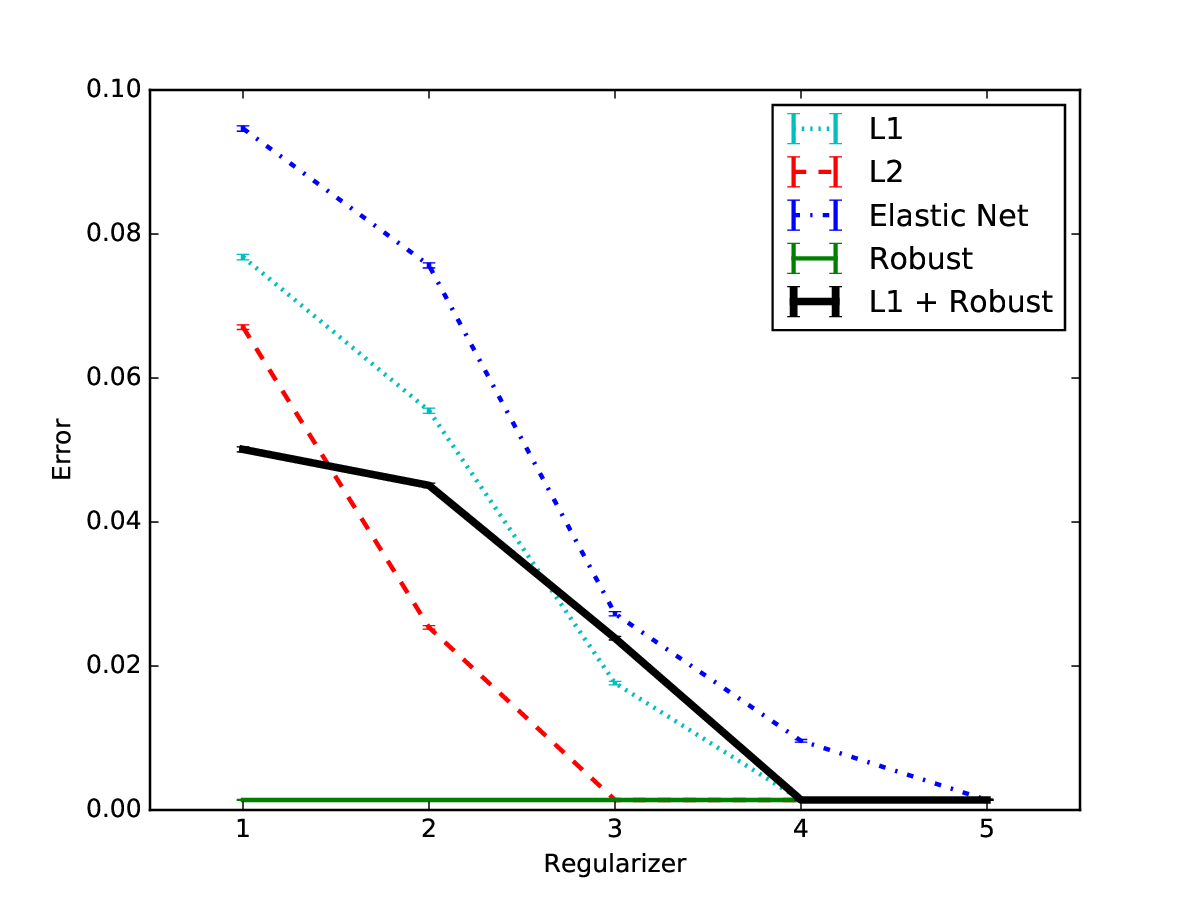} 
      & 
      \includegraphics[width=.5\columnwidth]{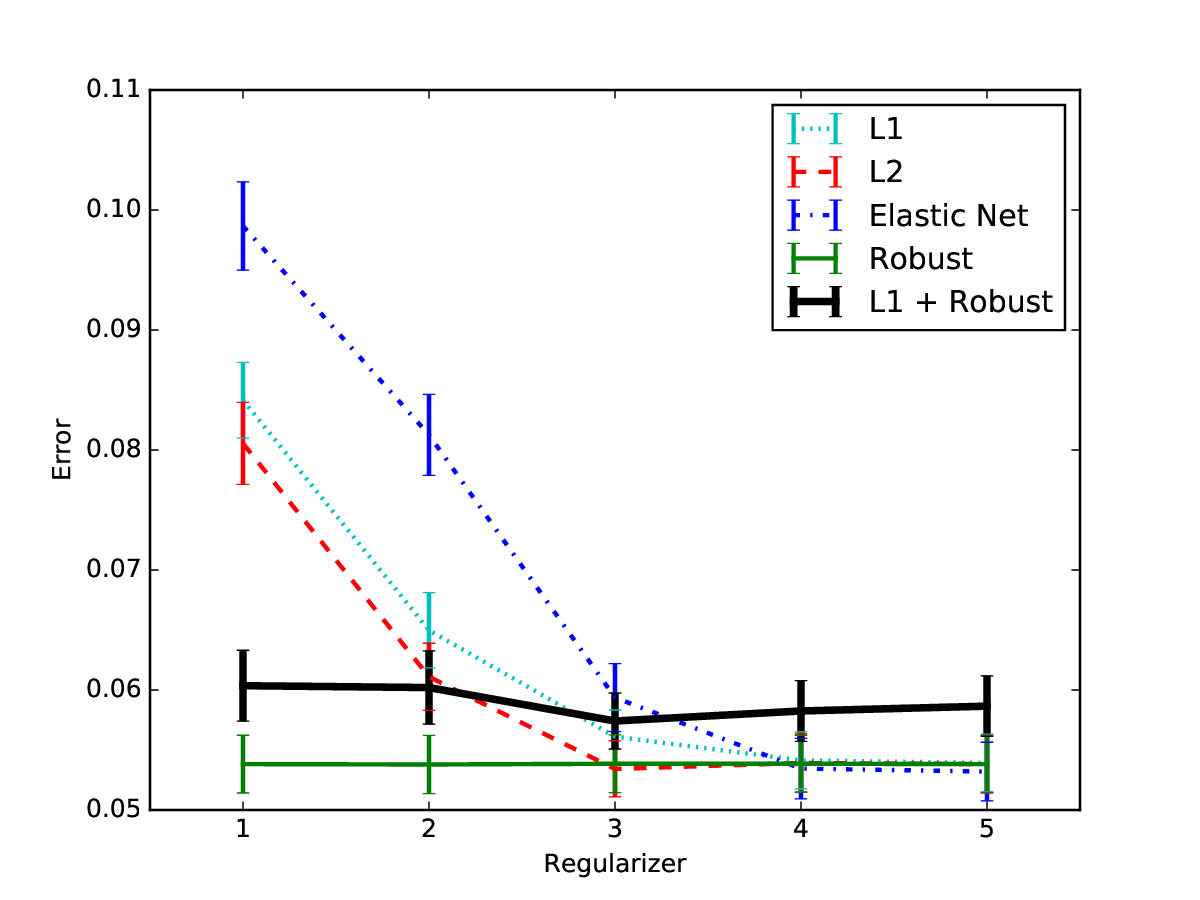}  \\
      (a) Train error 
      & (b) Test error \\
      \hspace{-.4cm}
      \includegraphics[width=.5\columnwidth]{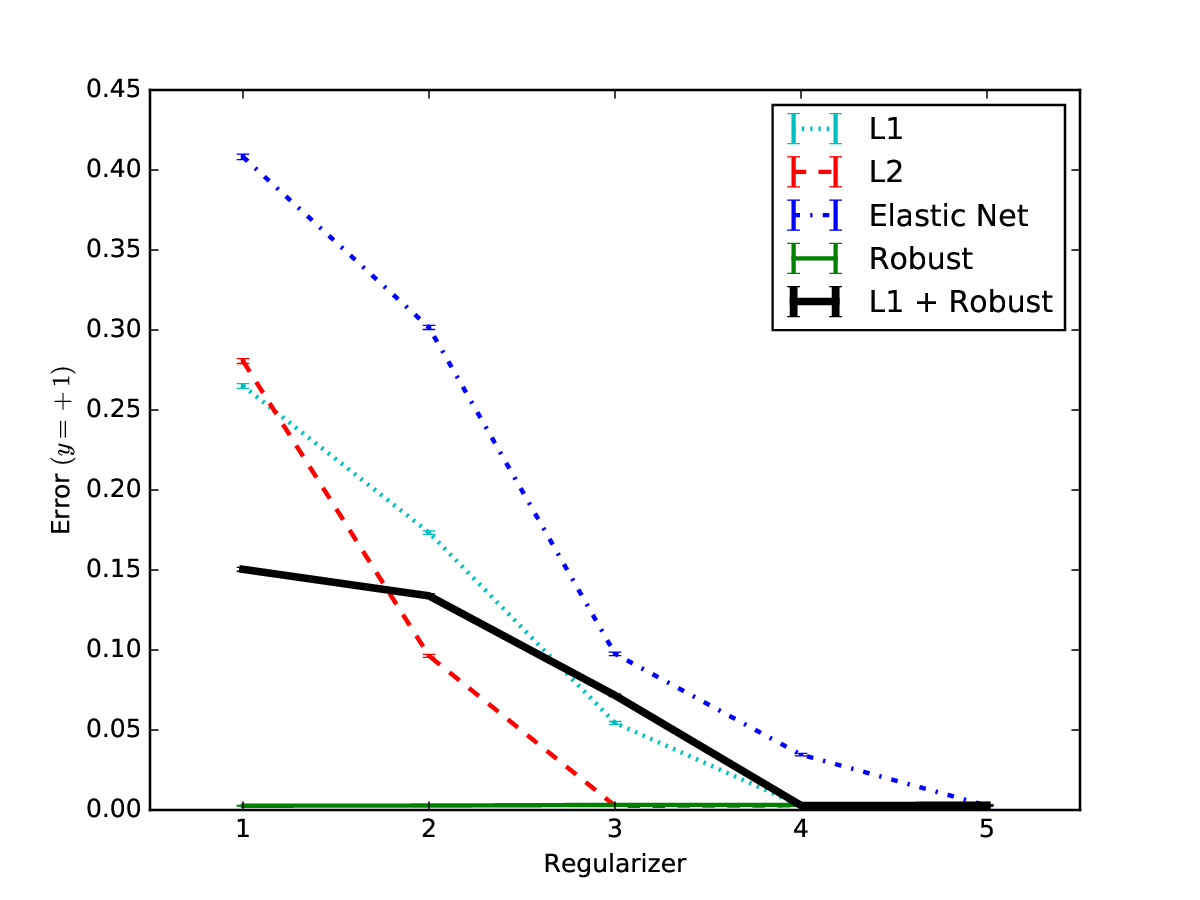} 
      & 
        \includegraphics[width=.5\columnwidth]{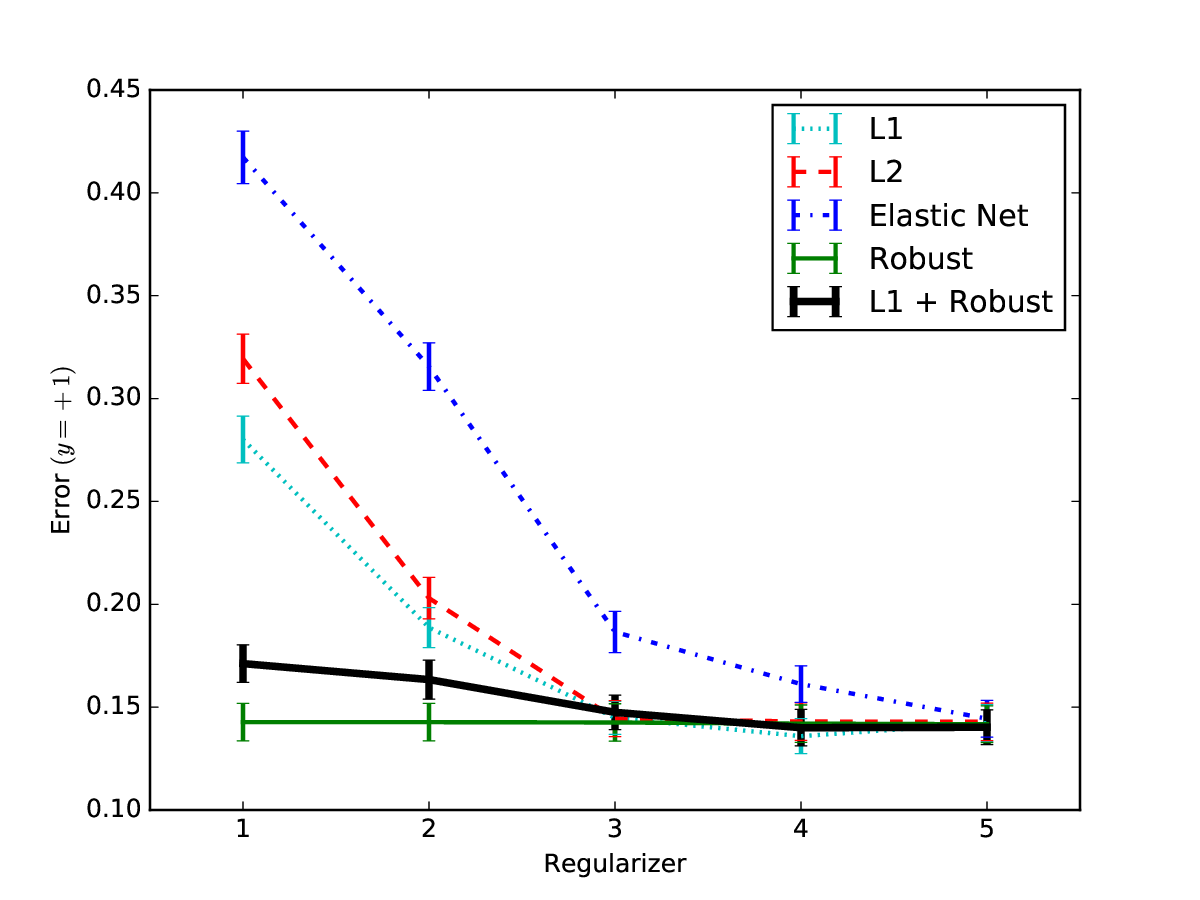}  \\
      (c) Train error on rare class ($Y_i = +1$)
      & (d) Test error on rare class ($Y_i = +1$) \\
      \hspace{-.4cm}
      \includegraphics[width=.5\columnwidth]{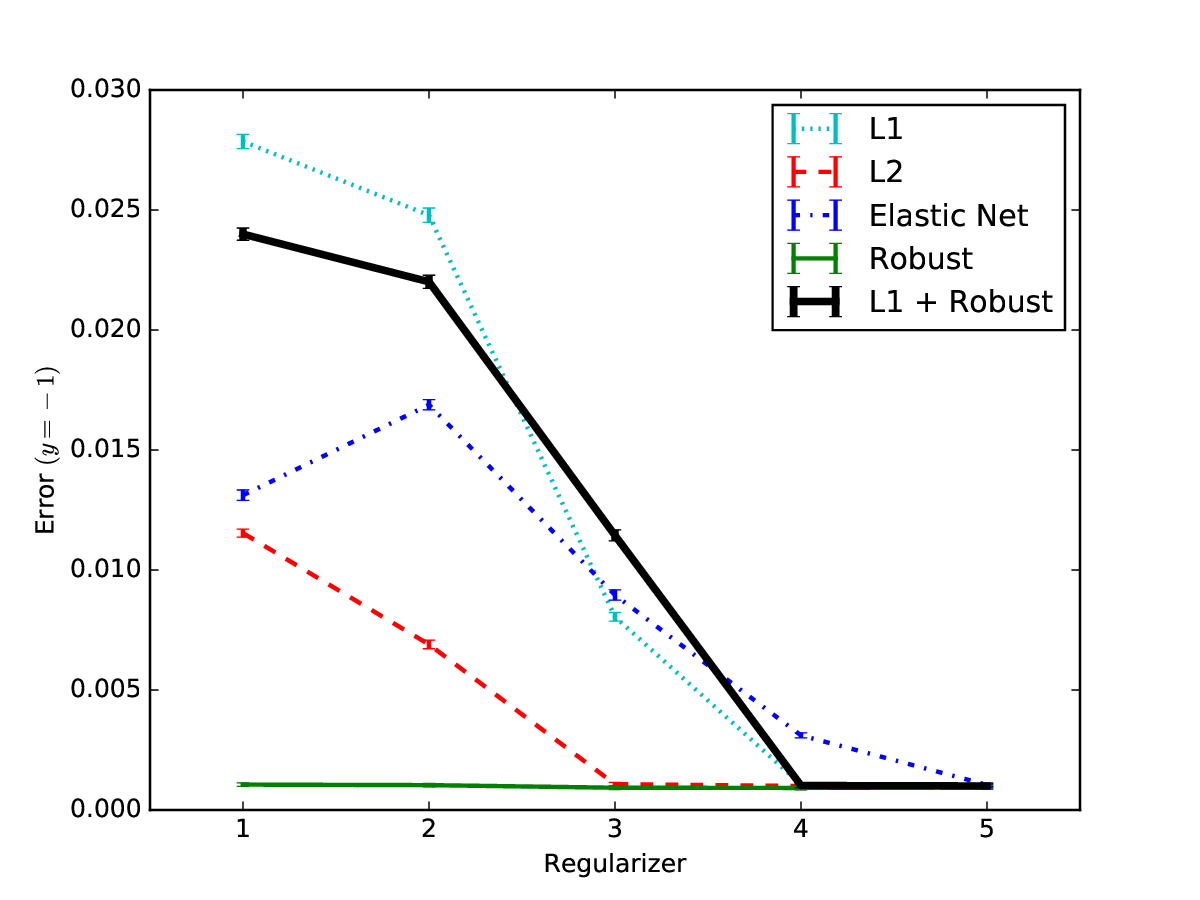} 
      & 
        \includegraphics[width=.5\columnwidth]{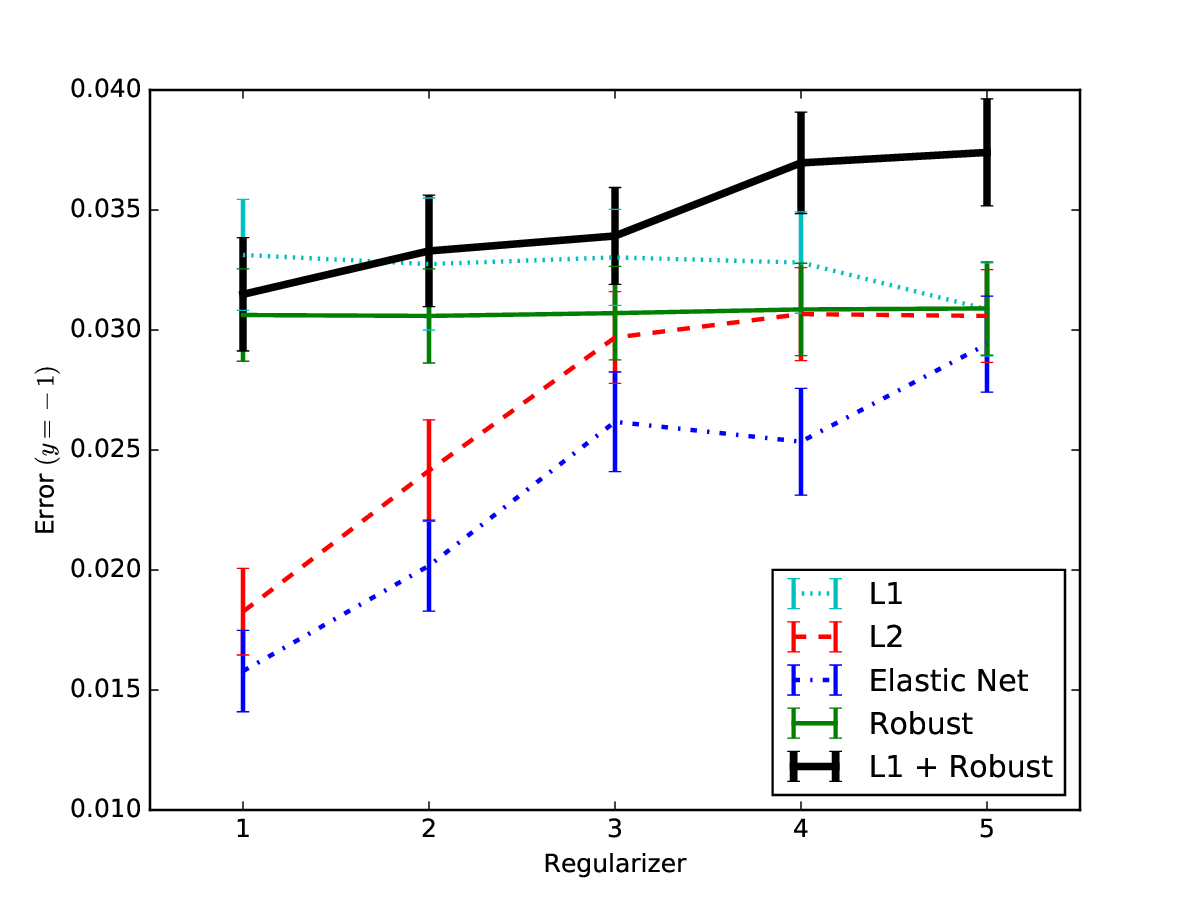}  \\
      (e) Train error on common class ($Y_i = -1$)
      & (f) Test error on common class ($Y_i = -1$)
    \end{tabular}
    \caption{ \label{fig:hiv-reg} HIV-1 Protease Cleavage plots ($2$-standard
      error confidence bars). Comparison of misclassification error rates
      among different regularizers.}
  \end{center}
\end{figure}

\begin{figure}[ht]
  \begin{center}
    \begin{tabular}{cc}
      \hspace{-.4cm}
      \includegraphics[width=.5\columnwidth]{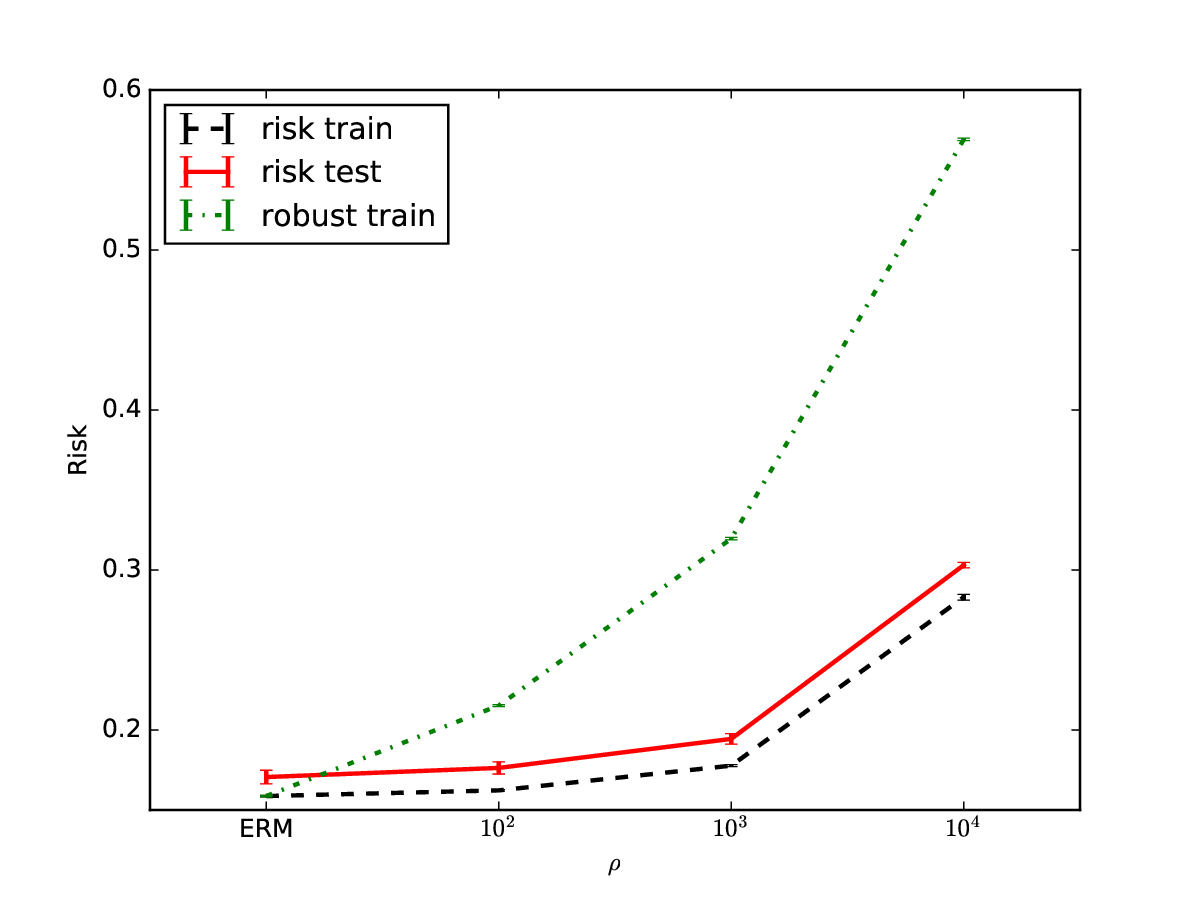} 
      & 
      \includegraphics[width=.5\columnwidth]{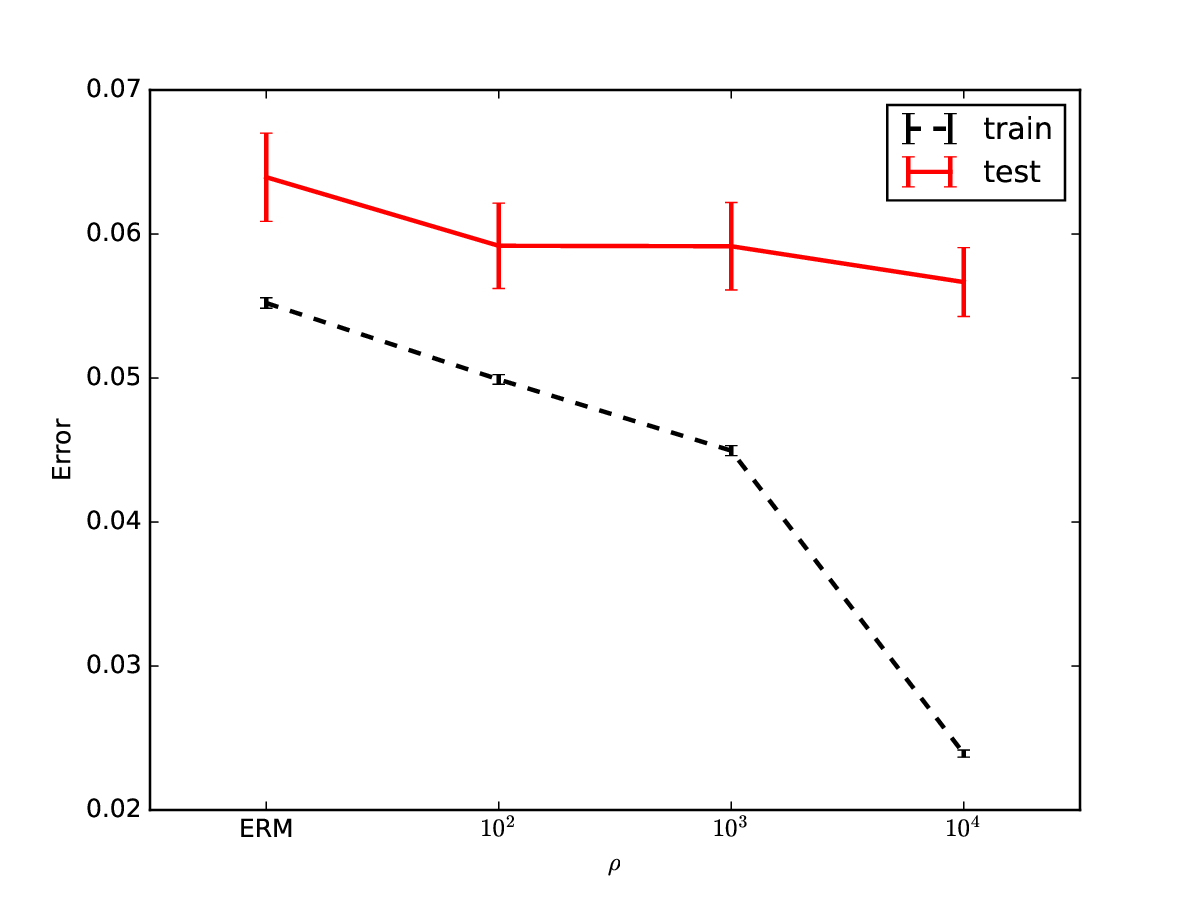}  \\
      (a) Logistic risk and confidence bound
      & (b) Misclassification error rate \\
      \hspace{-.4cm}
      \includegraphics[width=.5\columnwidth]{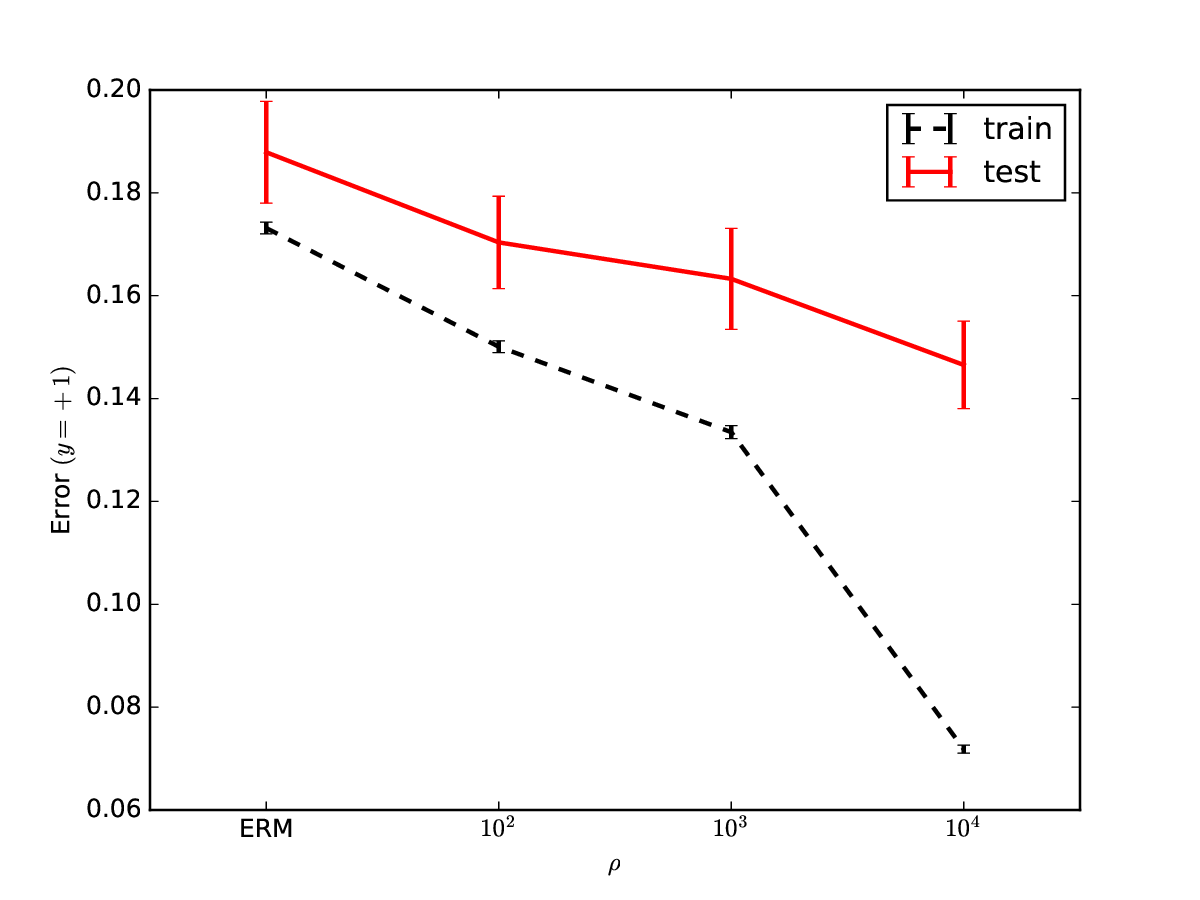} 
      & 
        \includegraphics[width=.5\columnwidth]{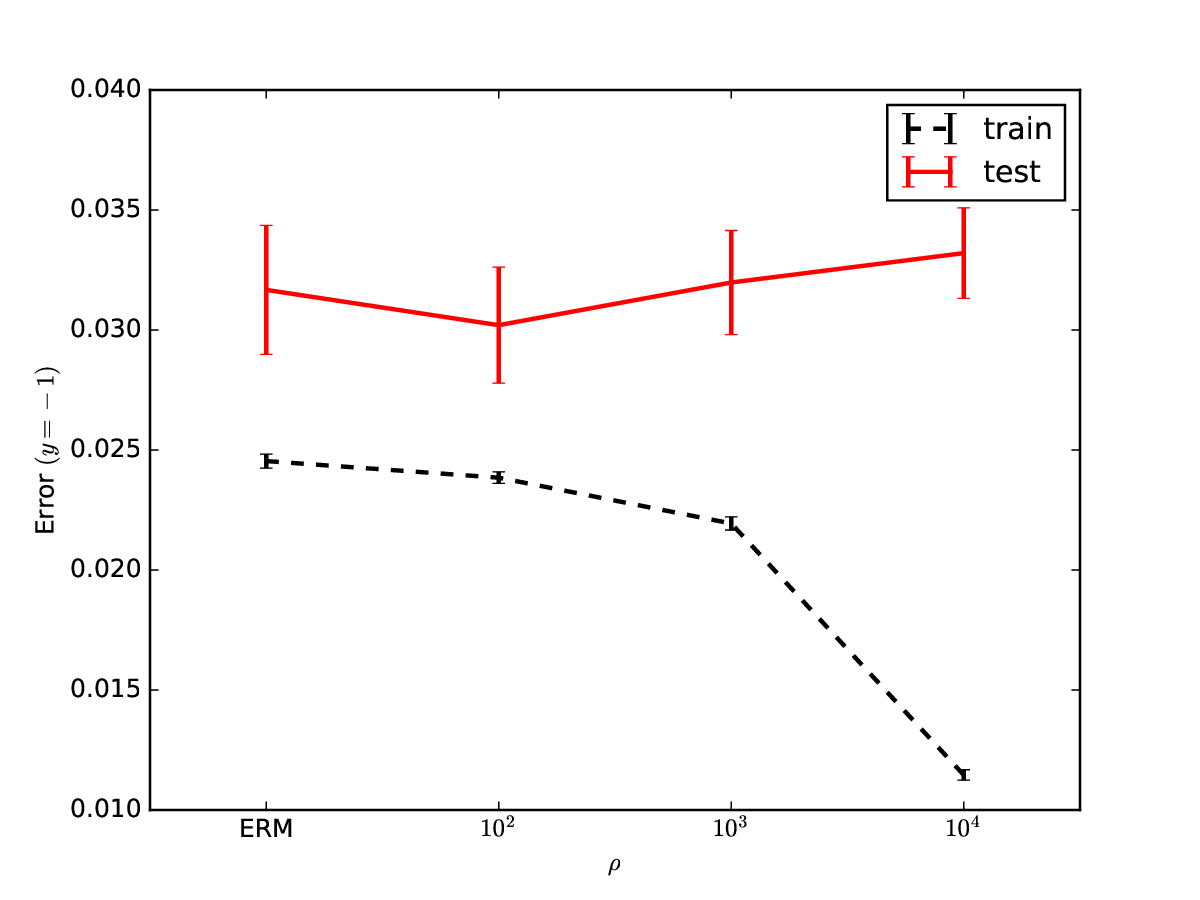}  \\
      (c) Error on rare class ($Y_i = +1$)
      & (d) Error on common class ($Y_i = -1$)
    \end{tabular}
    \caption{ \label{fig:hiv} HIV-1 Protease Cleavage plots ($2$-standard
      error confidence bars). Plot (a) shows the logistic risk
      $\risk(\theta) = \E[\log(1 + e^{-Y \theta^\top X})]$ and confidence
      bounds computed from the robust risk~\eqref{eqn:robust-risk}.  Plots
      (b)--(d) show misclassification error rates plotted against robustness
      parameter $\tol$.}
  \end{center}
\end{figure}

In this experiment, we see (roughly) that the ERM solutions achieve good
performance on the common class $(Y = -1)$ but sacrifice performance on the
uncommon class. As we increase $\tol$, performance of the robust solution
$\robsol$ on the rarer label $Y = +1$ improves (Fig.~\ref{fig:hiv}(c)), while
the misclassification rate on the common class degrades a small
(insignificant) amount (Fig.~\ref{fig:hiv}(d)); see also
Table~\ref{table:hiv}. This behavior is roughly what we might expect for the
robust estimator: the poor performance of the ERM estimator $\ermsol$ on the
rare class induces (relatively) more variance, which the robust solution
reduces by via improved classification performance on the rare $(Y = +1)$
class.  This occurs at little expense over the more common label $Y = -1$ so
that overall performance improves by a small amount. We remark---but are
unable to explain---that this improvement on classification error for the rare
labels comes despite increases in logistic risk; while the average logistic
loss increases, misclassification errors decrease.

\begin{table}[ht]
  \centering
  \caption{HIV-1 Cleavage Error}
  \label{table:hiv}
  \pgfplotstabletypeset[
  col sep=comma,
  string type,
  every head row/.style={%
    before row={\hline
      & \multicolumn{2}{c}{risk}
      & \multicolumn{2}{c}{error (\%)}
      & \multicolumn{2}{c}{error ($Y = +1$)}
      & \multicolumn{2}{c}{error ($Y = -1$)}
      \\
    },
    after row=\hline
  },
  every last row/.style={after row=\hline},
  columns/label/.style={column name=$\rho$, column type=l},
  columns/obj_train/.style={column name=train, column type=l},
  columns/obj_test/.style={column name=test, column type=c},
  columns/e_train/.style={column name=train, column type=l},
  columns/e_test/.style={column name=test, column type=c},
  columns/p_train/.style={column name=train, column type=l},
  columns/p_test/.style={column name=test, column type=c},
  columns/n_train/.style={column name=train, column type=l},
  columns/n_test/.style={column name=test, column type=c}
  ]{./Experiments/HIV/hiv_subsample.csv}
\end{table}

\subsection{Document classification in the Reuters corpus}

For our final experiment, we consider a multi-label classification problem
with a reasonably large dataset. The Reuters RCV1 Corpus~\cite{LewisYaRoLi04}
has 804,414 examples with $d = 47,\!236$ features, where feature $j$ is an
indicator variable for whether word $j$ appears in a given document. The goal
is to classify documents as a subset of the 4 categories Corporate, Economics,
Goverment, and Markets, and each document in the data is labeled with a subset
of those. As each document can belong to multiple categories, we fit binary
classifiers on each of the four categories.  There are different numbers of
documents labeled as each category, with the Economics category having the
fewest number of positive examples.  Table~\ref{table:reuters-num-example}
gives the number of times a document is labeled as each of the four categories
(so each document has about 1.18 associated classes). In this experiment, we
expect the robust solution to outperform ERM on the rarer category
(Economics), as the robustification~\eqref{eqn:solve-robust} naturally
upweights rarer (harder) instances, which disproportionally affect
variance---as in the experiment on HIV-1 cleavage.

\begin{table}[ht]
  \centering
  \caption{Reuters Number of Examples}
  \label{table:reuters-num-example}
  \begin{tabular}{cccc}
    \hline
    Corporate & Economics & Government & Markets \\
    381,327 &  119,920 &  239,267 &  204,820 \\
    \hline
  \end{tabular}
\end{table}

For each category $k \in \{1, 2, 3, 4\}$, we use the logistic loss
$\loss(\theta_k; (x, y)) = \log (1 + \exp(-y\theta_k^\top x))$.
For each binary
classifier,
we use the $\ell_1$ constraint set $\Theta = \left\{ \theta \in \R^{d}:
\lone{\theta} \le 1000\right\}$.  To evaluate performance on this multi-label
problem, we use precision (ratio of the number of correct positive labels to
the number classified as positive) and recall (ratio of the number of correct
positive labels to the number of actual positive labels):
\begin{align*}
  \mbox{precision} & = \frac{1}{n} \sum_{i = 1}^n \frac{\sum_{k=1}^4
  \indics{\theta_k^\top x_i \ge 0, y_i = 1}}{\sum_{k=1}^4
  \indics{\theta_k^\top x_i > 0}}, \\
  \mbox{recall} & = \frac{1}{n}
  \sum_{i = 1}^n \frac{\sum_{k=1}^4 \indics{\theta_k^\top x_i \ge 0, y_i = 1}
  }{\sum_{k=1}^4 \indic{y_i =1}}.
\end{align*}
We partition the data into ten equally-sized sub-samples and
perform ten validation experiments, where in each experiment we use
one of the ten subsets for fitting the logistic models and the remaining
nine partitions as a test set to evaluate performance.

\begin{figure}[ht]
    \begin{subfigure}[1]{0.47\linewidth}
    \centering
    \includegraphics[width=1\linewidth]{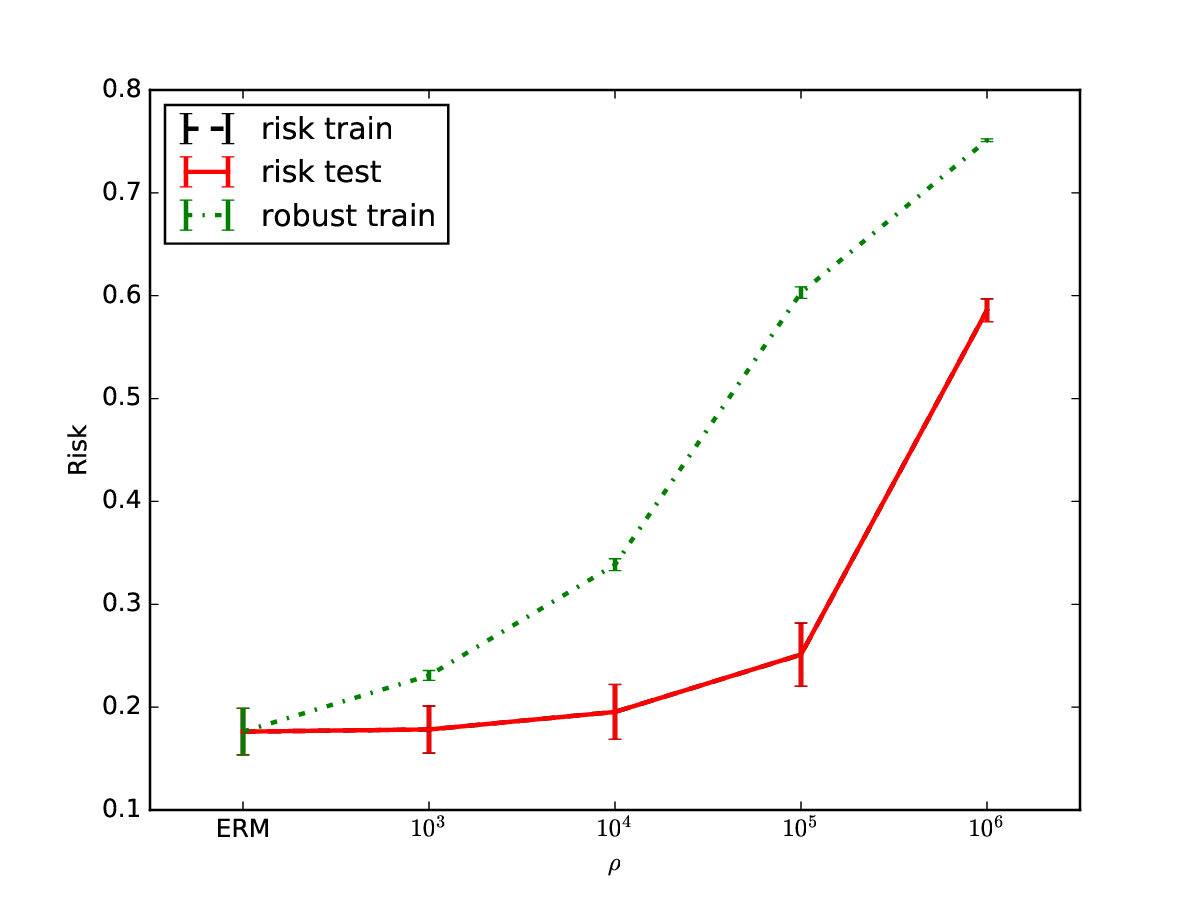} 
    \caption{Logistic risk and confidence bound} 
    \label{fig:reuters-risk} 
    \vspace{0ex}
  \end{subfigure}
  \begin{subfigure}[2]{0.47\linewidth}
    \centering
    \includegraphics[width=1\linewidth]{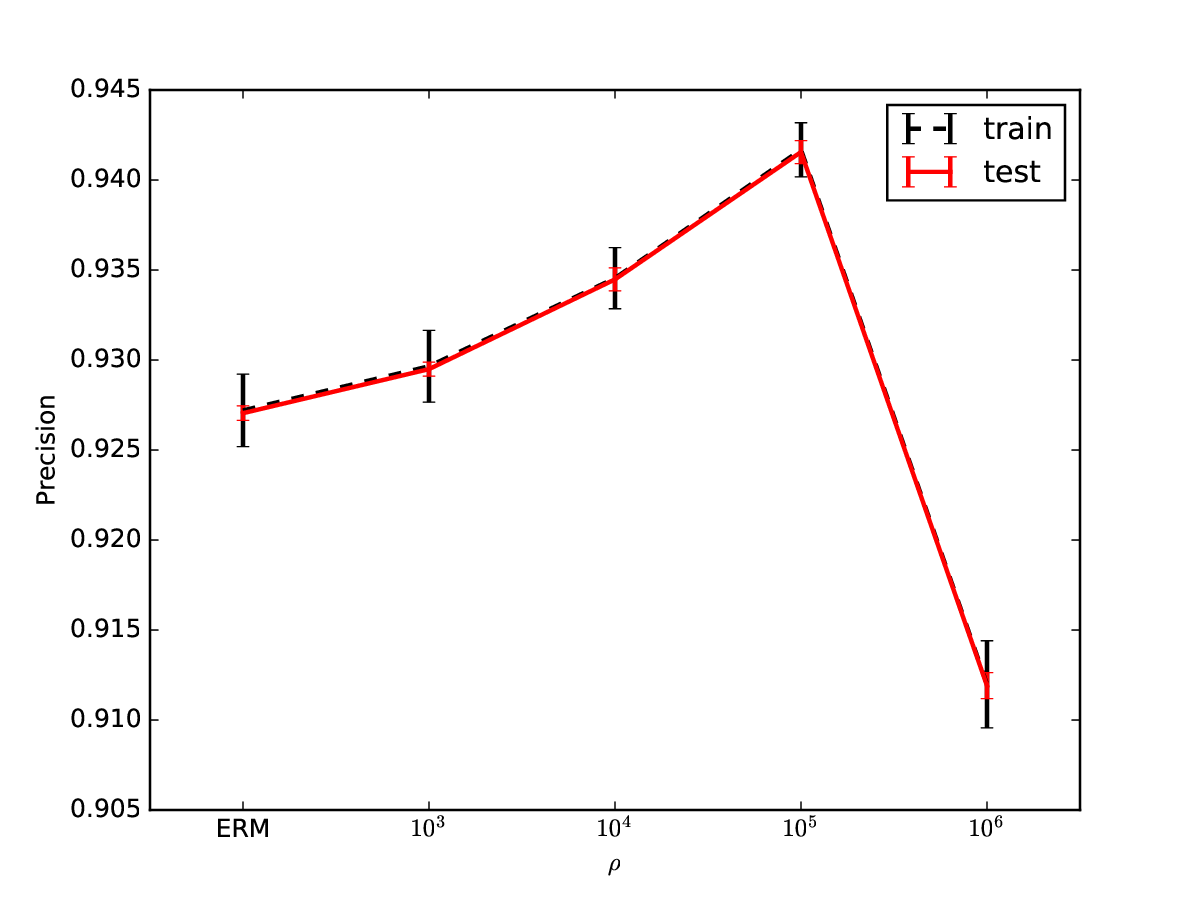} 
    \caption{Precision} 
    \label{fig:reuters-precision} 
    \vspace{0ex}
  \end{subfigure}
  
  \begin{subfigure}[3]{0.47\linewidth}
    \centering
    \includegraphics[width=1\linewidth]{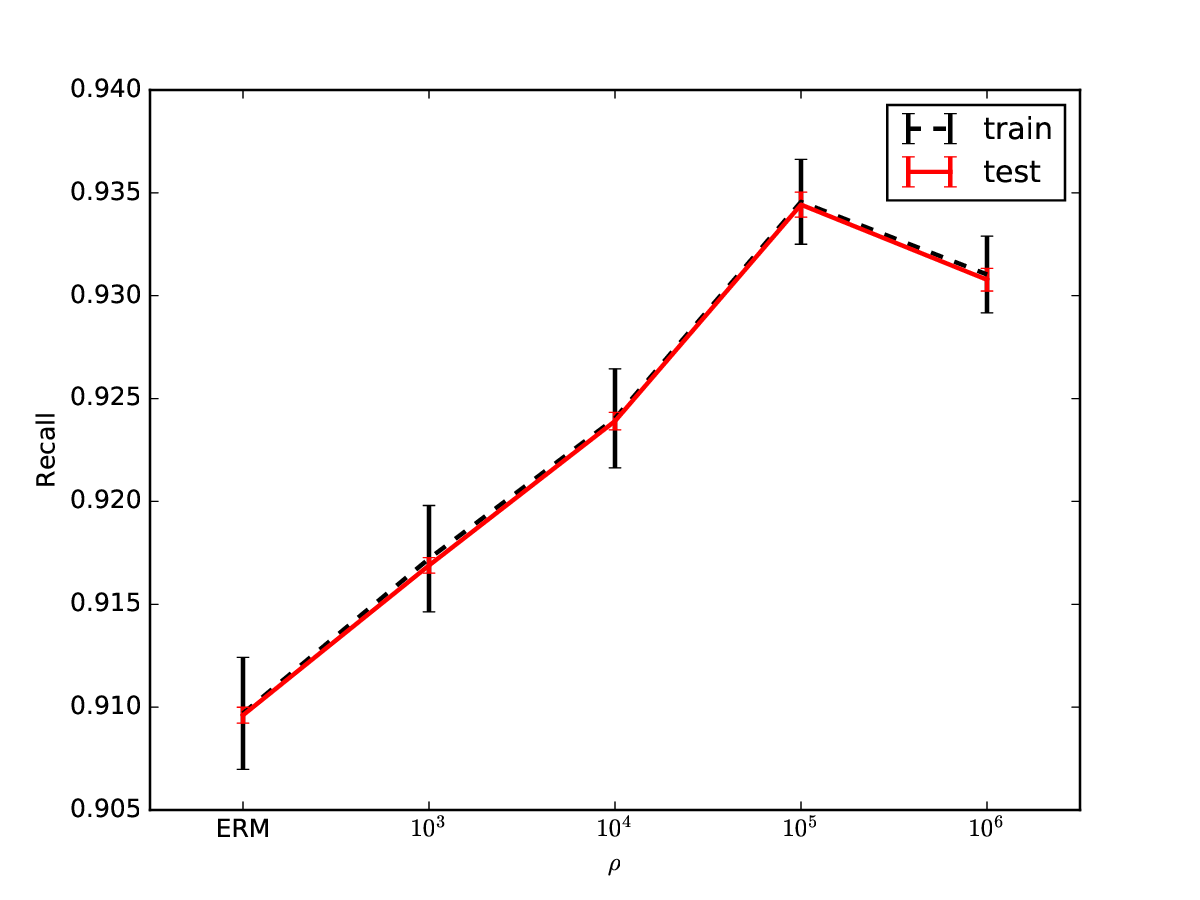} 
    \caption{Recall} 
    \label{fig:reuters-recall} 
  \end{subfigure}
  \begin{subfigure}[4]{0.47\linewidth}
    \centering
    \includegraphics[width=1\linewidth]{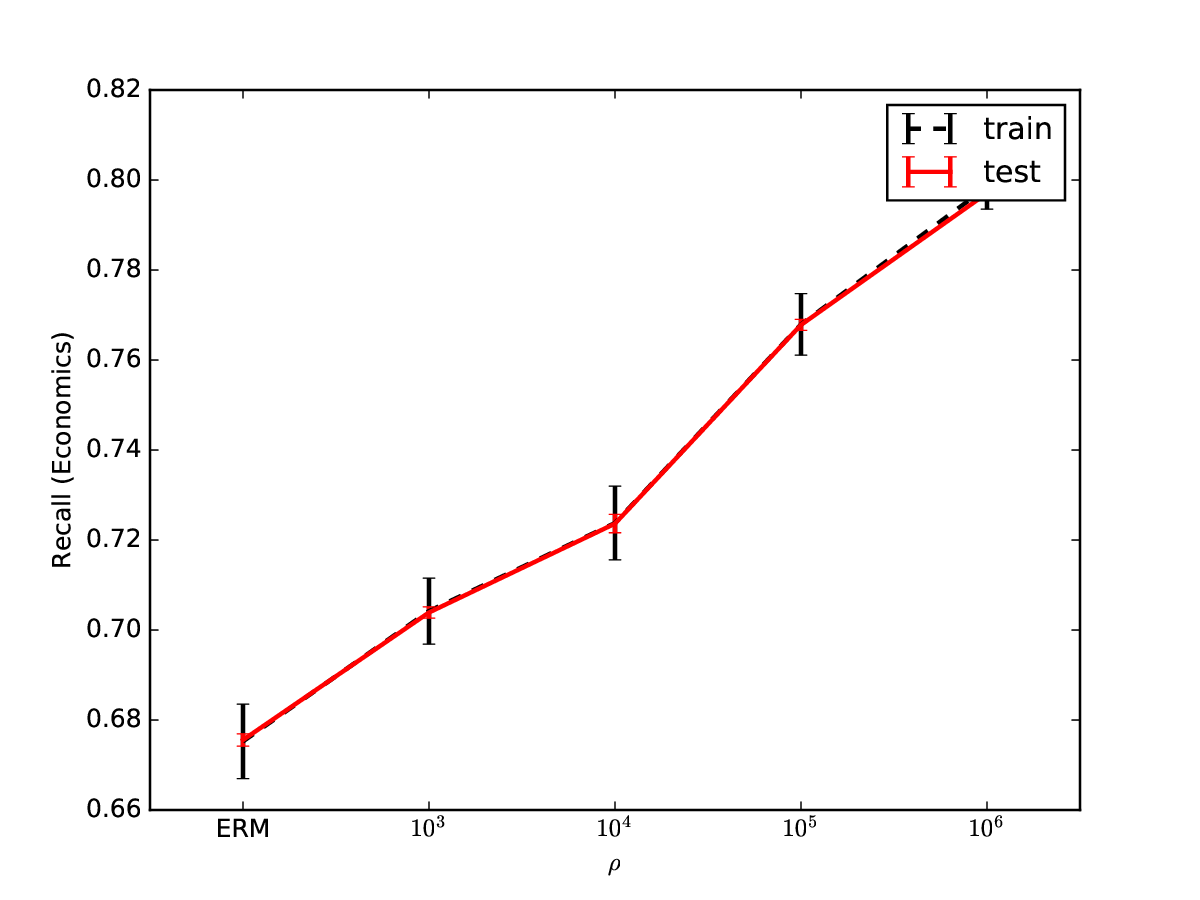} 
    \caption{Recall on rare category (Economics)} 
    \label{fig:reuters-recall-econ} 
  \end{subfigure}  
  \caption{Reuters Corpus ($2$-standard error deviations)}
  \label{fig:reuters} 
\end{figure}


In Figure~\ref{fig:reuters}, we summarize the results of our experiment
averaged over the $10$ runs, with $2$-standard error bars (computed across the
folds). To facilitate comparison across the document categories, we give exact
values of these averages in Tables~\ref{table:reuters-precision}
and~\ref{table:reuters-recall}. Both $\robsol$ and $\ermsol$ have reasonably
high precision across all categories, with increasing $\tol$ giving a mild
improvement in precision (from $.93 \pm .005$ to $.94 \pm .005$); see also
Figure~\ref{fig:reuters}(a).  On the other hand, we observe in
Figure~\ref{fig:reuters}(d) that ERM has low recall ($.69$ on test) for the
Economics category, which contains about 15\% of documents. As we increase
$\tol$ from $0$ (ERM) to $10^5$, we see a smooth and substantial improvement
in recall for this rarer category (without significant degradation in
precision). This improvement in recall amounts to reducing variance in
predictions on the rare class.  We also note that while the robust solutions
outperform ERM in classification performance for $\tol \le 10^5$, for very
large $\tol = 10^6 \ge 10n$, the regularizing effects of robustness
degrade the solution $\robsol$.  This precision and recall improvement comes
in spite of the increase in the average binary logistic risk for each of the
$4$ classes, which we show in Figure~\ref{fig:reuters-risk}, which plots the
average binary logistic loss (on train and test sets) averaged over the $4$
categories as well as the upper confidence bound $\risk_n(\theta, \mc{P}_n)$
as we vary $\tol$. The robust regularization effects reducing variance
appear to improve the performance of the binary logistic loss as a surrogate
for true misclassification error.
\begin{table}[ht]
  \centering
  \caption{Reuters Corpus Precision ($\%$)}
  \label{table:reuters-precision}
  \pgfplotstabletypeset[
  col sep=comma,
  string type,
  every head row/.style={%
    before row={\hline
      & \multicolumn{2}{c}{Precision}
      & \multicolumn{2}{c}{Corporate}
      & \multicolumn{2}{c}{Economics}
      & \multicolumn{2}{c}{Government}
      & \multicolumn{2}{c}{Markets}
      \\
    },
    after row=\hline
  },
  every last row/.style={after row=\hline},
  columns/label/.style={column name=$\rho$, column type=l},
  columns/train/.style={column name=train, column type=l},
  columns/test/.style={column name=test, column type=c},
  columns/c_train/.style={column name=train, column type=l},
  columns/c_test/.style={column name=test, column type=c},
  columns/e_train/.style={column name=train, column type=l},
  columns/e_test/.style={column name=test, column type=c},
  columns/g_train/.style={column name=train, column type=l},
  columns/g_test/.style={column name=test, column type=c},
  columns/m_train/.style={column name=train, column type=l},
  columns/m_test/.style={column name=test, column type=c}
  ]{./Experiments/Reuters/reuters_precision.csv}
\end{table}

\begin{table}[ht]
  \centering
  \caption{Reuters Corpus Recall ($\%$)}
  \label{table:reuters-recall}
  \pgfplotstabletypeset[
  col sep=comma,
  string type,
  every head row/.style={%
    before row={\hline
      & \multicolumn{2}{c}{Recall}
      & \multicolumn{2}{c}{Corporate}
      & \multicolumn{2}{c}{Economics}
      & \multicolumn{2}{c}{Government}
      & \multicolumn{2}{c}{Markets}
      \\
    },
    after row=\hline
  },
  every last row/.style={after row=\hline},
  columns/label/.style={column name=$\rho$, column type=l},
  columns/train/.style={column name=train, column type=l},
  columns/test/.style={column name=test, column type=c},
  columns/c_train/.style={column name=train, column type=l},
  columns/c_test/.style={column name=test, column type=c},
  columns/e_train/.style={column name=train, column type=l},
  columns/e_test/.style={column name=test, column type=c},
  columns/g_train/.style={column name=train, column type=l},
  columns/g_test/.style={column name=test, column type=c},
  columns/m_train/.style={column name=train, column type=l},
  columns/m_test/.style={column name=test, column type=c}
  ]{./Experiments/Reuters/reuters_recall.csv}
\end{table}

\subsubsection*{Summary}

We have seen through multiple examples that robustification---our convex
surrogate for variance regularization---is an effective tool in a number of
applications. As we heuristically expect, variance-based regularization
(robust regularization) yields predictors with better performance on ``hard''
instances, or subsets of the problem that induce higher variance, such as
classes with relatively few training examples in classification problems. The
robust regularization $\tol$ gives a principled knob for tuning performance to
trade between variance (uniform or across-the-board performance)
and---sometimes---absolute performance.



\section{Discussion}


In this paper, we have developed theoretical results for robust
regularization~\eqref{eqn:solve-robust} that apply to general stochastic
optimization and learning problems problems. The examples we describe in
Section~\ref{sec:optimal-selection} illustrate our expectation that the
robust solution $\robsol$ should have good performance in cases in which
$\var(\loss(\theta\opt; \statrv))$ is small (recall also
Theorems~\ref{theorem:selection-by-robustness}
and~\ref{theorem:selection-by-robustness-localized}).  Identifying the
separation between the performance empirical risk minimization and related
estimators and that of the robustly-regularized estimators---as well as
variance-regularized estimates---we consider more generally remains a
challenge. We hope that this paper inspires work in this direction in
machine learning and statistics, and more broadly, torward considering
distributionally robust problems. Part of this is likely to come from making
rigorous our empirical observations (Section~\ref{sec:experiments})
that robust regularization improves performance on ``hard'' instances
without sacrificing performance on easier cases.

Our understanding of so-called ``fast rates'' for stochastic optimization
problems, while considering robustness, is also limited.  For empirical risk
minimization, fast rates of convergence hold under conditions in which the
the gap $\risk(\theta) - \risk(\theta\opt)$ controls the variance of the
excess loss $\loss(\theta, \statrv) - \loss(\theta\opt,
\statrv)$~\cite[cf.][]{MammenTs99,BartlettBoMe05,BoucheronBoLu05,
  BartlettJoMc06}, which usually requires some type of uniform convexity
assumption. These bounds typically follow from localization
guarantees~\cite[Section 5]{BartlettBoMe05} on the function class
\begin{equation*}
  \left\{ \statval \mapsto
  \loss(\theta, \statval) - \loss(\theta\opt, \statval) \mid
  \theta \in \Theta
  \right\}.
\end{equation*}
While in Section~\ref{section:fast-rates-main}, we show that the robust
estimate $\robsol$ enjoys faster rates of convergence under growth
conditions analogous to uniform convexity of the risk, as
$\var(\loss(\theta; \statrv) - \loss(\theta\opt; \statrv)) \neq
\var(\loss(\theta; \statrv))$, it is not clear how to directly connect these
guarantees to results of the form in
Theorems~\ref{theorem:selection-by-robustness}
and~\ref{theorem:selection-by-robustness-localized}. We leave investigation
of these topics to future work.


The last point of our discussion is to revisit
Theorem~\ref{theorem:selection-by-robustness-localized}, which
provides a guarantee for robustly regularized estimators based
on localized Rademacher complexities. An investigation of our
proof shows that our derivation proceeds by considering
the complexity of self-normalized classes of functions
of the form
\begin{equation*}
  \mc{G}_r = \left\{ \sqrt{\frac{r}{\E[f^2] \vee r}}
  f \mid f \in \fclass \right\}.
\end{equation*}
In contrast, the analogous result of~\citet[Thereom 3.3]{BartlettBoMe05}
for empirical risk minimization
considers the complexity of classes of functions of the form
\begin{equation*}
  \mc{G}_r = \left\{ \frac{r}{\E[f^2] \vee r} f
  \mid f \in \fclass \right\}.
\end{equation*}
The latter class normalizes functions $f$ by $\sqrt{\E[f^2]}$---a type of
self-normalization that arises in the computation of pivotal (asymptotically
independent of the underlying distribution) statistics.  While this choice
\emph{prima facie} is just a step in our proof, the robust objective
$\risk_n(\theta, \mc{P}_n)$ defined in Eq.~\eqref{eqn:robust-risk} is an
empirical likelihood upper confidence bound on the optimal population
risk~\cite[see also][]{DuchiGlNa16}. One of the important characteristics of
empirical likelihood confidence bounds is that they are self-normalizing and
yield pivotal statistics~\cite{Owen01}. Investigating such
self-normalization in complexity guarantees seems likely to yield fruitful
insights.

\ifdefined\usejmlrstyle

\acks{
  We thank Feng Ruan for pointing out a much simpler proof of
  Theorem~\ref{theorem:variance-expansion} than in our original
  paper. JCD and HN were partially supported
  by the SAIL-Toyota Center for AI Research and HN was partially supported
  Samsung Fellowship. JCD was also partially supported by the National Science
  Foundation award NSF-CAREER-1553086 and the Sloan Fellowship.
}

\else

\subsection*{Acknowledgments}
We thank Feng Ruan for pointing out a much simpler proof of
Theorem~\ref{theorem:variance-expansion} than in our original
paper. JCD and HN were partially supported
by the SAIL-Toyota Center for AI Research and HN was partially supported
Samsung Fellowship. JCD was also partially supported by the National Science
Foundation award NSF-CAREER-1553086 and the Sloan Fellowship.
\fi

\newpage
\appendix



\section{Proof of Theorem~\ref{theorem:variance-expansion}}
\label{section:proof-of-variance-expansion}

The theorem is immediate if $s_n = 0$ or $\sigma^2 = 0$, as in this case
$\sup_{P : \phidivs{P}{\emp} \le \tol/n} \E_P[Z] = \E_\emp[Z] = \E[Z]$.
In what follows, we will thus assume that
$\sigma^2, s_n^2 > 0$.
We recall the maximization problem~\eqref{eqn:simple-problem},
which is
\begin{equation*}
  \maximize_p~ \sum_{i = 1}^n p_i z_i
  ~~ \subjectto ~ p \in \mc{P}_n = \left\{p \in \R^n_+ :
  \half \ltwo{n p - \ones}^2 \le \tol, \<\ones, p\> = 1 \right\},
\end{equation*}
and the solution criterion~\eqref{eqn:when-choosing-u-is-possible}, which
guarantees that the maximizing value of problem~\eqref{eqn:simple-problem}
is $\wb{z} + \sqrt{2 \tol s_n^2 / n}$ whenever
\begin{equation*}
  \sqrt{2 \tol} \frac{z_i - \wb{z}}{\sqrt{n s_n^2}} \ge -1.
\end{equation*}
Letting $z = Z$, then under the conditions of the theorem, we have $|z_i -
\wb{z}| \le \zbound$, and to satisfy
inequality~\eqref{eqn:when-choosing-u-is-possible} it is certainly
sufficient that
\begin{equation}
  \label{eqn:bounded-z-sufficient-n}
  2 \tol \frac{\zbound^2}{n s_n^2} \le 1,
  ~~ \mbox{or} ~~
  n \ge \frac{2 \tol \zbound^2}{s_n^2},
  ~~ \mbox{or} ~~
  s_n^2 \ge \frac{2 \tol \zbound^2}{n}.
\end{equation}
Conversely, suppose that $s_n^2 < \frac{2 \tol \zbound^2}{n}$. Then
we have $\frac{2 \tol s_n^2}{n} < \frac{4 \tol^2 \zbound^2}{n^2}$, which
in turn implies that
\begin{equation*}
  \sup_{p \in \mc{P}_n} \<p, z\>
  \ge \frac{1}{n} \<\ones, z\>
  + \hinge{\sqrt{\frac{2 \tol s_n^2}{n}} - \frac{2 \zbound \tol}{n}}.
\end{equation*}
Combining this inequality with the
condition~\eqref{eqn:bounded-z-sufficient-n} for the exact expansion to hold
yields the two-sided variance bounds~\eqref{eqn:sure-variance-bounds}.

We now turn to showing the high-probability exact
expansion~\eqref{eqn:exact-variance-expansion}, which occurs whenever the
sample variance is large enough by
expression~\eqref{eqn:bounded-z-sufficient-n}.  To that end, we show that
$s_n^2$ is bounded from below with high probability. Define the event
\begin{equation*}
  \event_n \defeq \left\{ s_n^2 \ge \frac{3}{64} \sigma^2 \right\},
\end{equation*}
and let $n \ge \frac{4\zbound^2}{\sigma^2} \max \left\{ 2 \sigma, 11
\right\}$. Then, on event $\event_n$ we have
  $n \ge \frac{44 \tol \zbound^2}{\sigma^2}
  \ge \frac{2 \tol \zbound^2}{s_n^2}$,
so that the sufficient condition~\eqref{eqn:bounded-z-sufficient-n}
holds and
expression~\eqref{eqn:exact-variance-expansion} follows.  We now argue
that the event $\event_n$ has high probability via the following
lemma, which is an application of concentration
inequalities for convex functions coupled with careful estimates
of the expectation of standard deviations.
\begin{lemma}
  \label{lemma:empirical-stdev-close}
  Let $Z_i$ be i.i.d.\ random variables taking values in
  $[\zbound_0, \zbound_1]$ with $\zbound = \zbound_1 - \zbound_0$, and
  let
  $s_n^2 = \frac{1}{n} \sum_{i=1}^n Z_i^2 - \left( \frac{1}{n}
    \sum_{i=1}^n Z_i \right)^2$. Let
  $c_n = 1 + \frac{7}{4 n} + \frac{3}{n^2}$. For all $t \ge 0$, we have
  \begin{equation*}
    \P\left( s_n \ge \sqrt{\E s_n^2} + t\right)
    \vee
    \P\left(s_n \le \sqrt{\E s_n^2}
      - \frac{c_n  \zbound^2}{n} - t\right) 
    \le \exp\left(-\frac{n t^2}{2 \zbound^2}\right).
  \end{equation*}
\end{lemma}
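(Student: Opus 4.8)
The plan is to view $s_n$ as a convex, $\tfrac{1}{\sqrt n}$-Lipschitz function of the $n$ independent coordinates $Z_1,\dots,Z_n$, apply dimension-free concentration for such functions, and then convert deviation bounds centered at $\E[s_n]$ into the asserted ones, which are centered at the slightly larger pivot $\sqrt{\E[s_n^2]}$. Since $\sqrt n\,s_n(z) = \ltwo{z - \wb{z}\ones}$ is the $\ell_2$-distance from $z$ to $\mathrm{span}(\ones)$, the map $z\mapsto s_n(z)$ is convex (distance to a subspace is convex) and $\tfrac{1}{\sqrt n}$-Lipschitz for $\ltwo{\cdot}$; after rescaling $w_i = (z_i-\zbound_0)/\zbound\in[0,1]$ it is convex and $\tfrac{\zbound}{\sqrt n}$-Lipschitz on $[0,1]^n$, so the standard concentration inequality for convex Lipschitz functions of independent bounded random variables (see, e.g., \cite{BoucheronLuMa13}) gives, for every $t\ge 0$,
\[
  \P\!\left(s_n \ge \E[s_n] + t\right) \le \exp\!\left(-\tfrac{n t^2}{2\zbound^2}\right)
  \quad\text{and}\quad
  \P\!\left(s_n \le \E[s_n] - t\right) \le \exp\!\left(-\tfrac{n t^2}{2\zbound^2}\right).
\]
Since $\E[s_n]\le\sqrt{\E[s_n^2]}$ by Jensen, $\{s_n\ge\sqrt{\E[s_n^2]}+t\}\subseteq\{s_n\ge\E[s_n]+t\}$, which yields the upper half of the claim at once. (If $\sigma^2=0$ or $s_n\equiv 0$ the statement is trivial, so assume otherwise.)

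The lower half is the heart of the lemma, and the correction $\zbound^2/n$ must absorb the discrepancy $\sqrt{\E[s_n^2]}-\E[s_n] = \var(s_n)/(\sqrt{\E[s_n^2]}+\E[s_n])$. One should \emph{not} expect to close this simply by pairing the lower tail above with an inequality $\E[s_n]\ge\sqrt{\E[s_n^2]}-\zbound^2/n$: that gap can be of order $\zbound/\sqrt n$, much larger than $\zbound^2/n$, when $Z$ has small variance (e.g.\ $Z_i\sim\mathrm{Bernoulli}(1/n)$). Instead I would reduce to a lower-tail estimate for $s_n^2$ \emph{centered at its own mean}. The key elementary observation is that, writing $\nu:=\sqrt{\E[s_n^2]}$, whenever $\nu-\zbound^2/n-t>0$ (hence $\nu>\zbound^2/n+t$) a one-line computation gives
\[
  \big(\nu - \tfrac{\zbound^2}{n} - t\big)^2 \;\le\; \nu^2 - t\nu \;=\; \E[s_n^2] - t\sqrt{\E[s_n^2]},
\]
so $\{s_n\le\nu-\zbound^2/n-t\}\subseteq\{s_n^2\le\E[s_n^2]-t\sqrt{\E[s_n^2]}\}$, while if $\nu-\zbound^2/n-t\le 0$ the left event is empty. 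It therefore suffices to establish the variance-sensitive lower tail
\[
  \P\!\left(s_n^2 \le \E[s_n^2] - v\right) \;\le\; \exp\!\left(-\frac{n v^2}{2\zbound^2\,\E[s_n^2]}\right)
  \qquad (v\ge 0),
\]
since taking $v = t\sqrt{\E[s_n^2]}$ recovers $\exp(-nt^2/(2\zbound^2))$. For this last inequality I would bound $\var(s_n^2)$ by Efron--Stein: replacing $Z_i$ by an independent copy $Z_i'$ changes $s_n^2$ by $\tfrac1n\big[2(Z_i'-Z_i)(Z_i-\wb{Z}) + (1-\tfrac1n)(Z_i'-Z_i)^2\big]$, and since $\sum_i\E[(Z_i-\wb{Z})^2] = n\,\E[s_n^2]$ and $\sum_i\var(Z_i)\le\tfrac{n^2}{n-1}\E[s_n^2]$, the relevant second moments are $O(\zbound^2\,\E[s_n^2])$, giving $\var(s_n^2) = O(\zbound^2\,\E[s_n^2]/n)$; combined with the bounded per-coordinate influence $\le 3\zbound^2/n$, a Bernstein-type inequality produces a bound of the displayed shape.

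The main obstacle is precisely this last step: getting $\P(s_n^2\le\E[s_n^2]-v)$ with the \emph{exact} exponent $nv^2/(2\zbound^2\,\E[s_n^2])$ rather than a version with spurious numerical factors, uniformly over all bounded $Z$ and all $n$. Sharpening the Efron--Stein variance constant, checking that the sub-exponential correction in the Bernstein step is negligible over the relevant range $v\le\E[s_n^2]$, and handling the nearly-degenerate regime where $s_n$ concentrates near $0$ (so that $\P(s_n^2\le\E[s_n^2]-v)$ is controlled by a small-ball probability rather than by a variance term) together constitute the "careful estimate of the expectation of standard deviations" the text alludes to. The overall architecture is as above; the delicate part is this bookkeeping.
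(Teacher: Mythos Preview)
Your upper-tail argument is the paper's: concentration of the convex $\tfrac{1}{\sqrt n}$-Lipschitz map $z\mapsto s_n(z)$ around $\E[s_n]$, together with $\E[s_n]\le\sqrt{\E[s_n^2]}$.

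For the lower tail you diverge from the paper, and in doing so you have actually exposed a gap in the paper's own argument. The paper proceeds exactly by the step you dismiss: it states an auxiliary lemma that for \emph{independent} $Y_i$ with $|Y_i|\le C$ one has $\E\bigl[(\tfrac1n\sum Y_i^2)^{1/2}\bigr]\ge(\tfrac1n\sum\E[Y_i^2])^{1/2}-C^2/n$, applies it with $Y_i=Z_i-\bar Z$ and $C=\zbound$ to conclude $\E[s_n]\ge\sqrt{\E[s_n^2]}-\zbound^2/n$, and then pairs this with the concentration lower tail around $\E[s_n]$. But the variables $Z_i-\bar Z$ are not independent, and your $\mathrm{Bernoulli}(1/n)$ example shows the conclusion can fail for them: there $\sqrt{\E[s_n^2]}\approx 1/\sqrt n$ while $\E[s_n]\approx\E[\sqrt K]/\sqrt n$ with $K\approx\mathrm{Poisson}(1)$, so the gap is of order $1/\sqrt n\gg \zbound^2/n$. (The inequality $\E[s_n]\ge\sqrt{\E[s_n^2]}-\zbound^2/n$ is not even scale-invariant under $Z\mapsto cZ$, which is already a red flag.) So the ``careful estimates of the expectation of standard deviations'' the text alludes to are precisely this step, not the route you propose.

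Your alternative---reducing via $(\nu-\zbound^2/n-t)^2\le\nu^2-t\nu$ to a variance-sensitive lower tail $\P(s_n^2\le\E[s_n^2]-v)\le\exp\bigl(-nv^2/(2\zbound^2\E[s_n^2])\bigr)$---is a reasonable repair strategy, and the algebraic reduction is correct. But as you concede, the Efron--Stein/Bernstein sketch does not deliver that tail bound with the sharp constant, so your argument is also incomplete. In short, neither proof closes the lower tail as stated. The downstream uses of the lemma in the paper are all in regimes with $\sigma$ bounded away from zero, where $\sqrt{\E[s_n^2]}-\E[s_n]\le\var(s_n)/\sqrt{\E[s_n^2]}\lesssim\zbound^2/(n\sigma)$ is controllable, so those applications should be recoverable with adjusted constants; the lemma in its stated generality, however, still needs a correct proof.
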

\noindent
The proof of the lemma is involved, but the lemma yields a quick proof of the
theorem that we now provide. (See
Section~\ref{sec:proof-empirical-stdev-close} for a proof.)

Let $\sigma^2 = \var(Z)$ and note that $\E[s_n^2] = (1 - \frac{1}{n})
\sigma^2$.  Set $t = \frac{\sqrt{3}}{4} \sigma$ and note that $c_n <
\frac{3}{2}$ if $n \ge 5$. Then, since $n \ge \frac{8 \zbound^2}{\sigma}$ by
hypothesis, we have
that
\begin{align*}
  \sigma \sqrt{1-n^{-1}} - \frac{c_n \zbound^2}{n} - t 
  & \ge
  \frac{2 \sqrt{5}}{5} \sigma
  - \frac{3 \zbound^2}{2 n} - t
  > \frac{7}{10} \sigma - t
  > \frac{\sqrt{3}}{8} \sigma.
\end{align*}
Lemma~\ref{lemma:empirical-stdev-close} implies that
$s_n \ge \sigma \sqrt{1 - n^{-1}} - \frac{c_n \zbound^2}{n} - t$
with probability at least $1 - e^{-n t^2 / 2 \zbound^2}
= 1 - e^{-3 n \sigma^2 / 32 \zbound^2}$, which gives the theorem.

\subsection{Proof of Lemma~\ref{lemma:empirical-stdev-close}}
\label{sec:proof-empirical-stdev-close}

We use three technical lemmas in the proof of this lemma.
\begin{lemma}[Samson~\cite{Samson00}, Corollary 3]
  \label{lemma:convex-concentration}
  Let $f : \R^n \to \R$ be convex and $L$-Lipschitz continuous with
  respect to the $\ell_2$-norm over $[a, b]^n$, and let
  $Z_1, \ldots, Z_n$ be independent random variables on $[a, b]$. Then
  for all $t \ge 0$,
  \begin{equation*}
    \P(f(Z_{1:n}) \ge \E[f(Z_{1:n})] + t) \vee
    \P(f(Z_{1:n}) \le \E[f(Z_{1:n})] - t)
    \le \exp\left(-\frac{t^2}{2L^2 (b - a)^2}\right).
  \end{equation*}
\end{lemma}

The function $\R^n \ni z \mapsto \ltwo{(I - (1/n) \ones\ones^\top) z}$ is
$1$-Lipschitz with respect to the Euclidean norm, so
Lemma~\ref{lemma:convex-concentration} implies
\begin{equation*}
  \P\left(\sqrt{\var_\emp(Z)} \ge \E[\sqrt{\var_\emp(Z)}] + t\right)
  \vee
  \P\left(\sqrt{\var_\emp(Z)} \le \E[\sqrt{\var_\emp(Z)}] - t\right)
  \le \exp\left(-\frac{n t^2}{2 M^2}\right).
\end{equation*}
As $\E[\var_\emp(Z)^\half] \le \E[\var_\emp(Z)]^\half = \sqrt{(1 - 1/n)
  \var(Z)}$, this yields the first part of the first inequality of the
lemma. We must, however, also lower bound $\E[\var_\emp(Z)^\half]$.
\begin{lemma}
  \label{lemma:sqrt-moments}
  Let $Y_i$ be random variables with finite $4$th moment and assume that
  $\cov(Y_i^2, Y_j^2) \le \sigma^4$ for all pairs $i, j$. Then we have the
  following inequalities:
  \begin{subequations}
    \begin{align}
      \label{eqn:root-n-sqrt-lower}
      \E\left[\bigg(\frac{1}{n} \sum_{i=1}^n Y_i^2\bigg)^\half\right]
      & \ge \left( \frac{1}{n} \sum_{i=1}^n \E[Y_i^2]\right)^\half
        - \frac{1}{\sqrt{n}} \sqrt{\frac{\frac{1}{n} \sum_{i=1}^n
        \var(Y_i^2) + n \sigma^4}{\frac{1}{n} \sum_{i=1}^n\E[Y_i^2]}} \\
      \E\left[\bigg(\frac{1}{n} \sum_{i=1}^n Y_i^2\bigg)^\half\right]
      & \ge \left( \frac{1}{n} \sum_{i=1}^n \E[Y_i^2]\right)^\half
      - \frac{1}{n} \frac{\frac{1}{n} \sum_{i=1}^n
        \var(Y_i^2) + n \sigma^4}{\frac{1}{n} \sum_{i=1}^n\E[Y_i^2]}.
      \label{eqn:one-n-var-lower}
    \end{align}
  \end{subequations}
\end{lemma}
\begin{lemma}
  \label{lemma:sqrt-moments-redux}
  Let $Z_1, \ldots, Z_n$ be i.i.d\ variables with
  finite fourth moment. Let
  $Y_i = Z_i - \frac{1}{n} \sum_{j = 1}^n Z_j$. Then
  \begin{equation*}
    \E\left[\bigg(\frac{1}{n} \sum_{i=1}^n Y_i^2\bigg)^\half\right]
    \ge \left( \frac{1}{n} \sum_{i=1}^n \E[Y_i^2]\right)^\half
    - \frac{1}{n} \left[\frac{\E[(Z - \E[Z])^4]}{\var(Z)}
      + \frac{7 + 12 / n}{n} \var(Z) \right]
  \end{equation*}
  If $\max_j Z_j - \min_j Z_j \le C$ with probability 1, then
  for the constant $c_n = 1 + \frac{7}{4 n} + \frac{3}{n^2}$, we have
  \begin{equation*}
    \E\left[\bigg(\frac{1}{n} \sum_{i=1}^n Y_i^2\bigg)^\half\right]
    \ge \left( \frac{1}{n} \sum_{i=1}^n \E[Y_i^2]\right)^\half
    - c_n \frac{C^2}{n}.
  \end{equation*}
\end{lemma}
\noindent
We provide the proof of Lemmas~\ref{lemma:sqrt-moments}
and~\ref{lemma:sqrt-moments-redux} in Sections~\ref{sec:proof-sqrt-moments}
and~\ref{sec:proof-sqrt-moments-redux}, respectively.

Based on these lemmas, Lemma~\ref{lemma:empirical-stdev-close}
is immediate once we set
$Y_i = Z_i - \frac{1}{n} \sum_{j=1}^n Z_j$, so that
$s_n^2 = \frac{1}{n} \sum_{i=1}^n Y_i^2$, and apply
Lemma~\ref{lemma:sqrt-moments-redux} with $C = \zbound$.


\subsection{Proof of Lemma~\ref{lemma:sqrt-moments}}
\label{sec:proof-sqrt-moments}

We first prove the claim~\eqref{eqn:root-n-sqrt-lower}.
To see this, we use that
\begin{equation*}
  \inf_{\lambda \ge 0}
  \left\{\frac{a^2}{2 \lambda} + \frac{\lambda}{2}
  \right\} = \sqrt{a^2} = |a|,
\end{equation*}
and taking derivatives yields that for all $\lambda' \ge 0$,
\begin{equation*}
  \frac{a^2}{2 \lambda} + \frac{\lambda}{2}
  \ge \frac{a^2}{2 \lambda'} + \frac{\lambda'}{2}
  - \left(\frac{a^2}{2 {\lambda'}^2} - \half\right)(\lambda - \lambda').
\end{equation*}
By setting
$\lambda_n = \sqrt{\frac{1}{n} \sum_{i=1}^n Y_i^2}$, we thus have
for any $\lambda \ge 0$ that
\begin{align*}
  \E\left[\bigg(\frac{1}{n} \sum_{i=1}^n Y_i^2\bigg)^\half \right]
  & = \E\left[\frac{\sum_{i=1}^n Y_i^2}{2 n \lambda_n}
    + \frac{\lambda_n}{2} \right] \\
  & \ge \E\left[\frac{\sum_{i=1}^n Y_i^2}{2 n \lambda}
    + \frac{\lambda}{2} \right]
    + \E\left[\left(\half -\frac{\sum_{i=1}^n Y_i^2}{2 n \lambda^2}\right)
    (\lambda_n - \lambda)\right].
\end{align*}
Now we take $\lambda = \sqrt{\frac{1}{n} \sum_{i=1}^n \E[Y_i^2]}$, and
we apply the Cauchy-Schwarz inequality to obtain
\begin{align}
  \label{eqn:annoyance-to-deal-with}
  \lefteqn{\E\left[\bigg(\frac{1}{n} \sum_{i=1}^n Y_i^2\bigg)^\half \right]} \\
  & \ge \left( \frac{1}{n} \sum_{i=1}^n \E[Y_i^2]\right)^\half 
  - \frac{1}{2\lambda^2}
  \E\left[\left( \frac{1}{n} \sum_{i=1}^n (Y_i^2 - \E[Y_i^2])
    \right)^2\right]^\half
  \E\left[\left(\bigg(\frac{1}{n} \sum_{i=1}^n Y_i^2\bigg)^\half
    - \left(\frac{1}{n} \sum_{i=1}^n \E[Y_i^2]\right)^\half
    \right)^2\right]^\half.
  \nonumber
\end{align}
We control each of these quantities in turn.

First, our assumption that
$\cov(Y_i^2, Y_j^2) \le \sigma^4$ implies that
\begin{align*}
  \lefteqn{\E\left[\left( \frac{1}{n} \sum_{i=1}^n (Y_i^2 - \E[Y_i^2])
      \right)^2\right]} \\
  & ~~~ = \frac{1}{n^2} \sum_{i = 1}^n \var(Y_i^2)
  + \frac{1}{n^2} \sum_{i \neq j}
  \E\left[(Y_i^2 - \E[Y_i^2])(Y_j^2 - \E[Y_j^2])\right]
  \le \frac{1}{n}
  \left(\frac{1}{n} \sum_{i = 1}^n \var(Y_i^2) + n \sigma^4\right).
\end{align*}
The triangle inequality 
implies that $\E[((n^{-1} \sum_{i = 1}^n Y_i^2)^\half - 
(n^{-1} \sum_{i = 1}^n \E[Y_i^2])^\half)^2]^\half
\le 2 \lambda = 2 \sqrt{\frac{1}{n} \sum_{i = 1}^n \E[Y_i^2]}$,
so that substituting in inequality~\eqref{eqn:annoyance-to-deal-with}
we have
\begin{align*}
  \E\left[\bigg(\frac{1}{n} \sum_{i=1}^n Y_i^2\bigg)^\half \right]
  \ge \left( \frac{1}{n} \sum_{i=1}^n \E[Y_i^2]\right)^\half 
  - \frac{1}{\lambda\sqrt{n}}
  \left(\frac{1}{n} \sum_{i = 1}^n \var(Y_i^2) + n\sigma^4\right)^\half.
\end{align*}
This is the bound~\eqref{eqn:root-n-sqrt-lower}.

Now we give the sharper result. We have
\begin{align*}
  \E\left[\left(\bigg(\frac{1}{n} \sum_{i=1}^n Y_i^2\bigg)^\half
    - \left(\frac{1}{n} \sum_{i=1}^n \E[Y_i^2]\right)^\half
  \right)^2\right]
  & = \frac{2}{n} \sum_{i=1}^n \E[Y_i^2]
    - 2 \left(\frac{1}{n} \sum_{i=1}^n \E[Y_i^2]\right)^\half
    \E\bigg[\bigg(\frac{1}{n} \sum_{i=1}^n Y_i^2
    \bigg)^\half\bigg]
  \\
  & \le \frac{2}{\sqrt{n}}
    \left(\frac{1}{n} \sum_{i=1}^n \var(Y_i^2)
    + n \sigma^4\right)^\half,
\end{align*}
where we have used the bound~\eqref{eqn:root-n-sqrt-lower}. Returning to
inequality~\eqref{eqn:annoyance-to-deal-with}, we obtain the second
claim~\eqref{eqn:one-n-var-lower} of the lemma.

\subsection{Proof of Lemma~\ref{lemma:sqrt-moments-redux}}
\label{sec:proof-sqrt-moments-redux}

As $Y_i = Z_i - \frac{1}{n} \sum_{j = 1}^n Z_j$, it is no loss of generality
to assume that $\E[Z] = 0$, as $Y_i$ is shift-invariant. The lemma follows
immediately from Lemma~\ref{lemma:sqrt-moments} once we prove the claims that
\begin{subequations}
  \begin{equation}
    \label{eqn:bound-variance-Y}
    \var(Y_1^2) \le \frac{n - 1}{n} \E[Z^4] + \frac{6}{n} \var(Z)^2
  \end{equation}
  and
  \begin{equation}
    \E[(Y_1^2 - \E[Y_1^2])(Y_1^2 - \E[Y_1^2])]
    \le \frac{1 + 12/n}{n^2} \var(Z)^2.
    \label{eqn:correlation-variance-estimates}
  \end{equation}
\end{subequations}

Note that $\E[Y_i^2] = \E[(Z_i - \frac{1}{n} \sum_{j = 1}^n Z_j)^2]
= \frac{n - 1}{n} \var(Z)$.
We begin with the first inequality~\eqref{eqn:bound-variance-Y}.
We have
\begin{equation*}
  \var(Y_1^2)
  = \E[Y_1^4] - \E[Y_1^2]^2
  = \E[(Z_1 - \wb{Z}_n)^4] - \frac{(n - 1)^2}{n^2} \var(Z)^2,
\end{equation*}
where we use the shorthand $\wb{Z}_n = \frac{1}{n} \sum_{j = 1}^n Z_j$.
Expanding the first quantity, the fourth moment of $Y_1$, we have
\begin{equation*}
  \E[(Z_1 - \wb{Z}_n)^4]
  = \E[Z^4] - 4 \E[Z_1^3 \wb{Z}_n]
  + 6 \E[Z_1^2 \wb{Z}_n^2]
  - 4 \E[Z_1 \wb{Z}_n^3]
  + \E[\wb{Z}_n^4].
\end{equation*}
Using that $\E[Z_1 \wb{Z}_n^3] = \E[Z^4] / n^3
+ \frac{3(n - 1)}{n^3} \var(Z)^2$,
$\E[Z_1^2 \wb{Z}_n^2] = \frac{1}{n^2} \E[Z^4] + \frac{n - 1}{n^2}
\var(Z)^2$, and
$\E[\wb{Z}_n^4]
\le \frac{1}{n^3} \E[Z^4] + \frac{3}{n^2} \var(Z)^2$, we obtain
\begin{align*}
  \E[(Z_1 - \wb{Z}_n)^4]
  & \le \left(1 -\frac{4}{n} + \frac{6}{n^2}
  - \frac{3}{n^3} \right)\E[Z^4]
  + \left(6 \frac{n - 1}{n^2}
  - 4 \frac{3n - 3}{n^3}
  + \frac{3}{n^2} \right)\var(Z)^2 \\
  & \le \frac{n - 1}{n} \E[Z^4] + \frac{6}{n} \var(Z)^2,
\end{align*}
which is the result~\eqref{eqn:bound-variance-Y}.

For the second claim~\eqref{eqn:correlation-variance-estimates},
we expand
\begin{equation*}
  Y_i^2 - \E[Y_i^2]
  = Z_i^2 + \wb{Z}_n^2
  - 2 Z_i \wb{Z}_n - \frac{n - 1}{n} \var(Z).
\end{equation*}
The right-hand-side of Eq.~\eqref{eqn:correlation-variance-estimates} is
thus
\begin{align}
  & \E\left[\left(Z_1^2 - \var(Z)
      + n^{-1}\var(Z)
      + \wb{Z}_n^2 - 2 Z_1 \wb{Z}_n\right)
      \left(Z_2^2 - \var(Z)
      + n^{-1}\var(Z)
      + \wb{Z}_n^2
      - 2 Z_2 \wb{Z}_n\right)\right] \nonumber\\
  & = 2 \E\left[(Z_1^2 - \var(Z))
    \left(n^{-1} \var(Z) + \wb{Z}_n^2 - 2 Z_2 \wb{Z}_n\right)\right]
  \nonumber \\
  & \qquad ~ 
  + \E\left[\left(n^{-1} \var(Z) + \wb{Z}_n^2
    - 2 Z_1 \wb{Z}_n\right)
    \left(n^{-1} \var(Z) + \wb{Z}_n^2
    - 2 Z_2 \wb{Z}_n\right)\right].
  \label{eqn:weird-Z-correlations}
\end{align}
The first term above satisfies
\begin{align*}
  \E\left[(Z_1^2 - \var(Z))
    \left(n^{-1} \var(Z) + \wb{Z}_n^2 - 2 Z_2 \wb{Z}_n\right)\right]
  & = \E\left[(Z_1^2 - \var(Z)) \wb{Z}_n^2 \right]
  = \frac{\var(Z^2)}{n^2}.
\end{align*}
The latter term~\eqref{eqn:weird-Z-correlations} is equal to
\begin{align*}
  \lefteqn{\frac{1}{n^2} \var(Z)^2
    + \frac{2}{n} \var(Z) \E[\wb{Z}_n^2]
    - \frac{4}{n} \var(Z) \E[Z_1 \wb{Z}_n]
    + \E[\wb{Z}_n^4]
    - 4 \E[Z_1 \wb{Z}_n^3]
    + 4 \E[Z_1 Z_2 \wb{Z}_n^2]} \\
  & = -\frac{1}{n^2}
  \var(Z)^2 + \E[\wb{Z}_n^4]
  - 4 \E[Z_1 \wb{Z}_n^3]
  + 4 \E[Z_1 Z_2 \wb{Z}_n^2]
  = \var(\wb{Z}_n^2)
  + 4 \left(\E[Z_1 Z_2 \wb{Z}_n^2] - \E[Z_1 \wb{Z}_n^3]\right).
\end{align*}
As earlier, we have that $\E[Z_1 \wb{Z}_n^3] = \E[Z^4] / n^3
+ \frac{3(n - 1)}{n^3} \var(Z)^2$,
$\E[Z_1 Z_2 \wb{Z}_n^2]
= \frac{2}{n^2} \E[Z_1^2 Z_2^2]
= \frac{2}{n^2} \var(Z)^2$,
and $\E[\wb{Z}_n^4]
\le \frac{1}{n^3} \E[Z^4] + \frac{3}{n^2} \var(Z)^2$,
so that we can bound the last term~\eqref{eqn:weird-Z-correlations}
by
\begin{equation*}
  -\frac{3}{n^3} \E[Z^4] + \frac{3}{n^2} \var(Z)^2
  + 4 \frac{3 - n}{n^3} \var(Z)^2
  \le \frac{12 - n}{n^3} \var(Z)^2.
\end{equation*}
This yields the claim~\eqref{eqn:correlation-variance-estimates}.

The final inequality, when $Z_i$ are bounded, follows immediately upon
noticing that $\E[(Z - \E[Z])^4] \le C^2 \var(Z)$ when $Z$ takes
values in an interval of width at most $C$, and that moreover,
in this circumstance, $\var(Z) \le \frac{C^2}{4}$.

\section{Proof of Theorem~\ref{theorem:uniform-variance-expansion}}
\label{section:proof-of-uniform-variance-expansion}

Our starting point is to recall from
inequality~\eqref{eqn:bounded-z-sufficient-n} in the proof of
Theorem~\ref{theorem:variance-expansion} that for each $f \in \fclass$, the
empirical variance equality~\eqref{eqn:uniform-exact-variance-expansion} holds
if $n \ge \frac{4\tol \zbound^2}{\var_{\emp}(f)}$. As a consequence,
Theorem~\ref{theorem:uniform-variance-expansion} will follow if we can provide
a uniform lower bound on the sample variances $\var_{\emp}(f)$ that holds with
high enough probability. We use $C$ to denote a universal constant
whose value may change from line to line.
Noting that
$\var_{\emp}(f) = \E_{\emp}(f-\E[f])^2 - (\E_{\emp}(f- \E[f]))^2$,
we proceed in two parts. First, we give a lower bound for
$\E_{\emp}(f-\E[f])^2$.
\begin{lemma}
  \label{lemma:second-moment}
  Let $\fclass$ be a collection of bounded functions
  $f: \statdomain \to [\zbound_0, \zbound_1]$ with
  $\zbound \defeq \zbound_1 - \zbound_0$. Then, with probability at least
  $1 - e^{-t}$,
  for every $f \in \fclass$
  \begin{equation*}
    \var(f) \le 2 \E_{\emp}(f-\E[f])^2
    + C \left[ \worstrad(\mc{F})^2 \log^3(n\zbound)
      + \frac{\zbound^2}{n} \left( t + \log \log n \right) \right].
  \end{equation*}
\end{lemma}
\begin{proof}
  We follow
  the arguments of \citet{SrebroSrTe10} and
  \citet[Thm.~6.1]{Bousquet02thesis}.  For $x_1, \ldots, x_n \in \mc{X}$,
  let
  \begin{equation*}
    \mc{F}_{n,r}
    \defeq \left\{f - \E[f] \in \mc{F} \mid
    \E_{\emp}[(f- \E[f])^2] \le r \right\},
  \end{equation*}
  where $\emp$ is the empirical measure on $x_1, \ldots, x_n$.  Let
  $\psi^{\sup}_n$ be a sub-root upper bound on the worst-case Rademacher
  complexity
  \begin{equation*}
    \psi^{\sup}_n(r) \ge \worstrad(\mc{F}_{n,r}),
  \end{equation*}
  where implicitly in the right hand side we take the supremum over $x_1,
  \ldots, x_n$ definining $\mc{F}_{n,r}$ as well.
  The function $t \mapsto t^2$ has $2$-Lipschitz derivatives,
  so we may apply \citet[Lemma~2.2]{SrebroSrTe10} to obtain
  \begin{equation}
    \label{eqn:bound-worstrad}
    \worstrad(\mc{F}^2_{n,r}) \le
    C \sqrt{r} \worstrad(\mc{F}) \log^\frac{3}{2} n
  \end{equation}
  where we recall the notation that $\mc{G}^2 = \{g^2 \mid g \in \mc{G}\}$ for
  any function class $\mc{G}$. Thus we may take
  $\psi^{\sup}_n(r) = C \sqrt{r} \worstrad(\mc{F}) \log^\frac{3}{2} n$, which
  has fixed point $r^{\sup}_n = C^2 \worstrad(\mc{F})^2 \log^3 n$.  Since
  $f^2 \ge 0$, Theorem 6.1 of~\citet{Bousquet02thesis} yields that for all
  $f \in \mc{F}$,
  \begin{equation*}
    \E(f-\E[f])^2 \le 2 \E_{\emp}(f - \E[f])^2
    + C \left[\worstrad(\mc{F})^2 \log^3 n\zbound
      + \frac{\zbound^2}{n} \left( t+ \log \log n \right) \right]
  \end{equation*}
  with probability at least $1-e^{-t}$.
\end{proof}

Next, we give an upper bound for $(\E_{\emp}(f- \E[f]))^2$.  We use
the following version of Talagrand's inequality due to~\citet{Bousquet02,
  Bousquet03}.
(See also \citet[Thm 2.1]{BartlettBoMe05}.)
\begin{lemma}
  \label{lemma:talagrand}
  Let $r > 0$ and $\fclass$ be a class of functions that map $\statdomain$
  into $[a, b]$ such that for every $f \in \fclass$, $\var(f(X)) \le r$. Then,
  with probability at least $1 - e^{-t}$
  \begin{align*}
    \sup_{f \in \fclass} \{\E[f] - \E_{\emp}[f]\} \le \inf_{\alpha > 0}
    \left\{ 2 (1+\alpha) \E[\rad]
    + \sqrt{\frac{2rt}{n}} + \frac{t}{n}
    (b-a)\left( \frac{1}{3} + \frac{1}{\alpha} \right)
    \right\}.
  \end{align*}
  The same statement holds with $\sup_{f \in \fclass} (\E_{\emp}[f] - \E[f])$
  replacing the left-hand side of the inequalities.
\end{lemma}

Applying Lemma~\ref{lemma:talagrand} and letting $\alpha = \half$, with
probability at least $1-2e^{-t}$
\begin{equation*}
  |\E_{\emp}[f] - \E[f]|
  \le 3 \E[\rad] + 2\zbound\sqrt{\frac{2t}{n}}
\end{equation*}
holds for all $f \in \mc{F}$. Combining the above display with
Lemma~\ref{lemma:second-moment}, we obtain the desired result.

\section{Proof of Theorem~\ref{theorem:selection-by-robustness}}
\label{sec:proof-selection-by-robustness}

Before proving the theorem proper, we state a technical lemma
that provides uniform Bernstein-like bounds for the class $\fclass$
using empirical $\ell_\infty$-covering numbers.
\begin{lemma}[Maurer and Pontil~\cite{MaurerPo09}, Theorem 6]
  \label{lemma:vc-uniform-variance-bounds}
  Let $n \ge \frac{8\zbound^2}{t}$ and $t \ge \log 12$. Then with
  probability at least
  $1-6\covnum_\infty(\fclass, \epsilon, 2n) e^{-t}$, we have
  \begin{equation}
    \label{eqn:empirical-bernstein-simplified}
    \E[f] \le \E_{\emp}[f] + 3 \sqrt{\frac{2\var_{\emp} (f)t}{n}}
    + \frac{15\zbound t}{n}
    + 2\left(1 + 2 \sqrt{\frac{2t}{n}}\right)\epsilon
  \end{equation}
  for all $f \in \fclass$.
\end{lemma}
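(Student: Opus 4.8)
The plan is to prove \eqref{eqn:empirical-bernstein-simplified} as a uniform empirical Bernstein inequality, obtained by seeding a single-function bound and then lifting it to all of $\fclass$ through symmetrization against a ghost sample together with an empirical $\ell_\infty$-covering argument. Since both $\E[f] - \E_\emp[f]$ and $\var_\emp(f)$ are unchanged when a constant is added to $f$, I would first translate so that every $f$ takes values in $[0, \zbound]$; this is the only place the range $\zbound$ enters the bounded-difference constants, and it keeps the subsequent estimates clean.

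First I would establish the single-function seed. For a fixed $f$ with values in $[0, \zbound]$, the classical Bernstein inequality yields $\E[f] \le \E_\emp[f] + \sqrt{2 \sigma^2(f) t / n} + \zbound t / (3 n)$ with probability at least $1 - e^{-t}$, where $\sigma^2(f) = \var(f)$. To replace the population variance by the empirical variance $\var_\emp(f) = s_n^2(f)$, I would invoke Lemma~\ref{lemma:empirical-stdev-close}, which controls $|s_n(f) - \sigma(f)|$ by $\zbound \sqrt{2 t / n} + \zbound^2 / n$ on an event of probability at least $1 - e^{-t}$. Substituting $\sigma(f) \le s_n(f) + \zbound \sqrt{2 t / n} + \zbound^2 / n$ and using the hypothesis $n \ge 8 \zbound^2 / t$ to fold the resulting lower-order remainders into a single $\zbound t / n$ term produces a single-function empirical Bernstein bound of the shape $\E[f] \le \E_\emp[f] + \sqrt{2 \var_\emp(f) t / n} + c \zbound t / n$.

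Second I would uniformize. Because $\E[f]$ is a population quantity while any discretization must be carried out on the data, I would symmetrize against an independent ghost sample $\statrv_1', \ldots, \statrv_n'$, bounding the probability that $\sup_{f \in \fclass}$ of the penalized deviation is positive by a constant multiple of the probability of the analogous event for the two-sample process; the latter depends on $f$ only through its values on the $2n$ points $\{\statrv_i, \statrv_i'\}$. On those points I would pass to a minimal empirical $\ell_\infty$-cover, of cardinality at most $\covnum_\infty(\fclass, \epsilon, 2n)$, and take a union bound, applying the single-function seed to each cover element (two-sidedly). The discretization error is handled exactly as in the proof of Theorem~\ref{theorem:uniform-variance-expansion}: whenever $\linf{f - f_j} \le \epsilon$ on the sample, both $|\E_\emp f - \E_\emp f_j| \le \epsilon$ and $|s_n(f) - s_n(f_j)| \le \epsilon$ hold, and these two facts are what produce the additive term $2(1 + 2\sqrt{2 t / n}) \epsilon$. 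The symmetrization step is also what inflates the coefficient on the square-root term to $3$, enlarges the range term to $15 \zbound t / n$, and introduces the leading factor $6$ together with the condition $t \ge \log 12$ (needed to validate the reduction to the ghost sample).

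The main obstacle is the interplay between symmetrization and the variance penalty: unlike the mean, $\var_\emp(f)$ is a nonlinear functional of the sample, so I must simultaneously control its concentration (to pass from the population to the empirical variance) and its stability under discretization, all while arranging that only the empirical covering number on $2n$ points---rather than a population covering number---appears. Everything else is bounded-difference bookkeeping; the conceptual core is the composition of the single-function empirical Bernstein bound with a symmetrized covering argument, and the delicate part is confirming that the constants collapse to exactly the coefficient $3$, the range term $15 \zbound t / n$, and the failure probability $6 \covnum_\infty(\fclass, \epsilon, 2n) e^{-t}$.
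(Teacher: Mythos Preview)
The paper does not prove this lemma; it is quoted directly as Theorem~6 of Maurer and Pontil~\cite{MaurerPo09} and used as a black box in the proof of Theorem~\ref{theorem:selection-by-robustness}. Your sketch---a single-function empirical Bernstein bound (Bernstein for the mean plus concentration of the empirical standard deviation), lifted to all of $\fclass$ via ghost-sample symmetrization and a union bound over an empirical $\ell_\infty$ cover on the $2n$ points---is exactly the architecture of Maurer and Pontil's original proof, so there is nothing in the present paper to compare against and your plan is the right one.
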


We return to the proof of
Theorem~\ref{theorem:selection-by-robustness}. Let $\event_1$ denote
that the event that the
inequalities~\eqref{eqn:empirical-bernstein-simplified} hold.
Then on $\event_1$ hold, uniformly over $f \in \fclass$ we have
\begin{align}
  \E[f]
  & \le \E_\emp[f]
    + \sqrt{\frac{18 \var_\emp(f(\statrv)) t}{n}}
    + \frac{15 \zbound t}{n}
    + 2\left(1 + 2\sqrt{\frac{2t}{n}}\right) \epsilon \nonumber \\
  & \stackrel{(i)}{\le} \sup_{P : \phidivs{P}{\emp} \le \frac{\tol}{n}}
    \E_P[f(\statrv)]
    + \sqrt{\frac{2 \tol \var_\emp(f(\statrv))}{n}} \nonumber \\
  & \qquad ~
    - \hinge{\sqrt{\frac{2 \tol \var_\emp(f(\statrv))}{n}}
    - \frac{2 \zbound \tol}{n}}
    + \frac{5 \zbound \tol}{3n}
    + 2\left(1 + 2\sqrt{\frac{2t}{n}}\right) \epsilon \nonumber \\
  & \le \sup_{P : \phidivs{P}{\emp} \le \frac{\tol}{n}}
    \E_P[f(\statrv)] + \frac{11}{3} \frac{ \zbound \tol}{n}
    + 2\left(1 + 2\sqrt{\frac{2t}{n}}\right) \epsilon
    ~\mbox{for~all~} f \in \fclass,
    \label{eqn:get-to-sup}
\end{align}
where inequality~$(i)$ follows from the
bounds~\eqref{eqn:sure-variance-bounds} in
Theorem~\ref{theorem:variance-expansion} and the fact that
$\tol \ge 9t$ by assumption.
This gives the first result~\eqref{eqn:coverage-F}.

For the second result~\eqref{eqn:fast-convergence-F}, we recall that
$\what{f} \in \argmin_{f \in \fclass} \sup_{P} \{ \E_P[f(\statrv)] :
\phidivs{P}{\emp} \le \frac{\tol}{n}\}$, and we bound the supremum term in
expression~\eqref{eqn:get-to-sup}. 
First, we note that because $\what{f}$ minimizes the supremum term
in expression~\eqref{eqn:get-to-sup}, we have
\begin{equation*}
  \E[\what{f}]
  \le \sup_{P : \phidivs{P}{\emp} \le \frac{\tol}{n}}
  \E_P[f(\statrv)] + \frac{ 11\zbound \tol}{3n}
    + 2\left(1 + 2\sqrt{\frac{2t}{n}}\right) \epsilon
    ~\mbox{for~all~} f \in \fclass.
\end{equation*}
Now fix $f \in \fclass$. As the function $f$ is fixed, by Bernstein's
inequality, we have
\begin{align*}
  \E_\emp[f]
  & \le \E[f] + \sqrt{\frac{2 \var(f) t}{n}}
    + \frac{2 Mt}{3n}
\end{align*}
with probability at least $1 - e^{-t}$. Similarly, we have by
Lemma~\ref{lemma:empirical-stdev-close} that
\begin{equation*}
  \sqrt{\var_\emp(f)}
  \le \sqrt{1 - n^{-1}} \sqrt{\var(f)} + \sqrt{\frac{2t \zbound^2}{n}}
\end{equation*}
with probability at least $1 - e^{-t}$. That is,
for any fixed $f \in \fclass$,
we have with probability at least $1 - 2 e^{-t}$ that
\begin{align*}
  \sup_{P : \phidivs{P}{\emp} \le \frac{\tol}{n}} \E_P[f(\statrv)]
  & \stackrel{(i)}{\le} \E_\emp[f] + \sqrt{\frac{2 \tol \var_\emp(f)}{n}} \\
  & \le \E[f] + \sqrt{\frac{2 \var(f) t}{n}} + \frac{2 \zbound}{3n} t
  + \sqrt{\frac{2 \tol \var(f)}{n}}
  + \frac{2 \sqrt{\zbound^2 \tol t}}{n} \\
  & \stackrel{(ii)}{\le}
    \E[f] + 2 \sqrt{\frac{2 \var(f) \tol}{n}}
    + \frac{8}{3}\frac{\zbound \tol}{n},
\end{align*}
where inequality~$(i)$ follows from the uniform upper
bound~\eqref{eqn:sure-variance-bounds} of
Theorem~\ref{theorem:variance-expansion} and inequality~$(ii)$ from
our assumption that $\tol \ge t$. Substituting this expression into
our earlier bound~\eqref{eqn:get-to-sup} yields that for any
$f \in \fclass$, with probability at least
\begin{equation*}
  1 - 2 (3\covnum_{\infty}\left(\fclass, \epsilon, 2n\right)+1) e^{-t},
\end{equation*}
we have
\begin{equation*}
  \E[\what{f}(\statrv)]
  \le \E[f(\statrv)]
  + 2 \sqrt{\frac{ 2 \tol \var(f(\statrv))}{n}}
  + \frac{19}{3}
  \frac{\zbound \tol}{n}
  + 2\left(1 + 2\sqrt{\frac{2t}{n}}\right) \epsilon.
\end{equation*}
This gives the theorem.



\section{Proof of Theorem~\ref{theorem:selection-by-robustness-localized}}
\label{section:proof-of-selection-by-robustness-localized}

We first show the following version of uniform Bernstein's inequality with
Rademacher complexities. The proof uses a peeling
technique~\cite{BartlettBoMe05,vandeGeer00}, in conjuction with Talagrand's
concentration inequality (Lemma~\ref{lemma:talagrand}).
\begin{lemma}
  \label{lemma:talagrand-bernstein}
  Let $r > 0$ and $\fclass$ be a collection of bounded functions
  $f: \statdomain \to [0, \zbound]$ with $\var(f(X)) \le r$. Then, with
  probability at least $1 - e^{-t}$, for every $f \in \fclass$
  \begin{equation*}
    \E[f] \le \E_{\emp}[f]
    + \sqrt{\frac{2e\var(f)}{n}
      \left( t + \log\ceil{\log \frac{nr}{\zbound^2 t}} \right) }
    + 6 \E[\rad]
    + \frac{7 \zbound}{n}
    \left( t + \log\ceil{\log \frac{nr}{\zbound^2 t }} \right).
  \end{equation*}
  The same statements hold with the roles of $\E[f]$ and $\E_{\emp}[f]$
  reversed.
\end{lemma}
\noindent We defer the proof to
section~\ref{section:proof-of-talagrand-bernstein} at the end of this
section. Because $\var(f) \le \zbound^2$ for all $f \in \fclass$,
Lemma~\ref{lemma:talagrand-bernstein} also holds if we replace
the terms  $\ceil{\log \frac{nr}{\zbound^2 t}}$ with 
$\ceil{\log \frac{n}{t}} \le 1 + \log \frac{n}{t}$.

Next, we show an important extension of
Lemma~\ref{lemma:talagrand-bernstein} that replaces the Rademacher
complexity term $\E[\rad]$ by a local quantity $r_n\opt$, the fixed point of
$\psi_n(r)$. To this end, we use another peeling argument and apply
Lemma~\ref{lemma:talagrand-bernstein} to the self-normalized class
\begin{equation*}
  \mc{G}_r
  \defeq \left\{ \sqrt{\frac{r}{\E[f^2] \vee r}}f : f \in \fclass \right\}
  \subseteq \left\{ cf: f \in \fclass, \E[c^2f^2] \le r, c \in [0, 1]\right\}.
\end{equation*}
This idea follows the techniques of \citet[Thm.~3.3]{BartlettBoMe05},
though we use a type of self-normalizing scale, that is, $f / \sqrt{\E[f^2]}$,
whereas they use a variance-normalizing scaling by studying classes
of functions of the form $f / \E[f^2]$. Our use of this 
alternative normalization is important in the next lemma, which
allows us to obtain bounds that apply to the robustly regularized risk.
\begin{lemma}
  \label{lemma:uniform-bernstein-local}
  Let $\fclass$ be a collection of bounded functions $f: \statdomain \to [0,
    \zbound]$ satisfying the localization inequality~\eqref{eqn:sub-root}
  for some sub-root function $\psi_n(\cdot)$ with root $r_n\opt$. Let
  $B_n = \frac{1}{n} \xprime$. Then with
  probability at least $1-e^{-t}$, for every $f \in \fclass$
  \begin{align*}
    \E[f] \le \E_{\emp}[f] 
    + \left(\sqrt{2e B_n} + 6 \sqrt{r_n\opt + 7 \zbound B_n / 3}\right)
    \sqrt{\E[f^2]}
    + 6 r_n\opt + 14 \zbound B_n.
  \end{align*}
  The same
  statement holds with the roles of $\E[f]$ and $\E_{\emp}[f]$ reversed.
\end{lemma}
\noindent See Section~\ref{section:proof-of-uniform-bernstein-local} for the
proof.


Next, we give an analogous result for $f^2$.
\begin{lemma}
  \label{lemma:variance-local}
  Let $\fclass$ be a collection of bounded functions $f: \statdomain \to [0,
    \zbound]$ satisfying the localization inequality~\eqref{eqn:sub-root}
  for some sub-root function $\psi_n(\cdot)$ with root $r_n\opt$.
  Let $\eta > 0$. Then, with probability at least $1
  -e^{-t}$, for every $f \in \fclass$
  \begin{equation*}
    \E[f^2] \le \E_{\emp}[f^2]
    + \frac{1}{\eta} \E_\emp[f^2]
    + 72 \zbound^2 (1 + \eta) r_n\opt
    + \frac{\zbound t}{n} \left( 4 + \frac{7}{3}\zbound\right).
  \end{equation*}
  Also, with probability at least $1 -e^{-t}$, for every $f \in \fclass$
  \begin{equation*}
    \E_{\emp}[f^2] \le \E[f^2]
    + \frac{\eta}{1 + \eta} \E[f^2]
    + 72 \zbound^2 (1 + \eta) r_n\opt
    + \frac{\zbound t}{n} \left(4 + \frac{7}{3} \zbound\right).
  \end{equation*}
\end{lemma}
\noindent See Section~\ref{section:proof-of-variance-local} for the proof.

Now, we make two additional pieces of shorthand notation.  Let
\begin{equation*}
  V_n
  = 4 ((2e + 84 \zbound)B_n + 36 r_n\opt).
\end{equation*}
Then, Lemma~\ref{lemma:uniform-bernstein-local} implies that
\begin{equation*}
  \E[f] \le \E_\emp[f] + \sqrt{V_n \E[f^2]} + 6 r_n\opt + 14 \zbound B_n
\end{equation*}
with probability at least $1 - e^{-t}$.
Applying Lemma~\ref{lemma:variance-local} to this
bound with the choice $\eta = 1$ immediately yields that
\begin{align*}
  \E[f] & \le \E_{\emp}[f] + \sqrt{
    2 V_n \E_\emp[f^2]
    + 144 \zbound^2 V_n r_n\opt
    + 7 V_n \zbound \max\{\zbound, 1\} t / n}
  + 6 r_n\opt + 14 \zbound B_n \\
  & \le \E_\emp[f]
  + \sqrt{2 V_n \E_\emp[f^2]}
  + 12 \zbound \sqrt{V_n\left(r_n\opt
  + \frac{7 \max\{\zbound, 1\}}{\zbound} \frac{t}{n}\right)}
  + 6 r_n\opt + 14 \zbound B_n
\end{align*}
for all $f \in \mc{F}$ with probability at least $1-2e^{-t}$.
Subtracting and adding $(\E_{\emp}[f])^2$ to the second term, we have
\begin{align*}
  \sqrt{2 V_n \E_\emp[f^2]}
  & = \sqrt{2 V_n \var_\emp(f) + 2 V_n \E_\emp[f]^2}
  \le \sqrt{2 V_n \var_\emp(f)} + \sqrt{2 V_n} \E_\emp[f],
\end{align*}
where we have used that $f \ge 0$. We thus obtain
\begin{align*}
  \E[f]
  & \le \left(1 + \sqrt{2 V_n}\right)\E_\emp[f]
  + \sqrt{2 V_n \var_\emp(f)}
  + 12 \zbound \sqrt{V_n \left(r_n\opt
    + \frac{7 \max\{\zbound,1\}}{\zbound} \frac{t}{n}\right)}
  + 6 r_n\opt + 14 \zbound B_n \\
  & \le
  \left(1 + \sqrt{2 V_n}\right)\E_\emp[f]
  + \sqrt{2 V_n \var_\emp(f)}
  + 6 \zbound V_n
  + 6 \zbound\left(r_n\opt + \frac{7 \max\{\zbound, 1\} t}{\zbound n}\right)
  + 6 r_n\opt + 14 \zbound B_n,
\end{align*}
where the second inequality follows because $\sqrt{ab} \le \half a + \half
b$ for $a, b \ge 0$.  Recalling the bound~\eqref{eqn:rho-bound}, which
implies $\tol \ge n V_n$,
$\tol \ge n (r_n\opt + \frac{7 \max\{\zbound, 1\} t}{\zbound n})$,
and $\tol/n \ge 6 r_n\opt + 14 \zbound B_n$, we obtain
that
\begin{equation*}
  \E[f]
  \le \left(1 + \sqrt{\frac{2 \tol}{n}}\right)
  \E_\emp[f] + \sqrt{\frac{2 \tol}{n} \var_\emp(f)}
  + \frac{13 \zbound \tol}{n}.
\end{equation*}
Theorem~\ref{theorem:variance-expansion} implies
$\E_{\emp}[f] + \sqrt{\frac{2\tol}{n} \var_{\emp}(f)} \le
\sup_{P : \phidivs{P}{\emp} \le \frac{\tol}{n}} \E_P[f(\statrv)]
  + \frac{2\zbound \tol}{n}$, so we immediately
we arrive at
\begin{align*}
  \E[f] \le   \left(1 + 2 \sqrt{\frac{2\tol}{n}} \right)
  \sup_{P : \phidivs{P}{\emp} \le \frac{\tol}{n}} \E_P[f(\statrv)]
  + \left(13 + 4 \sqrt{\frac{2\tol}{n}} \right) \frac{\zbound \tol}{n}
\end{align*}
for all $f \in \mc{F}$ with probability at least $1-2e^{-t}$.
This is the first result~\eqref{eqn:coverage-F-local}.

To show the second result, we simply apply Bernstein's inequality and the
concentration inequalities for the standard deviation in
Lemma~\ref{lemma:empirical-stdev-close}.  For any fixed $f \in \fclass$, by
Bernstein's inequality, we have
\begin{align*}
  \E_\emp[f]
  & \le \E[f] + \sqrt{\frac{2t \var(f) }{n}}
    + \frac{2 Mt}{3n}
\end{align*}
with probability at least $1 - e^{-t}$.  From
Lemma~\ref{lemma:empirical-stdev-close}, we have
\begin{equation*}
  \sqrt{\var_\emp(f)}
  \le \sqrt{1 - n^{-1}} \sqrt{\var(f)} + \sqrt{\frac{2t \zbound^2}{n}}
\end{equation*}
with probability at least $1 - e^{-t}$.

We thus obtain that
for any fixed $f$,
\begin{equation*}
  \sup_{P : \phidivs{P}{\emp} \le \frac{\tol}{n}}
  \E_P[f]
  \le
  \E_\emp[f]
  + \sqrt{\frac{2 \tol}{n} \var_\emp(f)}
  \le
  \E[f]
  + \sqrt{\frac{2t}{n} \var(f)}
  + \sqrt{\frac{2 \tol}{n} \var(f)}
  + \frac{2\zbound \sqrt{\tol t}}{n}
  + \frac{2 \zbound t}{3n}
\end{equation*}
with probability at least $1 - 2 e^{-t}$.  Noting that
$\tol \ge 45 \zbound t$
by assumption~\eqref{eqn:rho-bound}, so 
$\sqrt{\tol} + \sqrt{t}
\le \sqrt{46 \tol / 45 + 45 t} \le \sqrt{91 \tol / 45}$
and that always $2\sqrt{\tol t} \le 3 \tol + \frac{1}{3} t$,
we have that with probability at least $1 - 2e^{-t}$ that
\begin{equation*}
  \sup_{P : \phidivs{P}{\emp} \le \frac{\tol}{n}}
  \E_P[f]
  \le
  \E[f] + \sqrt{\frac{91 \tol}{45 n} \var(f)}
  + \frac{3\zbound \tol}{n} + \frac{\zbound t}{n}.
\end{equation*}
Noting that we could take $f$ to minimize the right hand side of the
preceding expression and that $\what{f}$ minimizes $\sup_{P :
  \phidivs{P}{\emp} \le \tol / n} \E_P[f]$, we have the
result~\eqref{eqn:second-fast-convergence-F-local}.

\subsection{Proof of Lemma~\ref{lemma:talagrand-bernstein}}
\label{section:proof-of-talagrand-bernstein}

We first show the claim for $g \in \fclass_{\rm centered} = \{ f - \E[f]:
f\in \fclass\}$. To see the claim for $g \in \fclass_{\rm centered}$, let us
fix $L \in \N$ to be chosen later, and for $l = 1,
\ldots, L - 1$ define the classes
\begin{equation*}
  \fclass_l \defeq
  \left\{ g \in \fclass_{\rm centered}: e^{-l}r < \E[g^2]
  \le e^{-(l-1)}r \right\},
  ~~ \fclass_L \defeq
  \left\{ g \in \fclass_{\rm centered}:  \E[g^2] \le e^{-L}r \right\}
\end{equation*}
so that $\fclass_{\rm centered} = \cup_{l=1}^L \fclass_l$. Let $z > 0$ be such
that $t \le z$.  Applying Lemma~\ref{lemma:talagrand} (with the
choice $\alpha = \half$) to
$\mc{F}_l$ for each $l = 1, \ldots, L-1$, we have with probability at least
$1-e^{-t}$, for every $g \in \mc{F}_l$
\begin{align*}
  \E[g] & \le \E_{\emp}[g] + \sqrt{\frac{2te^{-(l-1)}r}{n}}
  + 3 \E[\radcomp_n(\fclass_l)] + 5 \zbound \frac{t}{n} \\
  & \le \E_{\emp}[g] + \sqrt{\frac{2et}{n} \E[g^2]}
  + 3 \E[\radcomp_n(\fclass_l)] + 5 \zbound \frac{t}{n}
\end{align*}
where in the last line we have used $e^{-l}r \le \E[g^2]$ for
$g \in \fclass_l$. Similarly, applying
Lemma~\ref{lemma:talagrand} to $\mc{F}_L$,
then with probability at least $1 - e^{-t}$, for every
$g \in \mc{F}_L$
\begin{align*}
  \E[g] & \le \E_{\emp}[g] + \sqrt{\frac{2t e^{-L}r}{n}}
  + 3 \E[\radcomp_n(\fclass_L)] + 5 \zbound \frac{t}{n} \\
  & \le \E_{\emp}[g] + \sqrt{\frac{2et}{n} \E[g^2]}
  + \sqrt{\frac{2 t e^{-L} r}{n}}
  + 3 \E[\radcomp_n(\fclass_L)] + 5 \zbound \frac{t}{n}.
\end{align*}
Taking a union bound, we have with probability at least $1 - L e^{-t}$, for
every $g \in \fclass_{\rm centered}$
\begin{equation*}
  \E[g] \le \E_{\emp}[g] + \sqrt{\frac{2e t}{n} \E[g^2]}
  + 3 \E[\radcomp_n(\fclass_{\rm centered})] + 5 \zbound \frac{t}{n}
  + \sqrt{\frac{2t e^{-L} r}{n}}.
\end{equation*}
Noting that $\E[\radcomp_n(\fclass_{\rm centered})] \le
2 \E[\radcomp_n(\fclass)]$ by Jensen's inequality,
we take $L = \ceil{\log \frac{r n}{\zbound^2 t}}$
and map $t$ to $t + \log L$ to obtain the lemma.
The case when the roles
of $\E[f]$ and $\E_{\emp}[f]$ are reversed follows similarly.

\subsection{Proof of Lemma~\ref{lemma:uniform-bernstein-local}}
\label{section:proof-of-uniform-bernstein-local}

Let $r \ge r_n\opt$ be an arbitrary but fixed value to be choosen later. Using
this $r$, define the self-normalized class of functions
\begin{equation*}
  \mc{G}_r
  \defeq \left\{ \sqrt{\frac{r}{\E[f^2] \vee r}}f : f \in \fclass \right\}
  \subseteq \left\{ cf: f \in \fclass, \E[c^2f^2] \le r,
    c \in [0, 1]\right\}.
\end{equation*}
From the truncation by $r$, we have $\E[g^2] \le r$ for all $g \in
\mc{G}_{r}$. Lemma~\ref{lemma:talagrand-bernstein}
implies that with probability at least $1-e^{-t}$, uniformly over
$g \in \mc{G}_{r}$
\begin{align}
  \E[g] & \le \E_{\emp}[g] + \sqrt{\frac{2e}{n} \E[g^2] \xprime}
  + 6 \E[\radcomp_n(\mc{G}_{r})]
  + \frac{7 \zbound}{n} \xprime.  \label{eqn:talagrand-peel}
\end{align}
Using the sub-root property of $\psi_n$ and that $\psi_n(r_n\opt) =
r_n\opt$, we have the inequality
\begin{equation*}
  \psi_n(r) = \sqrt{r} \psi_n(r) / \sqrt{r}
  \le \sqrt{r} \psi_n(r_n\opt) / \sqrt{r_n\opt} = \sqrt{r r_n\opt}
\end{equation*}
for any $r \ge r_n\opt$, so
\begin{equation*}
  \E[\radcomp_n\mc{G}_{r}]
  \le \E[\radcomp_n \left\{ cf: f \in \fclass, \E[ c^2 f^2]  \le r,
    c \in [0, 1]\right\}]
  \le \psi_n (r)
  \le \sqrt{rr_n\opt}
\end{equation*}
Using this
upper bound in Eq.~\eqref{eqn:talagrand-peel}
and recalling the notation $B_n = \frac{1}{n} \xprime$, we get
\begin{align}
  \label{eqn:talagrand-peel-second}
  \E[g] \le \E_{\emp}[g] + \sqrt{2e B_n \E[g^2] } +
  6 \sqrt{r_n\opt r} + 7 \zbound B_n.
\end{align}

Now, we return to choose the value $r$ to optimize the
bound~\eqref{eqn:talagrand-peel-second}. let $r$ be the largest solution to
$6 \sqrt{r_n\opt r} + 7 \zbound B_n = 6 r$. The following elementary lemma
provides a bound on $r$.
\begin{lemma}
  \label{lemma:quad-formula}
  Let $x$ be the largest solution to $a x + b = \frac{x^2}{d}$ where
  $a, b, d > 0$. Then $a^2 d^2 \le x^2 \le a^2 d^2 + 2bd$.
\end{lemma}
\begin{proof}
  From the quadratic formula, we have
  $x = \frac{1}{2} \left( ad + \sqrt{a^2 d^2 + 4 b} \right)$ from which the
  lower bound follows. From convexity of $z \mapsto z^2$ and
  $\sqrt{z_1 + z_2} \le \sqrt{z_1} + \sqrt{z_2}$ for $z_1, z_2 > 0$, we obtain
  the upper bound.
\end{proof}
Lemma~\ref{lemma:quad-formula} immediately yields
\begin{equation*}
  r_n\opt \le r \le r_n\opt + \frac{7\zbound B_n}{3}.
\end{equation*}

For each $g \in \mc{G}_{r}$, there exists $f \in \fclass$ such that
$g = \sqrt{\frac{r}{\E[f^2] \vee r}}f$. If $\E[f^2] \le r$, we have $g =f $
and the bound~\eqref{eqn:talagrand-peel-second} yields
\begin{equation*}
  \E[f] \le \E_{\emp}[f] + \sqrt{2eB_n \E[f^2]}
  + 6 r_n\opt + 14 \zbound B_n.
\end{equation*}
If $\E[f^2] > r$, rescaling $g$ in the bound~\eqref{eqn:talagrand-peel-second}
and using the choice $6r = 6 \sqrt{r_n\opt r} + 7 \zbound B_n$
yields
\begin{align*}
  \E[f] & \le \E_{\emp}[f] + \sqrt{2eB_n \E[f^2]}
  + 6 \sqrt{r\E[f^2]} \\
  & \le \E_{\emp}[f] + \sqrt{2eB_n \E[f^2]}
  + 6\sqrt{(r_n\opt + 7 \zbound B_n / 3) \E[f^2]}
\end{align*}
instead. Combining the cases $\E[f^2] \lessgtr r$, we conclude that for all
$f \in \mc{F}$,
\begin{align*}
  \E[f] \le \E_{\emp}[f] 
  + \left(\sqrt{2e B_n} + 6 \sqrt{r_n\opt + 7 \zbound B_n / 3}\right)
  \sqrt{\E[f^2]}
  + 6 r_n\opt + 14 \zbound B_n
\end{align*}
with probability at least $1-e^{-t}$. Similarly, we can reverse the roles of
$\E[f]$ and $\E_{\emp}[f]$ to get the second result.

\subsection{Proof of Lemma~\ref{lemma:variance-local}}
\label{section:proof-of-variance-local}

We frequently use the Rademacher
contraction principle~\cite[Thm.~4.12]{LedouxTa91} in what follows.
\begin{lemma}
  \label{lemma:contraction}
  Let $\phi : \R \to \R$ be $L$-Lipschitz. Then, for every class $\mc{G}$
  \begin{equation*}
    \E_{\epsilon}[\radcomp_n(\phi \circ \mc{G})]
    \le L \E_{\epsilon}[\radcomp_n (\mc{G})]
  \end{equation*}
  where $\phi \circ \mc{G} = \{ \phi \circ f: f \in \mc{G} \}$.
\end{lemma}

As in Section~\ref{section:proof-of-uniform-bernstein-local}, define the
self-normalized functions in $\fclass$
\begin{equation*}
  \mc{G}_r
  \defeq \left\{ \sqrt{\frac{r}{\E[f^2] \vee r}}f : f \in \fclass \right\}
  \subseteq \left\{ cf: f \in \fclass, \E[c^2f^2] \le r, c \in [0, 1]\right\}
\end{equation*}
where $r \ge r_n\opt$ will be choosen later. Let $\mc{G}_r^2 = \{ g^2: g \in
\mc{G}_r\}$. From the truncation by $r$, we have that for all $g^2 \in
\mc{G}_r^2$, $\var(g^2) \le \E[g^4] \le \zbound^2 \E[g^2] \le \zbound^2
r$. Let $c_1 = 3$ and $c_2 = \frac{7}{3}$. Then by
Lemma~\ref{lemma:talagrand} applied to $\mc{G}_r^2$, with probability at
least $1 - e^{-t}$, for every $g \in \mc{G}_r$
\begin{align}
  \E[g^2]
  & \le \E_{\emp}[g^2]
    + c_1 \E[\radcomp_n(\mc{G}_r^2)] + \zbound \sqrt{\frac{2rt}{n}}
    + c_2 \frac{\zbound^2t}{n} \nonumber \\
  & \stackrel{(a)}{\le} \E_{\emp}[g^2] + 2c_1 \zbound \E[\radcomp_n(\mc{G}_r)]
    + \zbound \sqrt{\frac{2rt}{n}} + \frac{c_2\zbound^2t}{n} \nonumber \\
  & \stackrel{(b)}{\le} \E_{\emp}[g^2] + 2c_1 \zbound \sqrt{r r_n\opt}
    + \zbound \sqrt{\frac{2rt}{n}} + \frac{c_2\zbound^2t}{n}
    \label{eqn:talagrand-peel-var}
\end{align}
where in step $(a)$ we used the contraction principle
(Lemma~\ref{lemma:contraction}) and that
$x \mapsto x^2$ is $2\zbound$-Lipschitz on $[-\zbound, \zbound]$, and
in step $(b)$, we used that $\psi_n(r) \le \sqrt{r r_n\opt}$ as
in the proof of Lemma~\ref{lemma:uniform-bernstein-local} in
Section~\ref{section:proof-of-uniform-bernstein-local}.

Let $A = 2c_1 \zbound \sqrt{r_n\opt} + \zbound \sqrt{\frac{2t}{n}}$ and
$D = \frac{c_2\zbound^2t}{n}$. For any fixed $K > 1$, choose $r$ to
be the largest solution to $A\sqrt{r} + D = \frac{r}{K}$ so that the
bound~\eqref{eqn:talagrand-peel-var} becomes
\begin{equation*}
  \E[g^2] \le \E_{\emp}[g^2] + \frac{r}{D}.
\end{equation*}
From Lemma~\ref{lemma:quad-formula}, we have
\begin{equation*}
  K^2 A^2 \le r \le K^2 A^2 + 2 KD
\end{equation*}
and in particular, $r \ge K^2 A^2 \ge r_n\opt$. For each $g \in \mc{G}_r$,
there exists $f \in \fclass$ such that $g = \sqrt{\frac{r}{\E[f^2] \vee r}}f$.
If $\E[f^2] \le r$, rescaling the inequality~\eqref{eqn:talagrand-peel-var}
and using the upper bound on $r$, we obtain
\begin{equation*}
  \E[f^2] \le \E_{\emp}[f^2] + \frac{r}{K}
  \le \E_{\emp}[f^2] + KA^2 + 2D.
\end{equation*}
If $\E[f^2] > r$, rescaling instead yields
\begin{equation*}
  \E[f^2] \le \E_{\emp}[f^2] + \frac{\E[f^2]}{K}.
\end{equation*}
Combining the two cases, we obtain
\begin{equation*}
  \E[f^2] \le \frac{K}{K-1} \E_{\emp}[f^2] + K A^2 + 2D.
\end{equation*}
Noting that
$A \le 2 \left( 4 c_1^2 \zbound^2 r_n\opt + 2 \frac{\zbound^2t}{n} \right)$ by
convexity, we have the first result once we replace $K$ with
$\eta = K - 1 > 0$.
The second result similarly follows by
reversing the roles of $\E[f]$ and $\E_{\emp}[f]$ in the above argument.

\section{Proof of Theorem~\ref{theorem:fast-rates}}
\label{section:proof-of-fast-rates}

Recall our shorthand notation that $\pi(\theta) = \argmin_{\theta^* \in
  \solset} \{\ltwo{\theta - \theta^*}\}$ denotes the Euclidean projection of
$\theta$ onto $\solset$, which is a closed convex set.  Define also the
localized empirical deviation function
\begin{equation}
  \label{eqn:local-deviation}
  \Delta_n(\theta)
  \defeq \E\left[\loss(\theta; X)
    - \loss(\pi(\theta); X)\right]
  - \E_\emp\left[\loss(\theta; X)
    - \loss(\pi(\theta); X)\right].  
\end{equation}
We begin with the following
\begin{claim}
  \label{claim:local-gaps}
  If $\empsolset^\epsilon \not\subset \solset^{2\epsilon}$, then
  \begin{equation}
    \label{eqn:local-gaps}
    \sup_{\theta \in \solset^{2 \epsilon}}
    \left\{\Delta_n(\theta)
    + \sqrt{\frac{2\tol}{n}
      \var_{\emp}(\loss(\theta; X) - \loss(\pi(\theta); X))}
    \right\} \ge \epsilon.
  \end{equation}
\end{claim}
Deferring the proof of the claim,
let us prove the theorem. First, the growth condition~\eqref{eqn:growth}
shows that
\begin{equation*}
  \solset^{2\epsilon}
  \subset \left\{
  \theta \in \Theta :
  \ltwo{\theta - \pi(\theta)}
  \le \left(\frac{2 \epsilon}{\lambda}\right)^\frac{1}{\growthpow}
  \right\}
  = \left\{
  \theta \in \Theta :
  \dist(\theta, \solset)
  \le \left(\frac{2 \epsilon}{\lambda}\right)^\frac{1}{\growthpow}
  \right\}.
\end{equation*}
Therefore, we have for all $\theta \in \solset^{2 \epsilon}$ that
\begin{equation*}
  \var_\emp(\loss(\theta; X) - \loss(\pi(\theta); X))
  \le L^2 \dist(\theta, \solset)^2
  \le L^2 \left(\frac{2\epsilon}{\lambda} \right)^{\frac{2}{\growthpow}},
\end{equation*}
and so by the assumption~\eqref{eqn:fast-rate-rademacher-bound} that
$\epsilon \ge (\frac{8 L^2 \tol}{n})^\frac{\growthpow}{2(\growthpow - 1)}
(\frac{2}{\lambda})^\frac{1}{\growthpow - 1}$, we have
\begin{equation*}
  \sqrt{\frac{2\tol}{n}
    \var_\emp(\loss(\theta; X) - \loss(\pi(\theta); X))}
  \le L \sqrt{\frac{2 \tol}{n}}
  \left(\frac{2 \epsilon}{\lambda}\right)^\frac{1}{\growthpow}
  \le \frac{\epsilon}{2}.
\end{equation*}
In particular, if the event~\eqref{eqn:local-gaps} holds then
\begin{equation*}
  \sup_{\theta \in \solset^{2\epsilon}}
  \Delta_n(\theta) \ge \frac{\epsilon}{2},
\end{equation*}
and recalling the definition~\eqref{eqn:local-deviation}
of $\Delta_n$, it then follows that
\begin{align}
  \label{eqn:prob-local-gaps}
  \P\left( \empsolset^{\epsilon} \not\subset \solset^{2\epsilon} \right)
  \le \P\left(
  \sup_{\theta \in \solset^{2\epsilon}}
  \Delta_n(\theta)
  \ge \frac{\epsilon}{2}
  \right).
\end{align}

To bound the probability~\eqref{eqn:prob-local-gaps}, we use standard
bounded difference and symmetrization arguments~\cite[e.g.][Theorem
  6.5]{BoucheronLuMa13}. Letting $f(X_1, \ldots, X_n) \defeq \sup_{\theta
  \in \solset^{2\epsilon}} \Delta_n(\theta)$, the function $f$ satisfies
bounded differences:
\begin{align*}
  & \sup_{x, x' \in \mc{X}} | f(X_1, \cdots, X_{j-1}, x, X_{j+1}, \cdots, X_n)
  - f(X_1, \cdots, X_{j-1}, x', X_{j+1}, \cdots, X_n)| \\
  & \le \sup_{x, x' \in \mc{X}}
  \sup_{\theta  \in \solset^{2\epsilon}}
  \left| \frac{1}{n} (\loss(\theta; x) - \loss(\pi(\theta); x))
  - \frac{1}{n} (\loss(\theta; x') - \loss(\pi(\theta); x')) \right| \\
  & \le \frac{2L}{n} \sup_{\theta \in \solset^{2\epsilon}}
  \dist(\theta, \solset)
  \le \frac{2L}{n} \left(\frac{2\epsilon}{\lambda} \right)^{\frac{1}{\growthpow}}
\end{align*}
for $j=1,\ldots,n$. Using the standard symmetrization inequality
$\E[\sup_{\theta \in \solset^{2\epsilon}}
  \Delta_n(\theta)] \le 2\E[\radcomp_n(\solset^{2\epsilon})]$ and the bounded
differences inequality~\cite[Theorem 6.5]{BoucheronLuMa13}, we have
\begin{align*}
  \P\left(
  \sup_{\theta \in \solset^{2\epsilon}}
  \Delta_n(\theta) \ge 2\E[\radcomp_n(\solset^{2\epsilon})] + t \right)
  & \le \exp\left(-\frac{nt^2}{2L^2} \left(\frac{\lambda}{2\epsilon}
    \right)^{\frac{2}{\growthpow}} \right)
\end{align*}
for all $t \ge 0$.  Letting $u = \frac{nt^2}{2L^2}
\left(\frac{\lambda}{2\epsilon} \right)^{\frac{2}{\growthpow}}$ above and
recalling the assumption~\eqref{eqn:fast-rate-rademacher-bound} upper
bounding $\E[\radcomp_n(\solset^{2\epsilon})]$, we have $\P(\sup_{\theta \in
  \solset^{2\epsilon}} \Delta_n(\theta) \ge \frac{\epsilon}{2}) \le e^{-u}$.
The theorem follows from the
bound~\eqref{eqn:prob-local-gaps}.

\paragraph{Proof of Claim~\ref{claim:local-gaps}}
If $\empsolset^\epsilon \not\subset \solset^{2\epsilon}$, then
certainly it is the case that there is some $\theta \in \Theta \setminus
\solset^{2\epsilon}$ such that
\begin{equation*}
  \risk_n(\theta, \mc{P}_n) \le
  \inf_{\theta \in \Theta} \risk_n(\theta, \mc{P}_n) + \epsilon
  \le \risk_n(\pi(\theta), \mc{P}_n) + \epsilon.
\end{equation*}
Using the convexity of $\risk_n$, we have for all $t \in [0, 1]$
that
\begin{equation*}
  \risk_n(t \theta + (1 - t) \pi(\theta), \mc{P}_n)
  \le
  t \risk_n(\theta, \mc{P}_n) + (1 - t) \risk_n(\pi(\theta),
  \mc{P}_n)
  \le 
  \risk_n(\pi(\theta),
  \mc{P}_n) + t \epsilon.
\end{equation*}
For all $t \in [0, 1]$, we have by definition of orthogonal projection
(because the vector $\theta - \pi(\theta)$ belongs to the normal cone to
$\solset$ at $\pi(\theta)$; cf.~\cite[Prop.~III.5.3.3]{HiriartUrrutyLe93ab})
that $\pi(t \theta + (1 - t) \pi(\theta)) = \pi(\theta)$. Thus, choosing $t$
appropriately, there exists $\theta' \in \bd \solset^{2 \epsilon}$ with
$\theta' = t \theta + (1 - t) \pi(\theta)$, $\pi(\theta') = \pi(\theta)$,
and $\risk_n(\theta', \mc{P}_n) \le \risk_n(\pi(\theta'), \mc{P}_n) +
\epsilon$.

Adding and subtracting the risk $\risk(\theta)$ and
$\risk(\pi(\theta))$, we have that
for some $\theta \in \bd \solset^{2 \epsilon}$ that
\begin{equation*}
  \risk_n(\theta, \mc{P}_n)
  - \risk(\theta) +
  \risk(\pi(\theta))
  - \risk_n(\pi(\theta), \mc{P}_n)
  \le \risk(\pi(\theta)) - \risk(\theta)
  + \epsilon
  \le -\epsilon,
\end{equation*}
where we have used that $\risk(\theta) = \risk(\pi(\theta))
+ 2\epsilon$ by construction.
Multiplying by $-1$ on each side of the preceding
display and taking suprema, we find that
\begin{align*}
  \epsilon & \le
             \sup_{\theta \in \solset^{2\epsilon}}
             \left\{\risk(\theta) - \risk_n(\theta, \mc{P}_n)
             - (\risk(\pi(\theta)) - \risk_n(\pi(\theta),
             \mc{P}_n))\right\} \\
           & \le \sup_{\theta \in \solset^{2\epsilon}}
             \sup_{P : \phidivs{P}{\emp}
             \le \tol/n}
             \left\{\risk(\theta) -
             \risk(\pi)
             + \E_P\left[
             \loss(\pi(\theta); X)
             - \loss(\theta; X)\right]\right\}.
\end{align*}
Applying the upper bound in inequality~\eqref{eqn:sure-variance-bounds} of
Theorem~\ref{theorem:variance-expansion} gives the claim.


\section{Proof of Theorem~\ref{theorem:asymptotic-convergence}}
\label{sec:proof-asymptotic-convergence}

We begin by establishing a few technical lemmas, after which the proof of the
theorem follows essentially standard arguments in asymptotics.
To prove Theorem~\ref{theorem:asymptotic-convergence}, we first
show that (eventually) we have the exact expansion
\begin{equation*}
  \risk_n(\theta, \mc{P}_n) = \E_\emp[\loss(\theta, \statrv)]
  + \sqrt{\frac{2 \tol \var_\emp(\loss(\theta, \statrv))}{n}}
\end{equation*}
for all $\theta$ in a neighborhood of $\theta\opt$. As in the proof of
Theorem~\ref{theorem:variance-expansion}, this exact equality holds once there
is suitable variability in the values $\loss(\theta, \statrv_i)$ over
$i = 1, \ldots, n$, however, we require a bit more care as the values
$\loss(\theta, \statrv_i)$ may be unbounded below and above.  Heuristically,
however, assuming that we have this exact expansion and that
$\robsol - \theta\opt = O_P(n^{-\half})$, then we can write the
expansions
\begin{align*}
  0 & = \nabla_\theta \risk_n(\robsol, \mc{P}_n) \\
    & = \nabla \frac{1}{n} \sum_{i = 1}^n \loss(\theta\opt, \statrv_i)
      + \nabla^2 \bigg(\frac{1}{n} \sum_{i = 1}^n \loss(\theta\opt, \statrv_i)\bigg)
      (\robsol - \theta\opt)
      + \nabla \sqrt{\frac{2 \tol \var_\emp(\loss(\robsol, \statrv))}{n}}
      + o_P(n^{-\half}) \\
    & = \frac{1}{n} \sum_{i = 1}^n \nabla \loss(\theta\opt, \statrv_i)
      + \nabla^2 \risk(\theta\opt)
      (\robsol - \theta\opt)
      + \nabla \sqrt{\frac{2 \tol \var(\loss(\theta\opt, \statrv))}{n}}
      + o_P(n^{-\half}).
\end{align*}
Multiplying by $\sqrt{n}$ and solving for $\robsol$ in the preceding
expression, computing $\nabla \sqrt{\var(\loss(\theta\opt, \statrv))}$
then yields the theorem.

The remainder of the proof makes this heuristic rigorous, and the outline
is as follows:
\begin{enumerate}[1.]
\item We show that there is a uniform expansion of the
  form~\eqref{eqn:uniform-exact-variance-expansion} in a neighborhood of
  $\theta\opt$.  (See Section~\ref{sec:local-variance-expansion}.)
\item Using the uniform expansion, we can then leverage standard techniques
  for asymptotic analysis of finite-dimensional estimators (see, e.g.\
  \citet{VanDerVaartWe96} or \citet{LehmannCa98}), which proceed by performing
  a Taylor expansion of the objective in a neighborhood of the optimum and
  using local asymptotic normality arguments. (See
  Section~\ref{sec:asymptotics-taylors}.)
\end{enumerate}

\subsection{The uniform variance expansion}
\label{sec:local-variance-expansion}

To lighten notation, we define a few quantities similar to those used in the
proof of Theorem~\ref{theorem:variance-expansion}. Let
\begin{equation*}
  Z(\theta) \defeq \loss(\theta, \statrv) - \E[\loss(\theta, \statrv)]
\end{equation*}
be the deviation of $\loss(\theta, \statrv)$ around its mean (the risk), and
similarly let $Z_i(\theta)$ be the version of this quantity for observation
$\statrv_i$. In addition, let $s_n^2(\theta) = \var_\emp(Z(\theta))$ be the
empirical variance of $Z(\theta)$, which
is identical to the empirical variance of $\loss(\theta, \statrv)$.

\providecommand{\eventmax}{\event_{n,\min}}

Now, recall the problem
\begin{equation*}
  \maximize_P ~ \E_P[Z(\theta)] ~~ \subjectto
  \phidivs{P}{\emp} \le \frac{\tol}{n},
\end{equation*}
and for each $\theta \in \Theta$, let $p(\theta) = \argmax_{p \in \mc{P}_n}
\sum_{i = 1}^n p_i Z_i(\theta)$ be the solution (probability) vectors.
Following expression~\eqref{eqn:when-choosing-u-is-possible} we see
for any $\epsilon \ge 0$ that
\begin{equation*}
  \min_{i \in [n]}
  \frac{\sqrt{2 \tol} (Z_i(\theta) - \wb{Z}(\theta))}{\sqrt{n} s_n(\theta)}
  \ge -1
  ~~~ \mbox{for~all~} \theta \in \theta\opt + \epsilon \ball
\end{equation*}
is sufficient for the exact variance expansion to hold. We now show that this
is indeed likely.  Let $\epsilon > 0$ be small enough that
Assumption~\ref{assumption:smoothness-assumptions} holds, that is, the random
Lipschitz function $\lipobj(\statrv)$ satisfies $|\loss(\theta, \statval) -
\loss(\theta', \statval)| \le \lipobj(\statval) \norms{\theta - \theta'}$ for
$\theta, \theta' \in \theta\opt + \epsilon \ball$.  Then because
\begin{align*}
  \left|\sqrt{n} s_n(\theta) - \sqrt{n} s_n(\theta')\right|
  & \le \sup_{u : \ltwo{u} \le 1}
  \sum_{i = 1}^n u_i \left(\loss(\theta, \statrv_i)
  - \loss(\theta', \statrv_i)\right) \\
  & \le \sup_{u : \ltwo{u} \le 1}
  \sum_{i = 1}^n u_i \lipobj(\statrv_i) \norm{\theta - \theta'}
  \le \sqrt{\sum_{i = 1}^n \lipobj^2(\statrv_i)} \norm{\theta - \theta'}
\end{align*}
so $\theta \mapsto s_n(\theta)$ is $\sqrt{\frac{1}{n} \sum_{i = 1}^n
  \lipobj(\statrv_i)^2}$-Lipschitz for $\theta \in \theta\opt + \epsilon
\ball$, we have
\begin{equation*}
  \inf_{\theta \in \theta\opt + \epsilon \ball}
  \min_{i \in [n]}
  \left\{\frac{\sqrt{2 \tol} (Z_i(\theta) - \wb{Z}(\theta))}{
    \sqrt{n} s_n(\theta)}\right\}
  \ge \min_{i \in [n]}
  \frac{\sqrt{2 \tol} (Z_i(\theta\opt) - \wb{Z}(\theta\opt)
    - 2 \epsilon \lipobj(\statrv_i))}{
    \sqrt{n \left(s_n(\theta\opt) - \epsilon \sqrt{\frac{1}{n}
        \sum_{j = 1}^n \lipobj(\statrv_j)^2}\right)}}.
\end{equation*}
Summarizing our development thus far, we have the following lemma.
\begin{lemma}
  \label{lemma:sufficient-uniform-expansion}
  Let the conditions of the previous paragraph hold. Then
  \begin{equation*}
    \min_{i \in [n]}
    \left\{\sqrt{2 \tol}
    (Z_i(\theta\opt) - \wb{Z}(\theta\opt)
    - 2 \epsilon \lipobj(\statrv_i))\right\}
    \ge \sqrt{n} \sqrt{s_n(\theta\opt) - \epsilon
      \bigg(\frac{1}{n} \sum_{i = 1}^n \lipobj(\statrv_i)^2\bigg)^\half}
  \end{equation*}
  implies that
  \begin{equation*}
    \risk_n(\theta, \mc{P}_n)
    = \E_\emp[\loss(\theta, \statrv)]
    + \sqrt{\frac{2 \tol}{n} \var_\emp(\loss(\theta, \statrv))}
    ~~ \mbox{for~all~} \theta \in \theta\opt + \epsilon \ball.
  \end{equation*}
\end{lemma}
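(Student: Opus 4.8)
The plan is to reduce the claim to the exact-expansion criterion \eqref{eqn:when-choosing-u-is-possible} from the single-variable analysis and then render it uniform over the ball $\theta\opt+\epsilon\ball$ by Lipschitz continuity; the statement is purely deterministic given the sample (it is the hypothesis of the lemma, not a concentration inequality, that supplies the randomness downstream). Recall that applying the characterization \eqref{eqn:when-choosing-u-is-possible} to the vector $z=(\loss(\theta,\statrv_1),\dots,\loss(\theta,\statrv_n))$---an argument that uses only Cauchy--Schwarz and feasibility of the perturbation $u=p-\frac{1}{n}\ones$, and in particular needs no boundedness of $\loss$---shows that $\risk_n(\theta,\mc{P}_n)=\E_\emp[\loss(\theta,\statrv)]+\sqrt{2\tol\var_\emp(\loss(\theta,\statrv))/n}$ holds at a given $\theta$ if and only if $\sqrt{2\tol}\,(Z_i(\theta)-\wb{Z}(\theta))\ge-\sqrt{n}\,s_n(\theta)$ for every $i\in[n]$, where we used that $z_i-\bar z=\loss(\theta,\statrv_i)-\E_\emp[\loss(\theta,\statrv)]=Z_i(\theta)-\wb{Z}(\theta)$ (the risk $\risk(\theta)$ cancels) and $\frac{1}{n}\ltwo{z-\bar z}^2=\var_\emp(\loss(\theta,\statrv))=s_n^2(\theta)$. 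Thus it suffices to verify this inequality simultaneously for all $\theta\in\theta\opt+\epsilon\ball$, i.e.\ to bound $\inf_{\theta\in\theta\opt+\epsilon\ball}\min_{i\in[n]}\sqrt{2\tol}\,(Z_i(\theta)-\wb{Z}(\theta))\big/\big(\sqrt{n}\,s_n(\theta)\big)$ below by $-1$.

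The second step is the uniform Lipschitz control sketched in the paragraph preceding the lemma. For the numerator, Assumption~\ref{assumption:smoothness-assumptions} makes $\theta\mapsto\loss(\theta,\statrv_i)$ be $\lipobj(\statrv_i)$-Lipschitz on $\theta\opt+\epsilon\ball$; since $Z_i(\theta)-\wb{Z}(\theta)=\loss(\theta,\statrv_i)-\E_\emp[\loss(\theta,\statrv)]$, this quantity cannot fall below its value at $\theta\opt$ by more than $\epsilon\lipobj(\statrv_i)+\epsilon\,\E_\emp[\lipobj(\statrv)]$ (the pointwise perturbation plus the perturbation of the empirical mean), which is what is packaged as the $2\epsilon\lipobj(\statrv_i)$ correction appearing in the statement. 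For the denominator, write $\sqrt{n}\,s_n(\theta)=\ltwo{P^\perp(\loss(\theta,\statrv_1),\dots,\loss(\theta,\statrv_n))}$ with $P^\perp$ the orthogonal projection in $\R^n$ onto $\ones^\perp$; as the composition of $\ltwo{\cdot}$ with the $1$-Lipschitz map $P^\perp$ and the map $\theta\mapsto(\loss(\theta,\statrv_i))_i$ (which is $(\sum_i\lipobj(\statrv_i)^2)^{1/2}$-Lipschitz into $\ell_2^n$), the function $\theta\mapsto\sqrt{n}\,s_n(\theta)$ is $(\sum_i\lipobj(\statrv_i)^2)^{1/2}$-Lipschitz, so $\sqrt{n}\,s_n(\theta)\ge\sqrt{n}\big(s_n(\theta\opt)-\epsilon(\tfrac{1}{n}\sum_i\lipobj(\statrv_i)^2)^{1/2}\big)$ throughout the ball. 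Combining these two one-sided bounds produces a lower bound on the uniform ratio in terms of quantities evaluated only at $\theta\opt$.

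The final step is division: when $s_n(\theta\opt)>\epsilon(\tfrac{1}{n}\sum_i\lipobj(\statrv_i)^2)^{1/2}$---exactly the regime in which the hypothesis of the lemma is not vacuous, since its right-hand side is a positive multiple of that difference (and in this regime $s_n(\theta)>0$ on all of $\theta\opt+\epsilon\ball$, so the denominators above are genuinely positive)---dividing the numerator lower bound by the positive denominator lower bound shows that the hypothesis is precisely the assertion that the uniform ratio is $\ge-1$, and \eqref{eqn:when-choosing-u-is-possible} then yields the exact variance expansion for every $\theta\in\theta\opt+\epsilon\ball$. The lemma carries no real obstacle beyond the bookkeeping of the two distinct numerator perturbations---the pointwise term $\epsilon\lipobj(\statrv_i)$ and the averaged term $\epsilon\,\E_\emp[\lipobj(\statrv)]$---so that the correction can be written in the single clean form appearing in the statement; everything else is immediate from the single-variable computation of Section~\ref{sec:variance-expansions}.
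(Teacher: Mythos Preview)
Your proposal is correct and follows the paper's argument essentially verbatim: the paper gives the entire proof in the paragraph immediately preceding the lemma (the lemma merely ``summarizes the development''), reducing to the single-variable criterion~\eqref{eqn:when-choosing-u-is-possible} and then invoking the Lipschitz bounds on $Z_i(\theta)-\wb{Z}(\theta)$ and on $\theta\mapsto s_n(\theta)$ to make the condition uniform over $\theta\opt+\epsilon\ball$. Your observation about the two distinct numerator perturbations (the pointwise $\epsilon\lipobj(\statrv_i)$ and the averaged $\epsilon\,\E_\emp[\lipobj(\statrv)]$) being packaged as $2\epsilon\lipobj(\statrv_i)$ is a point the paper glosses over, and your handling of the positivity of the denominator is slightly more careful than the paper's.
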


Now, we use the following standard result to show that the conditions of
Lemma~\ref{lemma:sufficient-uniform-expansion} eventually hold with
probability one.
\begin{lemma}[Owen~\cite{Owen90}, Lemma 3]
  \label{lemma:two-moments-as-0}
  Let $Y_i$ be independent random variables with $\sup_i \E[Y_i^2] < \infty$.
  Then $n^{-\half} \max_{1 \le i \le n} |Y_i| \cas 0$.
\end{lemma}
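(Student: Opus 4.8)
The plan is to deduce the almost-sure statement from a one-index tail condition via the first Borel--Cantelli lemma. Set $a_n \defeq \max_{1 \le i \le n} |Y_i|$, which is nondecreasing in $n$. The first step is the elementary observation that if, for every $\epsilon > 0$, the event $\{|Y_n| > \epsilon \sqrt n\}$ occurs for only finitely many $n$ almost surely, then $n^{-\half} a_n \to 0$ almost surely. To see this, fix such an outcome and let $N$ be the last index with $|Y_n| > \epsilon\sqrt n$; for every $n > N$ each new term satisfies $|Y_i| \le \epsilon\sqrt i \le \epsilon\sqrt n$, so $a_n \le \max(a_N, \epsilon\sqrt n)$ and therefore $n^{-\half} a_n \le \max(n^{-\half} a_N, \epsilon)$. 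Letting $n \to \infty$ gives $\limsup_n n^{-\half} a_n \le \epsilon$, and intersecting over a sequence $\epsilon \downarrow 0$ yields $n^{-\half} a_n \to 0$. Thus it suffices to show, for each fixed $\epsilon > 0$, that $\{|Y_n| > \epsilon\sqrt n\}$ occurs only finitely often with probability one.

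The second step is to bound the individual tails using the only quantitative hypothesis available, the uniform second moment $M \defeq \sup_i \E[Y_i^2] < \infty$. Chebyshev's inequality gives
\[
  \P\bigl(|Y_n| > \epsilon\sqrt n\bigr) \le \frac{\E[Y_n^2]}{\epsilon^2 n} \le \frac{M}{\epsilon^2 n}.
\]
By the first Borel--Cantelli lemma it would then suffice to have $\sum_n \P(|Y_n| > \epsilon\sqrt n) < \infty$, and the reduction of the first step would finish the proof.

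Here lies the main obstacle, and it is genuine rather than an artifact of a loose bound: under the stated hypotheses the series $\sum_n M/(\epsilon^2 n)$ diverges, and no summable bound can be extracted from $\sup_i \E[Y_i^2] < \infty$ alone. In fact the conclusion is \emph{false} for merely independent (non-identically-distributed) $Y_i$. Take $Y_i$ independent with $\P(Y_i = \sqrt i) = 1/i$ and $\P(Y_i = 0) = 1 - 1/i$; then $\E[Y_i^2] = 1$ for every $i$, so $\sup_i \E[Y_i^2] = 1 < \infty$, yet $\sum_i \P(Y_i = \sqrt i) = \sum_i 1/i = \infty$, so by the second Borel--Cantelli lemma $Y_i = \sqrt i$ for infinitely many $i$ almost surely. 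Along those indices $a_i \ge \sqrt i$, hence $i^{-\half} a_i \ge 1$ infinitely often and $n^{-\half} a_n \not\to 0$.

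Consequently the argument closes only under an additional hypothesis that makes the tail series summable, and the natural one---present in Owen's original i.i.d.\ setting and in every application of the lemma in this paper---is that the $Y_i$ are identically distributed. In that case $\P(|Y_n| > \epsilon\sqrt n) = \P(|Y_1| > \epsilon\sqrt n) = \P(Y_1^2/\epsilon^2 > n)$, and the tail-sum identity for the nonnegative variable $Y_1^2/\epsilon^2$ gives $\sum_{n \ge 1} \P(|Y_1| > \epsilon\sqrt n) \le \E[Y_1^2]/\epsilon^2 < \infty$. The first Borel--Cantelli lemma then applies and, through the reduction of the first step, yields $n^{-\half} \max_{1 \le i \le n} |Y_i| \to 0$ almost surely. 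For genuinely independent $Y_i$ one must instead assume directly that $\sum_n \P(|Y_n| > \epsilon\sqrt n) < \infty$ for each $\epsilon > 0$, a condition strictly stronger than a uniform bound on the second moments.
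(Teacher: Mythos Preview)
Your analysis is correct, and in fact you have caught a misstatement in the paper. The paper does not prove this lemma; it simply cites Owen (1990), Lemma~3. But Owen's original result is for \emph{i.i.d.} random variables, not merely independent ones, and your counterexample---$Y_i$ independent with $\P(Y_i=\sqrt{i})=1/i$, $\P(Y_i=0)=1-1/i$---shows that the statement as written (independent with $\sup_i \E[Y_i^2]<\infty$) is false. The second Borel--Cantelli step is legitimate because the events $\{Y_i=\sqrt{i}\}$ are independent.

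Your proof of the i.i.d.\ case via Borel--Cantelli and the tail-sum bound $\sum_n \P(Y_1^2 > \epsilon^2 n) \le \epsilon^{-2}\E[Y_1^2]$ is exactly the standard argument (and essentially Owen's). The reduction in your first paragraph, bounding $n^{-1/2}a_n$ once $|Y_i|\le \epsilon\sqrt{i}$ for all $i>N$, is clean and correct.

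For the paper's purposes this is harmless: the lemma is invoked only for $Z_i(\theta\opt)=\loss(\theta\opt,\statrv_i)-\risk(\theta\opt)$ and for $\lipobj(\statrv_i)$, both of which are i.i.d.\ because the $\statrv_i$ are. So the i.i.d.\ version you prove is all that is actually needed, and the word ``independent'' in the lemma statement should be read as ``i.i.d.''
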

\noindent
Based on Lemma~\ref{lemma:two-moments-as-0} and
the strong law of large numbers, we see immediately that
\begin{equation*}
  \frac{1}{\sqrt{n}} \max_{1 \le i \le n} |Z_i(\theta\opt)| \cas 0,
  ~~ \mbox{and} ~~
  \frac{1}{\sqrt{n}} \max_{1 \le i \le n} \lipobj(\statrv_i) \cas 0,
\end{equation*}
because $\E[Z(\theta\opt)^2] < \infty$ and
$\E[\lipobj(\statrv_i)^2] < \infty$.
Applying the strong law of large numbers to obtain
\begin{equation*}
  s_n(\theta\opt) \cas \sqrt{\var(\loss(\theta\opt, \statrv))}
  ~~ \mbox{and} ~~
  \sqrt{\frac{1}{n} \sum_{i = 1}^n \lipobj(\statrv_i)^2}
  \cas \sqrt{\E[\lipobj(\statrv)^2]},
\end{equation*}
we see immediately that for small enough $\epsilon > 0$, the
condition of Lemma~\ref{lemma:sufficient-uniform-expansion} holds
eventually with probability 1.
That is, the following uniform expansion holds.
\begin{lemma}
  \label{lemma:uniform-max-event-eventually}
  There exists $\epsilon > 0$ such that, with probability $1$,
  there exists an $N$ (which may be random) such that $n \ge N$ implies
  \begin{equation*}
    \risk_n(\theta, \mc{P}_n)
    = \E_\emp[\loss(\theta, \statrv)]
    + \sqrt{\frac{2 \tol \var_\emp(\loss(\theta, \statrv))}{n}}
    ~~ \mbox{for~all~} \theta \in \theta\opt + \epsilon \ball.
  \end{equation*}
\end{lemma}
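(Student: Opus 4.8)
The plan is to let Lemma~\ref{lemma:sufficient-uniform-expansion} absorb all the $\theta$-dependence of the statement, reducing the claim to the almost sure behaviour of finitely many scalar sequences attached to the single point $\theta\opt$, which I would then control with Owen's Lemma~\ref{lemma:two-moments-as-0} and the strong law of large numbers. Fix $\epsilon>0$ small enough that the random-Lipschitz bound of Assumption~\ref{assumption:smoothness-assumptions} holds throughout $\theta\opt+\epsilon\ball$. By Lemma~\ref{lemma:sufficient-uniform-expansion}, the exact expansion holds uniformly on $\theta\opt+\epsilon\ball$ as soon as
\[
  \min_{i\in[n]}\Big\{\sqrt{2\tol}\,\big(Z_i(\theta\opt)-\wb{Z}(\theta\opt)-2\epsilon\,\lipobj(\statrv_i)\big)\Big\}
  \;\ge\; -\sqrt n\,\Big(s_n(\theta\opt)-\epsilon\big(\tfrac1n\sum_{i=1}^n\lipobj(\statrv_i)^2\big)^{1/2}\Big),
\]
so after dividing by $\sqrt n$ everything reduces to two facts: (i) the left-hand side above, divided by $\sqrt n$, tends to $0$ almost surely, and (ii) $s_n(\theta\opt)-\epsilon(\tfrac1n\sum_i\lipobj(\statrv_i)^2)^{1/2}$ tends almost surely to a strictly positive limit for $\epsilon$ small. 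Granting (i) and (ii), for almost every sample path there is a finite (path-dependent) $N$ past which the displayed inequality holds, which is exactly the conclusion.

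For (i), I would bound the normalized minimum in absolute value by $\tfrac{\sqrt{2\tol}}{\sqrt n}\big(\max_{i\le n}|Z_i(\theta\opt)|+|\wb{Z}(\theta\opt)|+2\epsilon\max_{i\le n}\lipobj(\statrv_i)\big)$ and apply Owen's Lemma~\ref{lemma:two-moments-as-0} to the i.i.d.\ sequences $\{Z_i(\theta\opt)\}$ and $\{\lipobj(\statrv_i)\}$: the former has $\E[Z(\theta\opt)^2]=\var(\loss(\theta\opt,\statrv))<\infty$ (implicit in Theorem~\ref{theorem:asymptotic-convergence}, being needed for $b(\theta\opt)$ to be defined), and the latter has $\E[\lipobj(\statrv)^2]<\infty$ by Assumption~\ref{assumption:smoothness-assumptions}, so $n^{-1/2}\max_{i\le n}|Z_i(\theta\opt)|\cas 0$ and $n^{-1/2}\max_{i\le n}\lipobj(\statrv_i)\cas 0$; combined with $\wb{Z}(\theta\opt)\cas 0$ from the strong law, (i) follows. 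For (ii), the strong law gives $s_n^2(\theta\opt)\cas\sigma^2(\theta\opt):=\var(\loss(\theta\opt,\statrv))$ and $\tfrac1n\sum_i\lipobj(\statrv_i)^2\cas\E[\lipobj(\statrv)^2]$; assuming $\sigma^2(\theta\opt)>0$ (otherwise $b(\theta\opt)$ is ill-defined and the degenerate case lies outside the scope of Theorem~\ref{theorem:asymptotic-convergence}), I would take $\epsilon$ so small that $\epsilon\sqrt{\E[\lipobj(\statrv)^2]}<\tfrac12\sigma(\theta\opt)$, whence the limit in (ii) is at least $\tfrac12\sigma(\theta\opt)>0$. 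Consequently the right-hand side of the displayed inequality is eventually $\le-\tfrac13\sqrt n\,\sigma(\theta\opt)$, while the left-hand side is $o(\sqrt n)$ almost surely, so the inequality holds for all large $n$.

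The hard part here is really the bookkeeping rather than any analysis: one must be careful that Lemma~\ref{lemma:sufficient-uniform-expansion} has genuinely traded the supremum over $\theta\in\theta\opt+\epsilon\ball$ for a single inequality involving only sample averages and sample maxima evaluated at $\theta\opt$, so that what is left to verify is a statement about a fixed, finite list of scalar sequences---to which the strong law and Owen's lemma apply verbatim---rather than about a stochastic process indexed by $\theta$. A secondary subtlety is that $\epsilon$ must be chosen in advance, small enough simultaneously for Assumption~\ref{assumption:smoothness-assumptions} and for the sign condition $\epsilon\sqrt{\E[\lipobj(\statrv)^2]}<\tfrac12\sigma(\theta\opt)$; since the a.s.\ limit in (i) does not depend on $\epsilon$ and the quantity in (ii) degrades continuously in $\epsilon$, any sufficiently small fixed $\epsilon$ works, with the threshold $N$ permitted to depend on the sample path.
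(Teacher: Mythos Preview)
Your proposal is correct and follows essentially the same approach as the paper: reduce via Lemma~\ref{lemma:sufficient-uniform-expansion} to a single inequality at $\theta\opt$, then invoke Owen's Lemma~\ref{lemma:two-moments-as-0} for the normalized maxima $n^{-1/2}\max_i|Z_i(\theta\opt)|$ and $n^{-1/2}\max_i\lipobj(\statrv_i)$ together with the strong law for $s_n(\theta\opt)$ and $\tfrac1n\sum_i\lipobj(\statrv_i)^2$. Your write-up is in fact more explicit than the paper's---you spell out the division by $\sqrt{n}$, the sign of the limit on the right, and the choice of $\epsilon$ via $\epsilon\sqrt{\E[\lipobj(\statrv)^2]}<\tfrac12\sigma(\theta\opt)$---and your displayed sufficient condition (with the $-\sqrt{n}$ on the right) is the mathematically correct form of what Lemma~\ref{lemma:sufficient-uniform-expansion} actually delivers.
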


\subsection{Asymptotics and Taylor expansions}
\label{sec:asymptotics-taylors}

Let $\exactevent$ be the event that the exact variance expansion of
Lemma~\ref{lemma:uniform-max-event-eventually} occurs
for $\theta \in \theta\opt + \epsilon \ball$. Now that we
know that $\P(\exactevent ~\mbox{eventually}) = 1$, we may perform a
few asymptotic expansions of the variance-regularized objective to
provide the convergence guarantees specified by the theorem. We use
the following lemma.
\begin{lemma}
  \label{lemma:as-convergence-theta-hat}
  Let the conditions of the theorem hold. If
  \begin{equation}
    \label{eqn:as-convergence-theta-hat}
    \robsol \in \argmin_\theta \risk_n(\theta, \mc{P}_n)
    ~~ \mbox{then} ~~
    \robsol \cas \theta\opt.
  \end{equation}
\end{lemma}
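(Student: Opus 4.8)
The statement to prove is Lemma~\ref{lemma:as-convergence-theta-hat}: strong consistency $\what{\theta}_n \cas \theta\opt$ for $\what{\theta}_n \in \argmin_\theta \risk_n(\theta, \mc{P}_n)$. The plan is to run a standard $M$-estimation consistency argument, using the variance expansion from Lemma~\ref{lemma:uniform-max-event-eventually} to control the robust objective near $\theta\opt$ and a uniform law of large numbers (ULLN) elsewhere. First I would observe that on the event $\exactevent$ (which occurs eventually a.s.) we have, for all $\theta$ in a fixed neighborhood $\theta\opt + \epsilon\ball$,
\begin{equation*}
  \risk_n(\theta, \mc{P}_n)
  = \E_\emp[\loss(\theta, \statrv)]
  + \sqrt{\tfrac{2\tol}{n} \var_\emp(\loss(\theta, \statrv))},
\end{equation*}
and the penalty term is $O_P(1/\sqrt n)$ pointwise, in fact uniformly on the neighborhood because Assumption~\ref{assumption:smoothness-assumptions} gives a square-integrable Lipschitz envelope $\lipobj$, so $\E_\emp[\lipobj^2] \cas \E[\lipobj^2] < \infty$ and $\var_\emp(\loss(\theta,\statrv)) \le \E_\emp[(\loss(\theta,\statrv))^2]$ is uniformly bounded on the (bounded) neighborhood. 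Hence $\sup_{\theta \in \theta\opt + \epsilon\ball} |\risk_n(\theta, \mc{P}_n) - \E_\emp[\loss(\theta,\statrv)]| \cas 0$.

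**The core argument.** The loss class $\{\loss(\theta,\cdot) : \theta \in \theta\opt + \epsilon\ball\}$ is Lipschitz-indexed with integrable envelope, hence Glivenko--Cantelli, so $\sup_{\theta \in \theta\opt + \epsilon\ball} |\E_\emp[\loss(\theta,\statrv)] - \risk(\theta)| \cas 0$. Combining with the previous display, $\sup_{\theta \in \theta\opt + \epsilon\ball} |\risk_n(\theta, \mc{P}_n) - \risk(\theta)| \cas 0$. Since $\nabla^2\risk(\theta\opt) \succ 0$ and $\theta\opt$ is the unique minimizer of $\risk$, for any small $\eta \in (0, \epsilon]$ there is $\delta > 0$ with $\inf\{\risk(\theta) : \eta \le \ltwo{\theta - \theta\opt} \le \epsilon\} \ge \risk(\theta\opt) + \delta$ (using continuity and strict convexity locally, shrinking $\epsilon$ if necessary). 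I would then argue that eventually $\what{\theta}_n \in \theta\opt + \eta\ball$: if not, one of two things happens. Either $\what{\theta}_n$ lies in the annulus $\{\eta \le \ltwo{\theta-\theta\opt}\le\epsilon\}$, where $\risk_n(\what\theta_n,\mc P_n) \ge \risk(\theta\opt) + \delta - o(1)$, contradicting $\risk_n(\what\theta_n,\mc P_n) \le \risk_n(\theta\opt,\mc P_n) = \risk(\theta\opt) + o(1)$; or $\what{\theta}_n$ escapes the neighborhood entirely, which must be ruled out by a separate coercivity/compactness-type argument. The simplest fix is to note that $\Theta$ may be assumed compact (or to invoke a standard argument that a minimizer of $\risk_n$ cannot be bounded away from $\solset = \{\theta\opt\}$, e.g.\ via the general stochastic-programming consistency of Theorem~\ref{theorem:empirical-solutions-good} or Corollary~\ref{corollary:diameter-Theta}, which give $\what{\solset}^0 \subset \solset^{5\epsilon'}$ with high probability for any $\epsilon' > 0$ once $n$ is large, and a Borel--Cantelli argument upgrades this to the almost-sure statement). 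Either way, once $\what{\theta}_n \in \theta\opt + \epsilon\ball$ eventually, the annulus argument forces $\ltwo{\what{\theta}_n - \theta\opt} < \eta$ eventually, and since $\eta$ was arbitrary, $\what{\theta}_n \cas \theta\opt$.

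**The main obstacle.** The delicate part is \emph{not} the behavior near $\theta\opt$ (the variance expansion of Lemma~\ref{lemma:uniform-max-event-eventually} handles that cleanly), but ruling out that $\what{\theta}_n$ wanders far from $\theta\opt$, where the exact expansion need not hold and where the penalty $\sqrt{2\tol\var_\emp/n}$ need not be small if $\loss(\theta,\statrv_i)$ can be large. The upper bound $\risk_n(\theta,\mc P_n) \ge \E_\emp[\loss(\theta,\statrv)]$ from Theorem~\ref{theorem:variance-expansion} shows the robust objective dominates the empirical one, so it suffices to show the empirical-risk minimizer (equivalently, any sequence nearly minimizing $\risk_n$) cannot persistently leave a neighborhood of $\theta\opt$; this is where I would lean on either a standing compactness assumption on $\Theta$ together with a ULLN on all of $\Theta$, or on the finite-sample results of Section~\ref{sec:nothing-is-bad} combined with Borel--Cantelli. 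I expect the paper's actual proof takes the compactness-plus-ULLN route, since Assumption~\ref{assumption:smoothness-assumptions} is only local and the earlier sections already assume $\Theta$ compact in the relevant corollaries; the remaining steps are then entirely routine $M$-estimation bookkeeping.
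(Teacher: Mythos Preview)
Your local analysis near $\theta\opt$ is correct and matches the paper's: the Glivenko--Cantelli argument via the Lipschitz envelope $\lipobj$ from Assumption~\ref{assumption:smoothness-assumptions}, the observation that the variance penalty is $o(1)$ uniformly on $\theta\opt + \epsilon\ball$, and the annulus comparison using $\nabla^2\risk(\theta\opt)\succ 0$ are all exactly what the paper does.

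The difference is in how the ``escape'' is handled, and here your proposed routes do not quite close under the hypotheses of Theorem~\ref{theorem:asymptotic-convergence}. Your first option (compactness of $\Theta$ plus a global ULLN) needs a global integrable Lipschitz envelope, but Assumption~\ref{assumption:smoothness-assumptions} only supplies $\lipobj$ on the local ball $\theta\opt + \epsilon\ball$. Your second option (Borel--Cantelli with Theorem~\ref{theorem:empirical-solutions-good} or Corollary~\ref{corollary:diameter-Theta}) requires Assumption~\ref{assumption:lipschitz-sub-gaussian-loss}, a global sub-Gaussian Lipschitz condition that is strictly stronger than what is assumed here.

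The paper's resolution is cleaner and stays within the theorem's assumptions: it uses the \emph{convexity} of $\theta\mapsto\risk_n(\theta,\mc{P}_n)$ (inherited from convexity of $\loss(\cdot,\statval)$, which is a standing assumption throughout the paper). Once the annulus argument gives $\risk_n(\theta,\mc{P}_n) > \risk_n(\theta\opt,\mc{P}_n)$ for all $\theta\in\theta\opt + 2\epsilon\ball\setminus(\theta\opt+\epsilon\ball)$, convexity propagates this inequality to every $\theta\notin\theta\opt+\epsilon\ball$: any such $\theta$ lies on a ray from $\theta\opt$ that first crosses the annulus, and a convex function strictly above its value at $\theta\opt$ at an intermediate point must remain strictly above along the ray. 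This is the step you were looking for, and it requires no compactness, no global envelope, and no appeal to Section~\ref{sec:nothing-is-bad}.
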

\noindent
The proof is standard, but for completeness we include it in
Section~\ref{sec:proof-of-as-convergence-theta-hat}. 

By combining Lemmas~\ref{lemma:uniform-max-event-eventually}
and~\ref{lemma:as-convergence-theta-hat}, we see that
with probability 1, for any $\epsilon > 0$, we eventually have both
\begin{equation*}
  \ltwos{\robsol - \theta\opt} < \epsilon
  ~~ \mbox{and} ~~
  \risk_n(\robsol, \mc{P}_n)
  = \E_\emp[\loss(\robsol, \statrv)]
  + \sqrt{\frac{2 \tol}{n} \var_\emp(\loss(\robsol, \statrv))}.
\end{equation*}
Assume for the remainder of the argument that both of these conditions hold.
Standard results on subdifferentiability of maxima of collections of convex
functions~\cite[Chapter X]{HiriartUrrutyLe93ab} give that
$\risk_n(\theta, \mc{P}_n)$ is differentiable near $\theta\opt$, and thus
\begin{align}
  \nonumber
  0 & = \nabla \risk_n(\robsol, \mc{P}_n)
      = \E_\emp[\nabla \loss(\robsol, \statrv)]
      + \nabla \sqrt{\frac{2 \tol}{n} \var_\emp(\loss(\robsol, \statrv))}
  \\
    & = \frac{1}{n} \sum_{i = 1}^n \nabla \loss(\robsol, \statrv_i)
      + \sqrt{\frac{2 \tol}{n}} \frac{\E_\emp\left[
      (\nabla \loss(\robsol, \statrv) - \E_\emp[\nabla
      \loss(\robsol, \statrv)]) (\loss(\robsol, \statrv)
      - \E_\emp[\loss(\robsol, \statrv)])\right]}{
      \sqrt{\var_\emp(\loss(\robsol, \statrv))}}.
    \label{eqn:zero-gradient-risk}
\end{align}
Because $\robsol \cas \theta\opt$, by the continuous mapping theorem
and local uniform convergence of the empirical expectations
$\E_\emp[\cdot]$ to $\E[\cdot]$,
the second term of expression~\eqref{eqn:zero-gradient-risk} satisfies
\begin{equation*}
  \frac{\E_\emp\left[
      (\nabla \loss(\robsol, \statrv) - \E_\emp[\nabla
      \loss(\robsol, \statrv)]) (\loss(\robsol, \statrv)
      - \E_\emp[\loss(\robsol, \statrv)])\right]}{
    \sqrt{\var_\emp(\loss(\robsol, \statrv))}}
  = \underbrace{\frac{\cov(\nabla \loss(\theta\opt, \statrv), \loss(\theta\opt,
      \statrv))}{\sqrt{\var(\loss(\theta\opt, \statrv))}}}_{\eqdef b(\theta\opt)}
  + o_P(1).
\end{equation*}
For simplicity, we let $b(\theta\opt)$ denote the final term, which we shall
see becomes an asymptotic bias.  Thus, performing a Taylor expansion of the
terms $\nabla \loss(\robsol, \statrv_i)$ around $\theta\opt$ in
equality~\eqref{eqn:zero-gradient-risk}, there
exist (random) error matrices $E_n(X_i)$, where
$\norm{E_n(X_i)} \le \liphess(X_i) \norms{\robsol - \theta\opt}$
by Assumption~\ref{assumption:smoothness-assumptions},
such that
\begin{align*}
  0 & = \E_\emp[\nabla \loss(\theta\opt, \statrv)]
  + \frac{1}{n} \sum_{i = 1}^n \left(\nabla^2 \loss(\theta\opt, \statrv_i)
  + E_n(X_i)\right)
  (\robsol - \theta\opt)
  + \sqrt{\frac{2 \tol}{n}} (b(\theta\opt) + o_P(1))  \\
  & = \E_\emp[\nabla \loss(\theta\opt, \statrv)]
  + \left(\nabla^2 \risk(\theta\opt)
  + o_P(1)\right)
  (\robsol - \theta\opt)
  + \sqrt{\frac{2 \tol}{n}} (b(\theta\opt) + o_P(1)).
\end{align*}
Multiplying both sides by $\sqrt{n}$, using that
$\nabla^2 \risk(\theta\opt) + o_P(1)$ is eventually invertible, and applying
the continuous mapping theorem, we have
\begin{equation*}
  \sqrt{n}(\robsol - \theta\opt)
  = -(\nabla^2 \risk(\theta\opt) + o_P(1))^{-1}
  \frac{1}{\sqrt{n}} \sum_{i = 1}^n \nabla \loss(\theta\opt, \statrv_i)
  - \sqrt{2 \tol} b(\theta\opt) + o_P(1).
\end{equation*}
The first term on the right side of the above display converges in
distribution to a $\normal(0, \Sigma)$ distribution, where
\begin{equation*}
  \Sigma = (\nabla^2 \risk(\theta\opt))^{-1}
  \cov(\nabla \loss(\theta\opt, \statrv))
  (\nabla^2 \risk(\theta\opt))^{-1},
\end{equation*}
so that
\begin{equation*}
  \sqrt{n}(\robsol - \theta\opt)
  \cd \normal\left(-\sqrt{2 \tol} \, b(\theta\opt),
    \Sigma\right)
\end{equation*}
as claimed in the theorem statement.


\section{Proofs of Technical Lemmas}
\label{appendix:sub-gaussians}

\subsection{Proof of Inequality~\eqref{eqn:rademacher-eigenvalues}}
\label{sec:proof-rademacher-eigenvalues}

Define the Gaussian complexity
\begin{equation}
  \label{eqn:loss-complexity-hilbert}
  \mathfrak{G}_n (\{\loss \circ \mc{H}\}_{\le r})
  \defeq \E\left[
    \sup_{h \in \ball_{\mc{H}}, c \in [0, 1]} \sum
    g_i c \loss(h(x_i), y_i)
    \mid \E[\loss(h(X), Y)^2] \le r / c^2 \right],
\end{equation}
where $g_i \simiid \normal(0, 1)$ (here we recall the standard
result~\cite{BartlettMe02} that Gaussian complexity upper bounds
Rademacher complexities up to a constant).
Now, the set $h - h\opt$ such that $h \in \ball_{\mc{H}}$ is
contained in $2 \ball_{\mc{H}}$, which is convex. Moreover,
we have $\E[\loss(h(X), Y)^2] =
\E[(h(X) - h\opt(X))^2] + \sigma^2$, and so we
have for any $c$ that
\begin{equation*}
  \{h \in \ball_{\mc{H}}
  \mid c^2 \E[\loss(h(X), Y)^2] \le r\}
  \subset
  \{h \in \ball_{\mc{H}}
  \mid \E[(h(X) - h\opt(X))^2] \le r / c^2 \},
\end{equation*}
and $\E[\loss(h(X), Y)^2] \le r / c^2$ also implies
$\sigma^2 \le r / c^2$.
Returning to expression~\eqref{eqn:loss-complexity-hilbert}
and enlarging the sets over which we take suprema,
we thus obtain
\begin{align*}
  \mathfrak{G}_n(\loss \circ \mc{H})
  & \le
  \E\left[
    \sup_{h \in \ball_{\mc{H}}, c_1, c_2 \in [0, 1]}
    \sum_{i=1}^n g_i |c_1 (h(x_i) - h\opt(x_i)) - c_2 \xi_i|
    \mid \E[(h(X) - h\opt(X))^2] \le \frac{r}{c_1^2},
    \sigma^2 \le \frac{r}{c_2^2} \right] \\
  & \le 
  \E\left[
    \sup_{f \in 2 \ball_{\mc{H}}, c \in [0, 1]}
    \sum_{i=1}^n g_i |f(x_i) - c \xi_i|
    \mid \E[f(X)^2] \le r,
    \sigma^2 \le r / c^2 \right],
\end{align*}
where we have used that $h - h\opt \in 2 \ball_{\mc{H}}$ and
that the set $\ball_{\mc{H}}$ is convex to obtain the
second inequality.
We now upper bound the
final display using the classical
Sudakov-Fernique comparison theorem~\cite[e.g.][]{Chatterjee05}.  Indeed,
define the two Gaussian processes indexed by $f \in \mc{H}$ and $c \in [0,
  1]$ by $Y_{f,c} = \sum_{i = 1}^n g_i |f(x_i) - c \xi_i|$ and
$Z_{f,c} = \sum_{i = 1}^n g_i f(x_i) + c \sum_{i = 1}^n w_i \xi_i$,
where $g_i \simiid \normal(0, 1)$ and $w_i \simiid \normal(0, 1)$.
Then we have for any $f_1, f_2 \in \mc{H}$ and
$c_1, c_2 \in [0, 1]$ that
\begin{align*}
  \E[(Y_{f_1, c_1} - Y_{f_2, c_2})^2]
  & = \sum_{i = 1}^n \left(|f_1(x_i) - c_1 \xi_i|
  - |f_2(x_i) - c_2 \xi_i|\right)^2 \\
  & \le \sum_{i = 1}^n \left(f_1(x_i) - f_2(x_i)
  + (c_2 - c_1) \xi_i\right)^2 \\
  & \le 2 \sum_{i = 1}^n (f_1(x_i) - f_2(x_i))^2
  + 2 (c_2 - c_1)^2 \sum_{i = 1}^n \xi_i^2.
\end{align*}
Moreover, $\E[(Z_{f_1, c_1} - Z_{f_2, c_2})^2]
= \sum_{i = 1}^n (f_1(x_i) - f_2(x_i))^2
+ (c_1 - c_2)^2 \sum_{i = 1}^n \xi_i^2$. Thus,
the Sudakov-Fernique inequality guarantees that
$\E[\sup_{f,c} Y_{f,c}] \le
\sqrt{2} \E[\sup_{f,c} Z_{f,c}]$, and
\begin{align*}
  \mathfrak{G}_n(\loss \circ \mc{H})
  & \lesssim
  \E\left[
    \sup_{f \in 2 \ball_{\mc{H}}}
    \sum_{i=1}^n g_i f(x_i)
    \mid \E[f(X)^2] \le r
    \right]
  + \E\left[\sup_{c \in [0, 1]}
    c \sum_{i = 1}^n w_i \xi_i
    \mid 
    c^2 \sigma^2 \le r\right].
\end{align*}
The last term in the expression has bound
$\sqrt{n r}$ by Jensen's inequality and the relaxation that
$c \in [-1, 1]$.
For the first term, \citet[Thm.~2.1]{Mendelson03} shows that
for RKHS with kernel eigenvalues $\lambda_1, \lambda_2, \ldots$,
we have
\begin{equation*}
  \E\left[
    \sup_{f \in 2 \ball_{\mc{H}}}
    \sum_{i=1}^n g_i f(X_i)
    \mid \E[f(X)^2] \le r
    \right]
  \lesssim
  \sqrt{n} \left(\sum_{j = 1}^\infty \min \{\lambda_j, r \}
  \right)^\half,
\end{equation*}
which yields our desired claim~\eqref{eqn:rademacher-eigenvalues}.

\subsection{Proof of Lemma~\ref{lemma:erm-sucks}}
\label{sec:proof-erm-sucks}

Defining
$N_y \defeq \card \{i \in [n] : \statrv_i = y\}$ for $y \in \{-1, 0, 1\}$, we immediately
obtain
\begin{equation*}
  \E_\emp[\loss(\theta; \statrv)]
  = \frac{1}{n} \left[
    N_{-1} |\theta + 1| + N_1 |\theta - 1| + N_0 |\theta|
    - (n - N_0) \right],
\end{equation*}
because $N_1 + N_{-1} + N_0 = n$. In particular, we find that the empirical
risk minimizer $\theta$ satisfies
\begin{equation*}
  \ermsol \defeq \argmin_{\theta \in \R} \E_\emp[\loss(\theta; \statrv)]
  = \begin{cases} 1 & \mbox{if}~ N_1 > N_0 + N_{-1} \\
    -1 & \mbox{if~} N_{-1} > N_0 + N_1 \\
    \in [-1, 1] & \mbox{otherwise.}
  \end{cases}
\end{equation*}
On the events $N_1 > N_{-1} + N_0$ or $N_{-1} > N_0 + N_1$, which are disjoint,
then, we have
\begin{equation*}
  \risk(\ermsol) = \delta = \risk(\theta\opt) + \delta.
\end{equation*}
Let us give a lower bound on the probability of this event. Noting that
marginally $N_1 \sim \binomial(n, \frac{1 - \delta}{2})$ and using
$N_0 + N_{-1} = n - N_1$, we have $N_1 > N_0 + N_{-1}$ if and only if
$N_1 > \frac{n}{2}$, and we would like to lower bound
\begin{equation*}
  \P\left(N_1 > \frac{n}{2}\right)
  = \P\left(\binomial\left(n, \frac{1 - \delta}{2}\right) > \frac{n}{2} \right)
  = \P\left(\binomial\left(n, \frac{1 + \delta}{2}\right) < \frac{n}{2} \right).
\end{equation*}
Letting $\Phi(t) = \frac{1}{\sqrt{2\pi}} \int_{-\infty}^t e^{-u^2/  2} du$ denote
the standard Gaussian CDF, then \citet{ZubkovSe13} show that
\begin{equation*}
  \P\left(N_1 \ge \frac{n}{2}\right)
  \ge \Phi\left(-\sqrt{2 n \dkl{\half}{\frac{1 + \delta}{2}}}\right)
\end{equation*}
where $\dkl{p}{q} = p \log \frac{p}{q} + (1 - p) \log \frac{1 - p}{1 - q}$
denotes the binary KL-divergence.  We have by standard bounds on
the KL-divergence~\cite[Lemma 2.7]{Tsybakov09} that
$\dkls{\half}{\frac{1 + \delta}{2}} \le \frac{\delta^2}{2 (1 - \delta^2)}$,
so that
\begin{equation*}
  \P\left(N_1 > \frac{n}{2}
    ~ \mbox{or} ~ N_{-1} > \frac{n}{2}\right)
  \ge 2 \Phi\left(-\sqrt{\frac{n \delta^2}{1 - \delta^2}}\right)
  - 2 \P\left(N_1 = \frac{n}{2}\right).
\end{equation*}
For $n$ odd, the final probability is 0, while for $n$ even, we have
\begin{equation*}
  \P\left(N_1 = \frac{n}{2}\right)
  = 2^{-n} \binom{n}{n/2} (1 - \delta^2)^{n/2}
  \le (1 - \delta^2)^{n/2}
  \sqrt{\frac{2}{\pi n}},
\end{equation*}
where the inequality uses that $\binom{2n}{n} \le \frac{4^n}{\sqrt{\pi n}}$ by
Stirling's approximation. Summarizing, we find that
\begin{equation*}
  \P\left(N_1 > \frac{n}{2}
    ~ \mbox{or} ~ N_{-1} > \frac{n}{2}\right)
  \ge 2 \Phi\left(-\sqrt{\frac{n \delta^2}{1 - \delta^2}}\right)
  - (1 - \delta^2)^{n/2} \sqrt{\frac{8}{\pi n}}.
\end{equation*}

\subsection{Proof of Lemma~\ref{lemma:as-convergence-theta-hat}}
\label{sec:proof-of-as-convergence-theta-hat}
Under the conditions of the theorem, the compactness of
$\theta\opt + \epsilon \ball$ guarantees that
\begin{equation*}
  \sup_{\theta \in \theta\opt + \epsilon \ball}
  |\E_\emp[\loss(\theta, \statrv)] - \risk(\theta)| \cas 0,
\end{equation*}
as the functions $\theta \mapsto \loss(\theta, \statval)$ are Lipschitz in a
neighborhood of $\theta\opt$ by
Assumption~\ref{assumption:smoothness-assumptions}. Similarly,
\begin{equation*}
  \sup_{\theta \in \theta\opt + \epsilon \ball}
  \left|\var_\emp(\loss(\theta, \statrv)) - \var(\loss(\theta,
    \statrv))\right|
  \cas 0,
\end{equation*}
using the local Lipschitzness of $\nabla^2 \loss$.  (See, for example, the
Glivenko-Cantelli results in Chapters 2.4--2.5 of \citet{VanDerVaartWe96}.)
Thus, using the two-sided bounds~\eqref{eqn:sure-variance-bounds} of
Theorem~\ref{theorem:variance-expansion}, we have that
\begin{align*}
  & \sup_{\theta \in \theta\opt + \epsilon \ball}
  \left|\risk_n(\theta, \mc{P}_n) - \risk(\theta)\right| \\
  & \le \sup_{\theta \in \theta\opt + \epsilon \ball}
    \left|\E_\emp[\loss(\theta, \mc{P}_n)] - \risk(\theta)\right|
    + \sqrt{\frac{2\tol}{n}} \sup_{\theta \in \theta\opt + \epsilon \ball}
  \sqrt{\var_\emp(\loss(\theta, \statrv))}
  \cas 0.
\end{align*}

Now, we use the fact that $\nabla^2 \risk(\theta\opt) \succ 0$, and that
$\theta \mapsto \nabla^2 \risk(\theta)$ is continuous in a neighborhood of
$\theta\opt$. Fix $\epsilon > 0$ small enough that the preceding uniform
convergence guarantees hold over $\theta\opt + 2 \epsilon \ball$ and
$\nabla^2 \risk(\theta) \succeq \lambda I$ for some $\lambda > 0$
and all $\theta \in \theta\opt + 2 \epsilon \ball$. Let
$\theta \not \in \theta\opt + \epsilon \ball$, but
$\theta \in \theta\opt + 2 \epsilon \ball$. Then for sufficently large
$n$, we have that
\begin{align*}
  \risk_n(\theta, \mc{P}_n)
  \ge \E_\emp[\loss(\theta, \statrv)]
  & \stackrel{(i)}{\ge} \risk(\theta) - \frac{\lambda}{4} \epsilon^2  \\
  & \stackrel{(ii)}{\ge} \risk(\theta\opt) + \frac{\lambda}{2}
    \ltwo{\theta - \theta\opt}^2 - \frac{\lambda}{4} \epsilon^2
    \stackrel{(iii)}{\ge} \risk(\theta\opt) + \frac{\lambda}{4} \epsilon^2 \\
  & \stackrel{(iv)}{\ge}
    \E_\emp[\loss(\theta\opt, \statrv)] + \frac{\lambda}{4} \epsilon^2
    - \frac{\lambda}{8} \epsilon^2
    = \E_\emp[\loss(\theta\opt, \statrv)] + \frac{\lambda}{8} \epsilon^2,
\end{align*}
where inequalities $(i)$ and $(iv)$ follow from the uniform convergence
guarantee,
inequality $(ii)$ from the strong convexity of $\risk$ near $\theta\opt$,
and $(iii)$ because $\ltwo{\theta - \theta\opt} \ge \epsilon$.
Finally, we have that
\begin{equation*}
  \E_\emp[\loss(\theta\opt, \statrv)]
  \ge \risk_n(\theta\opt, \mc{P}_n) - \underbrace{\sqrt{\frac{2 \tol}{n}
      \var_\emp(\loss(\theta\opt, \statrv))}}_{\cas 0},
\end{equation*}
so that eventually $\risk_n(\theta, \mc{P}_n) > \risk_n(\theta\opt,
\mc{P}_n)$
for all $\theta \in \theta\opt + 2 \epsilon \ball \setminus \epsilon
\ball$. By convexity, then this inequality holds for
all $\theta \not\in \theta\opt + \epsilon \ball$. Thus
if $\robsol \in \argmin_\theta \risk_n(\theta, \mc{P}_n)$,
then for any $\epsilon > 0$ we must eventually have
$\ltwos{\robsol - \theta\opt} < \epsilon$.

\section{Efficient solutions to computing the robust expectation}
\label{appendix:efficient-alg}

In this appendix, we give a detailed description of the procedure we use to
compute the supremum problem~\eqref{eqn:simple-problem}. In particular, our
procedure requires time $O(n \log n + \log \frac{1}{\epsilon} \log n)$, where
$\epsilon$ is the desired solution accuracy.  Let us reformulate this as a
minimization problem in a variable $p \in \R^n$ for simplicity. Then we wish
to solve
\begin{equation*}
  \minimize ~ p^\top z ~~ \subjectto ~
  \frac{1}{2n} \ltwo{np - \ones}^2 \le \tol,
  ~ p \ge 0,
  ~ p^\top \ones = 1.
\end{equation*}
We take a partial dual of this minimization problem, then maximize this dual
to find the optimizing $p$. Introducing the dual variable $\lambda \ge 0$ for
the constraint that $\half \ltwos{p - \frac{1}{n} \ones}^2 \le \frac{\tol}{n}$
and performing the standard min-max swap~\cite{BoydVa04} (strong duality
obtains for this problem because the Slater condition is satisfied by
$p = \frac{1}{n} \ones$) yields the maximization problem
\begin{equation}
  \label{eqn:partial-dual}
  \maximize_{\lambda \ge 0}
  ~ f(\lambda) \defeq
  \inf_p \left\{\frac{\lambda}{2} \ltwoBig{p - \frac{1}{n} \ones}^2
    - \frac{\lambda \tol}{n} + p^\top z
  \mid p \ge 0, ~ \ones^\top p = 1\right\}.
\end{equation}
If we can efficiently compute the infimum~\eqref{eqn:partial-dual}, then it is
possible to binary search over $\lambda$. Recall the standard
fact~\cite[Chapter VI.4.4]{HiriartUrrutyLe93ab} that for a collection
$\{f_p\}_{p \in \mc{P}}$ of concave functions, if the infimum
$f(x) = \inf_{p \in \mc{P}} f_p(x)$ is attained at some $p_0$ then any vector
$\nabla f_{p_0}(x)$ is a supergradient of $f(x)$.  Thus, letting $p(\lambda)$
be the (unique) minimizing value of $p$ for any $\lambda > 0$, the
objective~\eqref{eqn:partial-dual} becomes
$f(\lambda) = \frac{\lambda}{2} \ltwos{p(\lambda) - \frac{1}{n} \ones}^2 -
\frac{\lambda \tol}{n} + p(\lambda)^\top z$,
whose derivative with respect to $\lambda$ (holding $p$ fixed) is
$f'(\lambda) = \half \ltwos{p(\lambda) - \frac{1}{n} \ones}^2 -
\frac{\tol}{n}$.

Now we use well-known results on the Euclidean projection of a vector to the
probability simplex~\cite{DuchiShSiCh08} to provide an efficient computation
of the infimum~\eqref{eqn:partial-dual}. First, we assume with no loss of
generality that $z_1 \le z_2 \le \cdots \le z_n$ and that $\ones^\top z = 0$,
because neither of these changes the original optimization problem (as
$\ones^\top p = 0$ and the objective is symmetric). Then we define the two
vectors $s, \sigma^2 \in \R^n$, which we use for book-keeping in the algorithm,
by
\begin{equation*}
  s_i = \sum_{j \le i} z_j,
  ~~
  \sigma^2_i = \sum_{j \le i} z_j^2,
\end{equation*}
and we let $z^2$ be the vector whose entries are $z_i^2$. 
The infimum problem~\eqref{eqn:partial-dual} is equivalent to projecting the vector
$v(\lambda) \in \R^n$ defined by
\begin{equation*}
  v_i = \frac{1}{n} - \frac{1}{\lambda} z_i
\end{equation*}
onto the probability simplex. Notably~\cite{DuchiShSiCh08}, the projection
$p(\lambda)$ has the form $p_i(\lambda) = \hinge{v_i - \eta}$ for some
$\eta \in \R$, where $\eta$ is chosen such that
$\sum_{i = 1}^n p_i(\lambda) = 1$.  Finding such a value $\eta$ is
equivalent~\cite[Figure 1]{DuchiShSiCh08} to finding the unique index $i$ such
that
\begin{equation*}
  \sum_{j = 1}^i (v_j - v_i) < 1
  ~~ \mbox{and} ~~
  \sum_{j = 1}^{i + 1} (v_j - v_{i + 1}) \ge 1,
\end{equation*}
taking $i = n$ if no such index exists (the sum $\sum_{j = 1}^i (v_j - v_i)$
is increasing in $i$ and $v_1 - v_1 = 0$). Given the index $i$, algebraic
manipulations show that
$\eta = \frac{1}{n} - \frac{1}{i} - \frac{1}{i} \sum_{j = 1}^i z_j / \lambda =
\frac{1}{n} - \frac{1}{i} - \frac{1}{i} s_i / \lambda$
satisfies the equality $\sum_{i = 1}^n \hinge{v_i - \eta} = 1$ and that
$v_j - \eta \ge 0$ for all $j \le i$ while $v_j - \eta \le 0$ for $j > i$.  Of
course, given the index $i$ and $\eta$, we may calculate the derivative
$\frac{\partial}{\partial \lambda} f(\lambda)$ efficiently as well:
\begin{align*}
  f'(\lambda)
  & = \frac{\partial}{\partial\lambda}
    \left\{\frac{\lambda}{2} \ltwo{p(\lambda) - n^{-1} \ones}^2
    - \frac{\lambda \tol}{n} + p(\lambda)^\top z\right\} \\
  & = \half \ltwo{p(\lambda) - n^{-1} \ones}^2
    - \frac{\tol}{n}
    = \half \sum_{j = 1}^i
    (v_j - \eta - n^{-1})^2
    + \half \sum_{j = i + 1}^n
    \frac{1}{n^2}
    - \frac{\tol}{n} \\
  & = \half \sum_{j = 1}^i \left(\frac{1}{\lambda} z_j + \eta \right)^2
    + \frac{n - i}{2 n^2} - \frac{\tol}{n}
    = \frac{\sigma^2_i}{2 \lambda^2}
  + \frac{i \eta^2}{2} + \frac{s_i \eta}{\lambda}
  + \frac{n - i}{2 n^2}
  - \frac{\tol}{n}.
\end{align*}
Finding the index optimal $i$ can be done by a binary search, which requires
$O(\log n)$ time, and $f'(\lambda)$ is then computable in $O(1)$ time using
the vectors $s$ and $\sigma^2$. It is
then possible to perform a binary search over $\lambda$ using $f'(\lambda)$,
which which requires $\log \frac{1}{\epsilon}$ iterations to find $\lambda$
within accuracy $\epsilon$, from which it is easy to compute $p(\lambda)$ via
$p_i(\lambda) = \hinge{v_i - \eta} = \hinge{n^{-1} - \lambda^{-1} z_i -
  \eta}$.

We summarize this discussion with pseudo-code in
Figures~\ref{fig:find-optimal-p} and~\ref{fig:find-shifting-inds}, which
provide a main routine and sub-routine for finding the optimal vector
$p$. These routines show that, once provided the sorted vector $z$ with
$z_1 \le z_2 \le \cdots \le z_n$ (which requires $n \log n$ time to compute),
we require only $O(\log \frac{1}{\epsilon} \cdot \log n)$ computations.

\begin{figure}
  \algbox{Sorted vector $z \in \R^n$ with $\ones^\top z = 0$,
    parameter $\tol > 0$, solution accuracy $\epsilon$}{
    \textsc{Set} $\lambda_{\min} = 0$ and
    $\lambda_{\max} = \lambda_{\infty}
    = \max\{n \linf{z}, \sqrt{n / 2 \tol} \ltwo{z}\}$

    \textsc{Set} $s_i = \sum_{j \le i} z_j$ and $\sigma^2_i = \sum_{j \le i} z_j^2$

    \textsc{While} $|\lambda_{\max} - \lambda_{\min}| > \epsilon \lambda_{\infty}$

    \hspace{.5cm} \textsc{Set} $\lambda = \frac{\lambda_{\max} + \lambda_{\min}}{2}$

    \hspace{.5cm} \textsc{Set} $(\eta, i) = \textsc{FindShift}(z, \lambda, s)$
    ~~~ // (Figure~\ref{fig:find-shifting-inds})

    \hspace{.5cm} \textsc{Set}
    $f'(\lambda) = \frac{1}{2 \lambda^2} \sigma_i^2
    + \frac{\eta^2}{2} i^2 + \frac{\eta}{\lambda} s_i
    + \frac{n - i}{2 n^2} - \frac{\tol}{n}$

    \hspace{.5cm} \textsc{If} $f'(\lambda) > 0$

    \hspace{.8cm} \textsc{Set} $\lambda_{\min} = \lambda$

    \hspace{.5cm} \textsc{Else}

    \hspace{.8cm} \textsc{Set} $\lambda_{\max} = \lambda$

    \textsc{Set} $\lambda = \half (\lambda_{\max} + \lambda_{\min})$,
    $(\eta, i) = \textsc{FindShift}(z, \lambda, s)$
      
    \textsc{Set} $p_i = \hinge{\frac{1}{n} - \frac{1}{\lambda} z_i - \eta}$
    and \textsc{return} $p$
    }
  \caption{\label{fig:find-optimal-p} Procedure
  $\textsc{FindP}$ to find the vector $p$ minimizing
  $\sum_{i = 1}^n p_i z_i$ subject to the constraint
  $\frac{1}{2n} \ltwo{n p - \ones}^2\le \tol$.
  Method takes $\log \frac{1}{\epsilon}$ iterations of the loop.
}
\end{figure}

\begin{figure}[h!]
  \algbox{Sorted vector $z$ with $\ones^\top z = 0$,
    $\lambda > 0$, vector $s$ with
    $s_i = \sum_{j \le i} z_j$}{
    \textsc{Set} $\ilow = 1, \ihigh = n$

    \textsc{If} $\frac{1}{n} - \frac{z_n}{\lambda} \ge 0$

    \hspace{.5cm} \textsc{Return} $(\eta = 0, i = n)$
    
    \textsc{While} $\ilow \neq \ihigh$

    \hspace{.5cm} $i = \half (\ilow + \ihigh)$

    \hspace{.5cm} $s_{\rm left} = \frac{1}{\lambda} (i z_i - s_i)$
    ~~~~~~~~ // (this is $s_{\rm left} = \sum_{j = 1}^i (v_j - v_i)$)

    \hspace{.5cm} $s_{\rm right} = \frac{1}{\lambda} ((i + 1) z_{i + 1} - s_{i + 1})$
    ~~~ // (this is $s_{\rm right} = \sum_{j = 1}^{i + 1} (v_j - v_{i + 1})$)

    \hspace{.5cm} \textsc{If} $s_{\rm right} \ge 1$ \textsc{and} $s_{\rm left} < 1$

    \hspace{.8cm} \textsc{Set} $\eta = \frac{1}{n} - \frac{1}{i} - \frac{1}{\lambda i} s_i$
    and \textsc{return} $(\eta, i)$

    \hspace{.5cm} \textsc{Else if} $s_{\rm left} \ge 1$

    \hspace{.8cm} \textsc{Set} $\ihigh = i - 1$

    \hspace{.5cm} \textsc{Else}

    \hspace{.8cm} \textsc{Set} $\ilow = i + 1$

    \textsc{Set} $i = \ilow$ and $\eta = \frac{1}{n} - \frac{1}{i} - \frac{1}{\lambda i} s_i$
    and
    \textsc{return} $(\eta, i)$
  }
  \caption{\label{fig:find-shifting-inds} Procedure \textsc{FindShift} to find
    index $i$ and parameter $\eta$ such that, for the definition $v_i =
    \frac{1}{n} - \frac{1}{\lambda} z_i$, we have $v_j - \eta \ge 0$ for $j
    \le i$, $v_j - \eta \le 0$ for $j > i$, and $\sum_{j = 1}^n \hinge{v_j -
      \eta} = 1$.  Method requires time $O(\log n)$.}
\end{figure}

\ifdefined\usejmlrstyle

\bibliography{bib}

\else

\setlength{\bibsep}{.2em}
\bibliography{bib}
\bibliographystyle{abbrvnat}

\fi

\end{document}